\documentclass[11pt]{article} 
\usepackage{makecell}
\usepackage{fullpage}
\usepackage{graphics} 
\usepackage{epsfig}
\usepackage{amsmath} 
\usepackage{amssymb} 
\usepackage{amsthm}
\usepackage{tabularx,booktabs}
\usepackage{mathrsfs}
\usepackage{selectp}
\usepackage[normalem]{ulem}
\usepackage{authblk}
\usepackage{subcaption}
\usepackage{natbib}
\usepackage{Shorthands}
\usepackage{algorithm}
\usepackage{algpseudocode}
\usepackage{wrapfig}
\usepackage{scalerel,stackengine}
\stackMath
\newcommand\reallywidehat[1]{%
\savestack{\tmpbox}{\stretchto{%
  \scaleto{%
    \scalerel*[\widthof{\ensuremath{#1}}]{\kern-.6pt\bigwedge\kern-.6pt}%
    {\rule[-\textheight/2]{1ex}{\textheight}}
  }{\textheight}%
}{0.5ex}}%
\stackon[1pt]{#1}{\tmpbox}%
}

\usepackage{hyperref}
\hypersetup{       
    unicode=false,          
    pdftoolbar=true,       
    pdfmenubar=true,      
    pdffitwindow=false,     
    pdfstartview={FitH},   
    pdfnewwindow=true,      
    colorlinks=true,      
    linkcolor=blue,          
    citecolor=blue,       
    filecolor=magenta,      
    urlcolor=blue,           
    breaklinks=true
}

\usepackage[toc,page,header]{appendix}
\usepackage{minitoc}

\usepackage{amsmath}
\usepackage{algorithm}
\usepackage{algpseudocode}
\usepackage{mathtools}
\usepackage[dvipsnames]{xcolor}
\usepackage[dvipsnames,table]{xcolor}
\usepackage{pifont}
\usepackage{graphicx}
\usepackage{subcaption}
\makeatletter
\renewcommand\@makefntext[1]{%
  \parindent 1em
  \noindent
  \hb@xt@1.8em{\hss\@makefnmark\ }#1%
}
\makeatother
\usepackage{tikz}
\usetikzlibrary{arrows.meta,positioning,fit,calc,shapes.geometric}
\usepackage{graphicx} 

\usepackage[most]{tcolorbox}

\newtcolorbox{goalbox}{
  colback=gray!5!white,
  colframe=gray!60!black,
  boxrule=0.5pt,
  arc=2pt,
  left=5pt,right=5pt,top=3pt,bottom=3pt
}

\usepackage[toc,page,header]{appendix}
\usepackage{minitoc}

\title{\LARGE \bf
Adversarially Robust Multitask Adaptive Control
}

\author{Kasra Fallah*}
\author{Leonardo F. Toso*}
\author{James Anderson}
\affil{Department of Electrical Engineering, Columbia University}
\date{November 2025}
\begin{document}

\doparttoc 
\faketableofcontents 

\maketitle
\allowdisplaybreaks
\begin{abstract}
We study \emph{adversarially robust} multitask adaptive linear quadratic control; a setting where multiple systems collaboratively learn control policies under model uncertainty and adversarial corruption. We propose a clustered multitask approach that integrates clustering and system identification with \emph{resilient aggregation} to mitigate corrupted model updates. Our analysis characterizes how clustering accuracy, intra-cluster heterogeneity, and adversarial behavior affect the expected regret of certainty-equivalent (CE) control across LQR tasks. We establish non-asymptotic bounds demonstrating that the regret decreases inversely with the number of honest systems per cluster and that this reduction is preserved under a bounded fraction of adversarial systems within each cluster.
\end{abstract}

\footnote{\hspace{-0.6cm}*K.~Fallah and L.~F.~Toso share first-authorship. \\ Correspondence to: \texttt{\{kasra.fallah, leonardo.toso\}@columbia.edu}.}

\section{Introduction}

Adaptive control seeks to design controllers that adapt to uncertain or unknown system dynamics. Rooted in early work on self-tuning regulators for flight and aerospace applications \citep{aastrom1973self,aastrom1983theory}, it remains central to modern control. Among its formulations, the linear quadratic regulator (LQR) serves as a canonical benchmark due to its tractability and theoretical appeal. Extensive research over the last five or so years has established \emph{non-asymptotic} performance guarantees for adaptive LQR through regret analysis \citep{abbasi2011regret, dean2018regret, cohen2019learning, simchowitz2020naive, hazan2020nonstochastic, ziemann2022regret}, proving that in the single-system setting the optimal expected regret scales as $\mathcal{O}(\sqrt{dT})$, with $d = d_u^2 d_x$, where $T$ is the time horizon and $(d_x, d_u)$ denote the state and input  dimensions \citep{simchowitz2020naive}. This lower bound reveals a fundamental limitation: certainty-equivalent (CE) control is inherently data-inefficient in high dimensions, as accurate model estimation demands extensive data collection.

To circumvent this limitation, recent work has investigated \emph{multitask} system identification, where multiple systems collaboratively estimate their dynamics. When the participating systems are ``similar'', collaboration reduces the sample complexity required for accurate model estimation, with gains that scale proportionally with the number of participating systems \citep{xin2022identifying, wang2023fedsysid, toso2023learning, kecceci2025fedalign}. When the systems are homogeneous or share some model structure (e.g., a model basis), joint estimation yields improvements in sample complexity \citep{zhang2024sampleefficient}. On the other hand, when systems are only approximately similar, an additive \emph{heterogeneity} bias emerges \citep{wang2023fedsysid}. As the primary source of regret in adaptive control stems from identification, multitask identification offers a natural path to surpass the $\mathcal{O}(\sqrt{T})$ scaling limit from the single-task setting.

\begin{figure}
  \centering
  \includegraphics[width=1\textwidth]{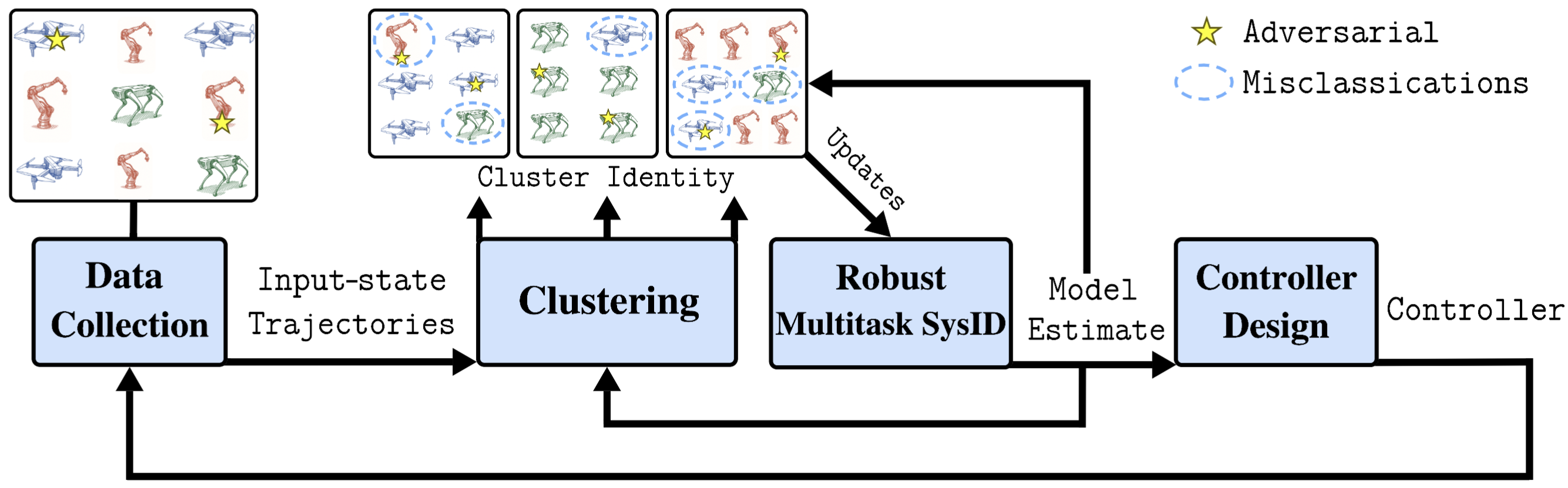}
  \caption{Workflow for adversarially robust multitask adaptive control$^\dagger$.}
  \label{fig:multitask_control_pipeline} 
\end{figure}

\footnotetext[0]{\hspace{-0.5cm}$^\dagger$~Illustrations of drones, quadruped robots, and robotic arms were created with assistance from ChatGPT (OpenAI).}

\renewcommand{\thefootnote}{\arabic{footnote}}
Recent work has extended the idea of multitask system identification \citep{wang2023fedsysid, zhang2024sampleefficient} to the control synthesis setting \citep{wang2023model, toso2024meta, wang2023fleet, lee2024regret}. \cite{lee2024regret} prove that learning a shared model basis across systems allows the expected regret to scale as $\mathcal{O}(\sqrt{T / \#\text{systems}})$, reducing the regret by the number of systems. However, this result relies on \emph{structural} homogeneity, namely, the existence of a shared model representation across all participating systems.

In practice,  multitask control systems, such as fleets of drones, autonomous vehicles, and distributed robotic platforms \citep{wang2023fleet} often exhibit diverse dynamics without a shared representation. In such settings, malfunctioning or compromised systems may transmit corrupted updates, undermining collaboration. This motivates the need for adversarially robust multitask adaptive control, where learning remains effective despite heterogeneous and adversarial systems.

This work studies clustered multitask adaptive control in the presence of \emph{heterogeneous} and \emph{adversarial} systems (see Figure~\ref{fig:multitask_control_pipeline}), when a common representation across systems \emph{may not exist}. We analyze how malicious systems can bias the collaborative learning step and design a robust approach to mitigate such behavior. Our analysis characterizes the interplay among adversarial behavior, intra-cluster heterogeneity, and clustering accuracy on the regret. We prove that the regret scales favorably with the number of systems per cluster. This benefit dominates even under a small fraction of adversarial systems per cluster. This provides the first non-asymptotic regret bounds for clustered multitask adaptive control under heterogeneous and adversarial systems (see Table \ref{tb:comp}). We now present an informal statement of our main result.

\begin{theorem}[Informal]\label{thm:informal} \hspace{-0.15cm}Consider the multitask adaptive linear quadratic control pipeline illustrated in Figure~\ref{fig:multitask_control_pipeline}. For an appropriate choice of exploration and a sufficiently large amount of data per system, let $T$ denote the time horizon, $m_j$ the number of honest systems in cluster $\mathcal{C}_j$, and $\lambda$ the resilient aggregation coefficient. Then, the expected regret of any system in $\mathcal{C}_j$ satisfies: 

\begin{align*}
\begin{array}{c}\texttt{Regret} \end{array} \lesssim
\hspace{-0.5cm} \underbrace{\sqrt{\frac{dT}{\textcolor{Green}{m_{j}}}}}_{\textcolor{Green}{\text{benefit of multitask}}} \hspace{-0.1cm} + \hspace{-0.1cm}
\underbrace{\begin{array}{c}  
\textcolor{Maroon}{\texttt{cluster error}}\end{array}
\hspace{-0.1cm} \times  T}_{\textcolor{Maroon}{\text{clustering effect}}} +  \underbrace{\textcolor{red}{\lambda} \sqrt{d T}}_{\textcolor{red}{\text{adversarial effect}}} \hspace{-0.1cm} + \underbrace{ \textcolor{red}{\texttt{heterogeneity}}\times T}_{\textcolor{red}{\text{heterogeneity effect}}},\\[-1cm]
\end{align*}
\noindent where $\begin{array}{c} \textcolor{Maroon}{\texttt{cluster error}} \end{array} \lesssim \exp(\textcolor{Green}{\texttt{-\#~data samples per system}}).$
\end{theorem}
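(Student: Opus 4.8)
The plan is to route the entire argument through a single master estimation-error bound and then convert that bound into regret via certainty-equivalence perturbation theory. Concretely, for a representative honest system $i \in \mathcal{C}_j$ I would first establish a high-probability bound of the schematic form
\[
\|\widehat{\Theta}_i - \Theta_i^\star\| \;\lesssim\; \sqrt{\tfrac{d}{m_j N}} \;+\; \lambda\sqrt{\tfrac{d}{N}} \;+\; \text{(heterogeneity)} \;+\; \text{(misclustering)},
\]
where $N$ is the number of exploration samples per system and $\Theta=(A,B)$ stacks the dynamics. The four summands are produced by four distinct mechanisms, and most of the work lies in isolating them: the first is the variance of the resiliently-aggregated least-squares estimate, which shrinks by $m_j$ because the $m_j$ honest systems in the cluster pool their (independent, persistently-excited) data and their dynamics agree up to the intra-cluster bias; the second is the irreducible bias that any $\lambda$-resilient aggregator suffers from the adversarial systems it cannot fully reject; the third is the intra-cluster model mismatch, which does not vanish with $N$; and the fourth is the contamination from systems placed in the wrong cluster.

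Given this bound I would then do three things. First, invoke the LQR self-bounding / Riccati-perturbation estimate available from the single-task CE literature cited in the excerpt: if the plant estimate has error $\varepsilon$, the CE controller is stabilizing and its per-step excess cost is $O(\varepsilon^2)$, so the exploitation phase contributes regret of order $\varepsilon^2 T$, while the exploration phase producing $N$ samples contributes regret of order $N$ from the injected isotropic excitation. Second, substitute the master bound for $\varepsilon$ and choose the exploration length to balance the dominant variance term against the exploration cost: the choice $N \asymp \sqrt{dT/m_j}$ turns $N + \tfrac{d}{m_j N}T$ into $\sqrt{dT/m_j}$, the advertised multitask benefit, and the remaining non-vanishing bias terms carry over into the heterogeneity$\times T$, $\lambda\sqrt{dT}$, and cluster-error$\times T$ contributions of the statement (matching the exact exponents of these subdominant terms requires tracking whether each bias enters the Riccati perturbation to first or second order). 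Third, control the misclustering probability: because each per-system warm-up estimate concentrates around $\Theta_i^\star$ at rate $\sqrt{d/N}$, a separation assumption between distinct cluster centers together with a sub-Gaussian tail bound yields a misclustering probability decaying like $\exp(-\Omega(N))$ — exactly the claimed cluster error $\lesssim \exp(-\#\text{samples})$ — and on the complementary, exponentially small event I bound the regret by its $O(T)$ worst case, producing the cluster-error$\times T$ term.

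The main obstacle is the coupling between clustering and resilient aggregation. A misclassified honest system behaves, from the aggregator's viewpoint, exactly like an adversary, and conversely an adversary that mimics a neighboring cluster can survive the clustering test; consequently the effective fraction of corrupted inputs seen by the aggregator is not the nominal adversarial fraction but that fraction inflated by the misclustering rate. The delicate step is therefore to show that the combined budget — true adversaries plus misclustered honest systems — still lies below the breakdown point of the aggregator, so that the $\lambda$-dependent bias remains valid; this requires feeding the exponential clustering guarantee back into the robustness condition and verifying that the ``sufficiently large data per system'' hypothesis drives the misclustering inflation below the slack in the resilience threshold. Once this decoupling is secured the four error sources are genuinely additive, and the regret decomposition in the statement follows by the substitution and horizon-balancing described above.
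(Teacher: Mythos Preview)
Your high-level strategy is sound and captures the four error sources correctly, but it differs structurally from the paper's proof in one important way. The paper does \emph{not} use explore-then-commit with a single exploration budget $N$; instead it runs Algorithm~\ref{alg: CE with online clustering} over doubling epochs $\tau_k = 2^{k-1}\tau_1$, where at epoch $k$ every system plays $u_t = \hat K_k x_t + \sigma_k g_t$ with a decaying exploration variance $\sigma_k^2 \propto 1/\sqrt{\tau_k m_j}$ (or, in the adversarial case, $\sigma_k^2 \propto \sqrt{(1+\lambda^2 d_x m_j)/(\tau_k m_j)}$). The per-epoch estimation bound (Lemmas~\ref{lemma: sysID homogeneous}--\ref{lemma: sysID adversarial}) carries $1/(\sigma_k^2 \tau_k)$ rather than $1/N$, and the regret is obtained by summing $(\tau_k-\tau_{k-1})\big(\|\widehat\Theta_{k-1}-\Theta_\star\|_F^2+\sigma_k^2\big)$ over epochs and using $\sum_k (\tau_k-\tau_{k-1})/\sqrt{\tau_{k-1}}\lesssim\sqrt{T}$. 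Your simpler one-shot balance $N\asymp\sqrt{dT/m_j}$ recovers the leading $\sqrt{dT/m_j}$ term, but note it does \emph{not} directly yield the advertised $\lambda\sqrt{dT}$ adversarial term: with that $N$ the squared adversarial bias contributes $\lambda^2 dT/N = \lambda^2\sqrt{dT m_j}$, which has the wrong $m_j$ dependence. The fix (which the paper implements through its $\lambda$-aware choice of $\sigma_k$) is to set $N\asymp\sqrt{dT(1/m_j+\lambda^2)}$, after which both terms fall out as stated.

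Your discussion of the clustering--aggregation coupling is on point and mirrors the paper's handling: the paper also treats misclassified honest systems as effectively adversarial inside the resilient step, shows the misclassification rate is $\exp(-C\sigma_k^2\tau_k)$ via a sub-Gaussian tail on the cluster-assignment test statistic, and uses the ``sufficiently large $\tau_1$'' hypothesis to drive this below the slack in the honest-majority condition so that $m_{\hat j}\approx m_j$. One technical ingredient you omit but the paper needs is the abort/safe-fallback mechanism (Line~\ref{line: uniform bound}) together with a high-probability success event $\mathcal{E}_{\mathrm{success}}$ holding with probability $1-T^{-2}$; without this the per-step cost under a bad CE controller is not a priori bounded, and the failure-event contribution $R_2^{(i)}$ cannot be controlled.
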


Theorem~\ref{thm:informal} highlights the benefits of multitask adaptive control and quantifies the effects of clustering errors, adversarial systems, and heterogeneity. The first term captures the benefit of \emph{collaboration}, reducing regret proportionally to the number of honest systems $m_j$. The second term reflects the clustering misclassification rate that decays exponentially with data size. The third term represents the effect of adversarial updates, governed by the resilient aggregation parameter $\lambda$, which scales with the ratio of adversarial to honest systems \citep{farhadkhani2022byzantine}, when this ratio is small, the collaboration benefit is preserved. The final term captures the \emph{heterogeneity} effect, which is negligible under small intra-cluster heterogeneity. We characterize the first two terms in Theorem~\ref{theorem: regret homogeneous}, the adversarial effect in Theorem~\ref{theorem: regret adversarial}, and the heterogeneity effect in Corollary~\ref{corollary: regret heterogeneous}.

\subsection{Contributions}

\noindent $\bullet$ \textbf{Heterogeneous Systems.} We study multitask adaptive LQR under heterogeneous dynamics where  global representation across system models are not assumed. We propose a clustered system identification approach that groups similar systems and performs multitask identification within each cluster. We derive non-asymptotic estimation error bounds demonstrating that sample complexity improves proportionally to the number of systems per cluster, up to an exponentially small cluster misclassification rate (Lemma \ref{lemma: sysID homogeneous} and Proposition \ref{prop: sysID heterogeneous}).\vspace{0.05cm}  

\noindent $\bullet$ \textbf{Adversarial Systems.} Our approach handles adversarial systems that may contribute with corrupted model updates. Robustness is ensured with a \emph{resilient} aggregation step that may employ any resilient aggregation rule \citep{farhadkhani2022byzantine}. Under a bounded fraction of adversarial systems, our approach preserves the  asymptotic gains as in the fully honest setting (Lemma \ref{lemma: sysID adversarial}).  \vspace{0.05cm}  

\noindent $\bullet$ \textbf{Regret Bounds.} We provide the first non-asymptotic regret bounds for adversarially robust multitask adaptive LQR. The analysis quantifies the impact of clustering misclassifications, intra-cluster heterogeneity, and adversarial behavior, demonstrating that with sufficient data, small intra-cluster heterogeneity, and bounded fraction of adversarial systems these effects are negligible, preserving the regret reduction by the number of honest systems (Theorems \ref{theorem: regret homogeneous} and \ref{theorem: regret adversarial}, and Corollary \ref{corollary: regret heterogeneous}).

\subsection{Related Work}\label{sec: related work}

Recent work in multitask system identification have explored collaborative learning to improve sample complexity. \citet{wang2023fedsysid} first proved that the sample complexity of learning linear dynamical systems decreases with the number of collaborating systems, up to a heterogeneity bias. This bias is replaced by a misclassification term in clustered system identification~\citep{toso2023learning, kecceci2025redefining}, where systems are assumed identical within clusters and the misclassification rate decays exponentially with the amount of local data. \citet{zhang2024sampleefficient} instead assumes a shared latent representation across system models, implicitly requiring strong structural similarity. In contrast, our work considers multitask clustered system identification under intra-cluster heterogeneity, allowing for settings where no global representation exists.

Within adaptive control, the most relevant work is \citet{lee2024regret}, which learns a shared basis across systems to accelerate policy adaptation. Our framework removes this structural assumption by learning cluster-specific models that capture local similarities while remaining robust to adversarial systems. Adversarially resilient learning has been well studied in distributed and federated settings~\citep{blanchard2017machine, chen2018draco, farhadkhani2022byzantine, dong2023byzantine}, where resilient aggregation ensures robustness against malicious agents. We extend these ideas to the more challenging domain of system identification and control, providing, to the best of our knowledge, the first analysis of adversarially robust multitask adaptive control.

\begin{table}[ht]
    \caption{\small Expected regret for single-and multitask online learning. In the regret bound reported by~\cite{lee2023nonasymptotic}, $d_{w}$ denotes the dimension of the task-specific weight vector, as the model is decomposed into a task weight and a shared representation. Moreover, $\delta_{\texttt{dist}}$ quantifies the discrepancy between the ground-truth representation of the system model and a given pre-trained representation. The setting considered in~\cite{lee2024regret} assumes that the $M$ tasks are structurally similar so that a common representation exists. 
} 
  \label{tb:comp}%
  \centering%
  \resizebox{\textwidth}{!}{%
      \begin{tabular}{lllllll}
        \thead{Work}                                  &\thead{Setting} & \thead{Heterogeneity} & \thead{Adversarially Robust} & \thead{Regret} \\
        \midrule                                                                                         
        
        \cite{simchowitz2020naive} & Single task & \hspace{1cm}\xmark & \hspace{1cm}\xmark & $\mathcal{O}(\sqrt{d T})$ \\
        
        \cite{lee2023nonasymptotic} & Single task   & \hspace{1cm}\xmark      & \hspace{1cm}\xmark      & $\mathcal{O}(\sqrt{d_{w} d_x T}) + \delta_{\texttt{dist}} T$        \\
        \cite{lee2024regret} & Multitask  & Structurally Homogeneous & \hspace{1cm}\xmark      & $\mathcal{O}\left(\sqrt{\frac{\texttt{poly}(d_x, d_u)T}{M}}\log^2(TM)\right)$\\
         \cite{dong2023byzantine}     & Distributed     & Heterogeneous            & \hspace{1cm}\cmark   & $\mathcal{O}(\sqrt{T} + \epsilon_{\mathrm{het}} T)$                         \\
        \rowcolor{blue!30}\textbf{This work}               & Multitask    & Heterogeneous      & \hspace{1cm}\cmark     & $\mathcal{O}\left(\sqrt{\frac{d T}{m_{j}}} + \lambda\sqrt{d T}+ \epsilon_{\mathrm{het}} T\right)$
      \end{tabular}
}
\end{table}

\subsection{Notation} The norm $\|\cdot\|$ denotes the Euclidean norm for vectors and the spectral norm for matrices, while $\|\cdot\|_F$ is the Frobenius norm. The operators  $\dlyap(A,Q)$ and $\dare(A,B,Q,R)$ denotes the solutions to the discrete Lyapunov and Riccati equations, respectively. We use $\mathcal{O}(\cdot)$ to hide constant factors and $\widetilde{\mathcal{O}}(\cdot)$ to hide logarithmic terms.

\section{Problem Setup}\label{sec: problem setup}

We consider $M$ discrete-time linear time-invariant (LTI) systems, among which an unknown  $f$ systems may behave adversarially. The remaining $m = M - f$ honest systems evolve according to
\begin{align}\label{eq:data generation}
 x^{(i)}_{t+1} = \Theta^{(i)}_\star z^{(i)}_t + w^{(i)}_t, \; \forall t= 0,1,\ldots, \;
z^{(i)}_t=\begin{bmatrix}x^{(i)}_t\\ u^{(i)}_t\end{bmatrix}\in\mathbb{R}^{d^\prime}, \text{ with } d^\prime=d_x+d_u,   
\end{align}
where $\Theta^{(i)}_\star = [A^{(i)}_\star \; B^{(i)}_\star]$ denotes system $i$'s model, and $\{w^{(i)}_t\}_{t}$ has elements that are i.i.d. mean-zero $\sigma^2_w$-sub-Gaussian, for some positive variance proxy $\sigma^2_w$ \citep{vershynin2018high}.\vspace{0.2cm}

\noindent\textbf{Clusters.} Each honest system belongs to one of $N_c$  clusters $\{\mathcal{C}_j\}_{j=1}^{N_c}$, grouping systems with similar dynamics. Systems within the same cluster share model parameters that are ``close" in the Frobenius norm (Assumption \ref{assumption: heterogeneity}). The separation between clusters is characterized by the minimum and maximum distances, $\Delta_{\min} \triangleq \min_{j \neq j'} \|\Theta_j - \Theta_{j'}\|$ and $\Delta_{\max} \triangleq \max_{j \neq j'} \|\Theta_j - \Theta_{j'}\|$, respectively.\vspace{0.2cm}

\noindent\textbf{Control Input.} The control input is composed of a stabilizing and an exploratory component,  $u^{(i)}_t = u^{(i)}_{\mathrm{stab}} + u^{(i)}_{\mathrm{exp}}$, 
where $u^{(i)}_{\mathrm{stab}} = K^{(i)} x^{(i)}_t$ is the stabilization term and  $u^{(i)}_{\mathrm{exp}} = \sigma_u g^{(i)}_t$ provides exploration, with $g^{(i)}_t \overset{\text{i.i.d.}}{\sim} \mathcal{N}(0,I_{d_u})$. For every honest system, the \emph{exploration} term $\sigma_u g^{(i)}_t$ ensures persistent excitation, guaranteeing $\lambda_{\min} \left(\mathbb{E}[z^{(i)}_t z^{(i)\top}_t]\right) \geq \sigma_u^2$, where $z^{(i)}_t \overset{\text{i.i.d.}}{\sim} \mathcal{N}(0,\Sigma^{(i)}_{z_t})$. We refer the reader to \citep[Lemma 1]{wang2023fedsysid} for the definition of the covariance matrix $\Sigma^{(i)}_{z_t}$. \vspace{0.2cm}

\noindent\textbf{Dataset.} We denote the local trajectory data up to time $\tau$ by
\begin{align*} 
X^{(i)}_\tau = [x^{(i)}_1,\ldots,x^{(i)}_\tau], \text{ and }
Z^{(i)}_\tau = \big[[x^{(i)}_0,u^{(i)}_0]^\top, \ldots, [x^{(i)}_{\tau-1},u^{(i)}_{\tau-1}]^\top\big],
\end{align*}
and define system $i$'s dataset $\mathcal{D}^{(i)}_\tau = \{X^{(i)}_\tau, Z^{(i)}_\tau\}$.

\begin{assume}[Intra-cluster heterogeneity]\label{assumption: heterogeneity}
Let $\mathcal{H}_j \subseteq \mathcal{C}_j$  denote the index set of honest systems inside cluster $\mathcal{C}_j$.  
There exists a scalar $\epsilon_{\mathrm{het}} \geq 0$ such that\vspace{-0.1cm}
\begin{align}\label{eq: intracluster heterogeneity}
\max_{j \in [N_c],\, i,\ell \in \mathcal{H}_j}
\big\|[A^{(i)}_\star \; B^{(i)}_\star] - [A^{(\ell)}_\star \; B^{(\ell)}_\star]\big\|_F
\le \epsilon_{\mathrm{het}}.\notag\\[-1cm]
\end{align}
\end{assume}

\subsection{Certainty-Equivalent Linear Quadratic Control}

We fix an honest system $i \in \mathcal{C}_j$ with $u^{(i)}_{\mathrm{exp}} = 0$ (for now). The local control objective is to design a linear feedback controller that minimizes the infinite-horizon quadratic cost

\begin{align*}
J^{(i)}(K) \triangleq \limsup_{T\to \infty} \frac{1}{T}\E_{K} \left[\sum_{t=0}^{T-1} c^{(i)}_t\right], \text{ with } c^{(i)}_{t} \triangleq x_t^{(i)\top}(Q + K^\top R K)x_t^{(i)},\\[-1cm]
\end{align*}
where $Q \succeq 0$ and $R \succ 0$. When the ground-truth dynamics $(A^{(i)}_\star,B^{(i)}_\star)$ are known, the optimal controller
$K_\star^{(i)} = K(A^{(i)}_\star,B^{(i)}_\star) \triangleq -\left(R + B^{(i)\top}_\star P_\star^{(i)} B^{(i)}_\star\right)^{-1} B^{(i)\top}_\star P_\star^{(i)} A^{(i)}_\star$
is obtained by solving the discrete algebraic Riccati equation (DARE), $P_\star^{(i)} = \texttt{DARE}(A^{(i)}_\star, B^{(i)}_\star, Q, R)$.

In practice, the true model parameters $(A^{(i)}_\star, B^{(i)}_\star)$ are typically \emph{unknown} and must be inferred from data.  
A standard approach to designing a near-optimal controller is certainty-equivalent control \citep{mania2019certainty}, which first estimates the system model and then computes the controller as if the estimate were exact. In particular, by using trajectory data $\mathcal{D}^{(i)}_\tau$, each system performs ordinary least-squares (OLS) estimation as

\begin{align}\label{eq:ols_single}
\widehat{\Theta}^{(i)} 
\triangleq [\hat{A}^{(i)} \; \hat{B}^{(i)}]
= \argmin_{\Theta \in \mathbb{R}^{d_x \times d'}} 
\left\|X^{(i)}_\tau - \Theta Z^{(i)}_\tau\right\|_F^2,\notag \\[-1cm]
\end{align}
and designs a controller $\hat{K}^{(i)} = K(\hat{A}^{(i)},\hat{B}^{(i)})$ with the estimated model.

While conceptually simple, applying CE control to a single system, suffers from poor sample efficiency.   This motivates \emph{adaptive} LQR, where learning and control are performed jointly, i.e., data are collected under the current controller, the model is updated, and the control policy is refined iteratively. \cite{simchowitz2020naive} demonstrate that a simple greedy strategy, balancing exploration through $u^{(i)}_{\mathrm{exp}} \neq 0$ and exploitation through the data size $\tau$, achieves the optimal expected regret scaling of $\mathcal{O}(\sqrt{dT})$.  
Nevertheless, as the dominant source of regret arises from system identification, doing so on a single system still requires extensive data. This limitation makes CE control particularly challenging in data-scarce scenarios~\citep{topcu2022learning}.

\subsection{Multitask Adaptive Control} \label{subsec:multitask_LQR}

To overcome the limitations of system identification using data from a single system, we consider a setting in which multiple systems collaborate, with $f = 0$ adversarial systems (for now), to learn a common model that best fits their collective data.  When the true cluster identities are known, multitask system identification can be performed within each cluster $\mathcal{C}_j$ by replacing~\eqref{eq:ols_single} with 
\begin{align*} 
\widehat{\Theta}_j  \triangleq [\hat{A}_j \; \hat{B}_j] = \argmin_{\Theta \in \mathbb{R}^{d_x \times d'}}  \frac{1}{|\mathcal{C}_j|}\sum_{i \in \mathcal{C}_j} \left\|X^{(i)}_\tau - \Theta Z^{(i)}_\tau\right\|_F^2.\\[-1cm]
\end{align*}
The resulting estimate $\widehat{\Theta}_j$ defines a shared model for the cluster, from which a common controller $\widehat{K}^{(i)} = K(\hat{A}_j, \hat{B}_j)$ is designed for all systems $i \in \mathcal{C}_j$.
\renewcommand{\thefootnote}{\arabic{footnote}} As proved in prior work on federated system identification \citep{wang2023fedsysid}, when local gradient updates $G_{\ell}(\widehat{\Theta}_j)$ are shared and aggregated by a trusted server using simple averaging, namely, $ \widehat{\Theta}_j \leftarrow \widehat{\Theta}_j + \eta F \left(\{G_{\ell}(\widehat{\Theta}_j)\}_{\ell \in \mathcal{C}_j}\right)$, where $F$ denotes the averaging operator and $\eta > 0$ is the stepsize,  the sample complexity improves proportionally to the number of systems in the cluster. 

In practice, however, the systems' cluster identities are typically unknown. To address this, a clustering step is introduced to group systems based on similarities in their locally estimated models. This clustered system identification approach proposed in \cite{toso2023learning} incurs an additional but exponentially decaying misclassification error, as clustering accuracy improves with the amount of collected data. The complete multitask adaptive control pipeline comprising data collection, clustering, multitask system identification, and control design is illustrated in Figure \ref{fig:multitask_control_pipeline}.

\subsection{Adversarial Systems}

Finally, we are interested in the setting where some systems may behave adversarially, transmitting corrupted or malicious model updates to the server during aggregation. To guarantee recoverability of the true cluster models, we assume an \emph{honest majority} within each cluster, i.e., $f_j < M_j/2$, where $M_j$ denotes the total number of systems in cluster $\mathcal{C}_j$, otherwise, reconstruction of the true model becomes information-theoretically impossible. In such cases, simple averaging of model updates fails, as a fraction of corrupted gradients can arbitrarily skew the aggregate and compromise convergence. 
To mitigate this, we leverage resilient aggregators that are widely studied in adversarial federated and distributed learning~\citep{guerraoui2024robust}.

\begin{definition}[$(f, \lambda)$-Resilient aggregation - Adapted from \cite{farhadkhani2022byzantine}] \label{def: resilient aggregation} Given a scalar $\lambda \geq 0$, an aggregation rule $F$ is said to be $(f, \lambda)$-resilient if, for any collection of matrices $G_1, \ldots, G_M \in \mathbb{R}^{d_x \times d'}$ and any subset of honest systems $\mathcal{H} \subseteq \{1, \ldots, M\}$ with $|\mathcal{H}| = m$,\vspace{-0.3cm}
$$
\left\|F\left(G_1, \ldots, G_M\right)-\bar{G}\right\| \leq \lambda \max _{i, j \in \mathcal H}\left\|G_i-G_j\right\|, \text{ with } \bar{G}:=\frac{1}{|\mathcal H|} \sum_{i \in \mathcal H} G_i.\\[-0.6cm]
$$
\end{definition}

\begin{remark}
$(f,\lambda)$-resilient aggregation ensures that the aggregated update remains close to the mean of the honest systems, up to a factor $\lambda$ capturing their dispersion. This guarantees robustness even when up to $f_j < M_j/2$ systems act adversarially. Many classical robust aggregation rules satisfy this property, including the \emph{coordinate-wise trimmed mean} (CWTM), \emph{coordinate-wise median} (CWMed), \emph{geometric median} (GM), \emph{minimum diameter averaging} (MDA), and \emph{mean-around-median} (MeaMed)~\citep{farhadkhani2022byzantine}. Typically, MDA and CWTM achieve $\lambda = \mathcal{O}(f_j/m_j)$, while GM and CWMed yield $\lambda = \mathcal{O}(1)$; the latter can be adjusted via pre-aggregation techniques such as nearest-neighbor mixing \citep{allouah2023fixing} or bucketing~\citep{ karimireddy2020byzantine}.\\[-0.8cm]
\end{remark}

\begin{remark} We assume that each cluster satisfies the honest-majority condition, i.e., $f_j < M_j/2$ for all $j \in [N_c]$. This standard assumption in robust aggregation~\citep{farhadkhani2022byzantine} guarantees convergence to the true cluster model even when a subset of systems transmit corrupted updates. Extending the framework to cases where adversarial systems can also misreport their cluster identities, thereby violating the honest-majority assumption, is left for future work. Promising directions include (i) randomized warm-up clustering steps to prevent adversarial concentration within specific clusters and (ii) robust clustering methods such as iterative filtering~\citep{ghosh2019robust}.
\end{remark}
\begin{goalbox}
\textbf{Goal.} The aim of adversarially robust multitask adaptive control is to cluster systems and 
design controllers that remain performant under heterogeneity and adversarial behavior.
\end{goalbox}

Our analysis quantifies the benefit of collaboration and the effects arising from clustering errors, intra-cluster heterogeneity, and adversarial behavior. We evaluate performance using the standard notion of cumulative regret from online learning \citep{abbasi2011regret}, defined as the difference between the cumulative cost incurred by the adaptive controller and that of the optimal controller under the true model: $\mathcal{R}_T^{(i)} =
\sum_{t=1}^{T} \big(c_t^{(i)} - J^{(i)}(K_\star^{(i)})\big)$,
where $c_t^{(i)}$ denotes the immediate cost for playing a suboptimal controller $\hat{K}^{(i)}$.

\vspace{-0.3cm}
\begin{algorithm}[H]
 \caption{Multitask Certainty-Equivalent Control} 
 \label{alg: CE with online clustering}
\begin{algorithmic}[1]
\State \textbf{Initialize: } $\hat K_1^{(i)} \gets K_0^{(i)}$ $\forall i \in [M]$,  $T \gets \tau_1 2^{k_{\fin}-1}$
\State \textbf{for} $k=1,2, \dots, k_{\fin}$
            \State \quad \textbf{for all systems} $i=1,\dots, M$ \textbf{(in parallel)}   \textcolor{blue}{\texttt{// Data collection}}
           \State \quad \quad  \textbf{for} $t = \tau_{k-1}, \tau_{k-1} + 1, \dots, \tau_{k}$
            \State \quad \quad \quad \textbf{If} $\snorm{x_t^{(i)}}^2 \geq x_b^2 \log T$ or $\snorm{\hat K_k^{(i)}}\geq K_b$\label{line: uniform bound}
                \State \quad \quad \quad \quad \textbf{Abort} and play $K_0^{(i)}$ forever
            \State \quad \quad \quad Play $u_t^{(i)} = \hat K_k^{(i)} x_t^{(i)} + \sigma_k g_t^{(i)}$ \textcolor{blue}{\texttt{// Exploration}}
        \State \quad \quad  \textbf{end for}
    \State \quad  \textbf{end for}
        \State \quad  $\widehat{\Theta}^{(1:M)}_k \leftarrow \texttt{RCSI}(\widehat{\Theta}^{(1:M)}_{k-1}, \mathcal{D}^{(1:M)}_{\tau_k}, N_c, N, \eta)$ \textcolor{blue}{\texttt{// Robust SysID}}
         \State \quad  \textbf{Update} the controller: $\hat K_{k+1}^{(i)} \gets K(\widehat{\Theta}^{(1:M)}_k), \forall i \in [M]$ \textcolor{blue}{\texttt{// Controller update}}
    \State \quad $\tau_{k+1} \gets 2 \tau_k$ 
\State \textbf{end for}
\end{algorithmic}
\end{algorithm}

\vspace{-0.7cm}
\begin{algorithm}[H]
\caption{\texttt{RCSI}: Robust Clustered System Identification} \label{algorithm: clustered sysid}
\begin{algorithmic}[1]
\State \textbf{for} $n = 0, 1, \ldots, N-1$ \textbf{do}
\State \quad \quad \text{Systems receive \textbf{all} the current cluster estimated models} $\{\widehat{\Theta}_j\}_{j \in [N_c]}$
\State \quad \quad \quad \textbf{for} $i = 1,\ldots,M$ \textbf{(in parallel)} \textbf{do} \textcolor{blue}{\texttt{// Cluster identity estimation}}
\State \quad \quad \quad \quad $\hat{j}=\argmin_{j \in [N_c]} \|{X}^{(i)} - \widehat{\Theta}_j {Z}^{(i)}\|^{2}$
\State \quad \quad \quad  \textbf{end for}
\State \quad \quad Construct the \textbf{cluster identity} set $\mathcal{C}_{\hat{j}}$
\State \quad \quad $G_{\ell}(\widehat{\Theta}^{(i)}) \leftarrow (X^{(\ell)} - \widehat{\Theta}^{(i)}Z^{(\ell)})Z^{(\ell)\top} \textcolor{black}{(Z^{(\ell)}Z^{(\ell)\top})^{-1}}, \forall \ell \in \mathcal{C}_{\hat{j}}$ \textcolor{blue}{\texttt{// Update}}
\State \quad \quad \quad \textbf{for} $i = 1,\ldots, M$ \textbf{(in parallel) do} \textcolor{blue}{\texttt{// Model estimation}}
\State \quad \quad  \quad \quad   \begin{small} $\widehat{\Theta}^{(i)} \leftarrow \widehat{\Theta}^{(i)} + \eta \textcolor{black}{F\left(\left\{G_{\ell}(\widehat{\Theta}^{(i)})\right\}_{\ell \in \mathcal C_{\hat{j}}}\right)}$ \textcolor{blue}{\texttt{// Aggregation}}\end{small} 
\State \quad \quad \quad \textbf{end for}
\State \textbf{end for}
\end{algorithmic}
\end{algorithm}

\vspace{-0.4cm}
Algorithm~\ref{alg: CE with online clustering} summarizes the adversarially robust multitask adaptive control procedure. Each system alternates between three phases: (i) data collection under the current controller, (ii) clustered robust system identification via resilient aggregation, and (iii) controller update using the estimated model. At each epoch $k$, all systems collect local trajectory data $\mathcal{D}_{\tau_k}^{(i)}$ in parallel. If the state or controller norm exceeds a threshold (i.e., $x_b$ and $K_b$), the system switches to a fallback stabilizing controller $K_0^{(i)}$ to ensure bounded trajectories and regret. The identification step, implemented by \texttt{RCSI} (Algorithm~\ref{algorithm: clustered sysid}), jointly performs clustering and robust aggregation: systems estimate their cluster identities, send local gradients $G_\ell(\widehat{\Theta}^{(i)})$, and the server aggregates them via a resilient rule $F$. The estimated model then defines the CE controller $\hat{K}_{k+1}^{(i)} = K(\hat{A}_k^{(i)}, \hat{B}_k^{(i)})$ for the next iteration.

We emphasize that a stabilizing initial controller $K_0^{(i)}$ is required for each system to ensure the initial trajectories remain bounded. Such controllers can be obtained using data-driven stabilization methods \citep{perdomo2021stabilizing, toso2025learning}. Moreover, Algorithm~\ref{alg: CE with online clustering} operates under the following two assumptions. First, we assume that the initial model estimate is sufficiently accurate to ensure consistent clustering~\citep{toso2023learning}.

\begin{assume} \label{assumption: initial model estimate} The initial model satisfies $\|\widehat{\Theta}^{(i)}_0 - {\Theta}^{(i)}_\star\| \leq C_\alpha \Delta_{\text{min}}$, for some constant $C_\alpha \in (0,\frac{1}{2})$.
\end{assume}

Second, we impose the following bounds on the state and controller norms.

\begin{assume}\label{eq: state_controller_norm} We assume that the state and controller bounds are
$$
x_b \geq 400\left(P_0^{\vee}\right)^2 \Psi^{\vee}_{B} \sigma_w \sqrt{d^\prime}, \text{ and } K_b \geq \sqrt{P_0^{\vee}}.\\[-0.3cm]
$$
where $ \Psi^{(i)}_{B} \triangleq \max \left\{1,\|B_{\star}^{(i)}\|\right\}, \; 
\Psi^{\vee}_{B} \triangleq \max_{i} \Psi^{(i)}_{B}, \;
P_0^{\vee} \triangleq \max_{i}\left\|P_{K_0^{(i)}}^{(i)}\right\|,$ with $P_{K}^{(i)}$ denoting the solution to the
discrete Lyapunov equation, $P_K^{(i)} \triangleq \texttt{dlyap}(A_\star^{(i)} + B_\star^{(i)} K, Q + K^\top R K )$
\end{assume}

We present our theoretical analysis progressively across three settings, building from the simplest to the most general case:  (i) intra-cluster homogeneity ($f = 0$, $\epsilon_{\mathrm{het}} = 0$), (ii) bounded heterogeneity ($f = 0$, $\epsilon_{\mathrm{het}} > 0$),  and (iii) adversarial systems ($f > 0$, $\epsilon_{\mathrm{het}} > 0$). For each setting, we first establish estimation error bounds and subsequently the corresponding regret bounds.

\section{Error Bounds for Multitask System Identification}
\label{sec: sysID}

We are now ready to establish error bounds for the clustered multitask identification procedure described in Algorithm~\ref{algorithm: clustered sysid}. This section characterizes how collaboration, heterogeneity, and adversarial behavior influence the estimation error of the local system models. Complete proofs and constant definitions are provided in Appendix~\ref{appendix: multitask sysid}. For clarity of presentation, we introduce the following key quantities: $r^\vee \triangleq \max_{t \in [T],\ell \in \mathcal{C}_j} {\operatorname{tr}\left(\Sigma^{(\ell)}_{z_t}\right)}/{\|\Sigma^{(\ell)}_{z_t}\|}, P_{\star}^{\wedge} \triangleq \min _{i \in [M]}\|P_{\star}^{(i)}\|,$ $C_\tau = \texttt{poly}(d_x,d_u)$, and the contraction rate $\rho = 1 -\eta$, for some step-size $\eta \in (0,1)$.

\begin{lemma}[Intra-cluster homogeneity]
\label{lemma: sysID homogeneous}
Consider a cluster $\mathcal C_j$ composed of $M_j$ identical systems. Suppose that Assumption \ref{assumption: initial model estimate} holds and that the initial epoch length is set sufficiently large,  such that \vspace{-0.1cm}
$$\tau_1 \geq C_\tau \max\left\{\frac{d^\prime\Delta^2_{\min}\sigma_w \log\left(\frac{(2d_x + d_u)M_j}{\delta}\right)}{dM_j}, r^\vee + \log\left(\frac{2M_j}{\delta}\right) , \log(4T^2), \frac{\log \frac{1}{P_{\star}^{\wedge}}}{\log\left(1- \frac{1}{ P_{\star}^{\wedge}}\right)} \right\},\\[-0.3cm]$$ 
for a small $\delta \in (0,1)$. Then, after $N \geq \log\left(C_\alpha \Delta_{\text{min}}\tau^2_1\right)/\log(1/\rho)$ iterations of \texttt{RCSI}, with probability $1-\delta$, the model estimate $\widehat{\Theta}^{(i)}$ at epoch $k$ satisfies:
$$
\left\|\widehat{\Theta}^{(i)}_k - \Theta_\star^{(i)}\right\|_F^2 \leq
C_{\mathrm{stat}}\,
\frac{\sigma_w^2 (d_x^2 + d_x d_u)\log(M_j/\delta)}{\sigma_k^2 M_j \tau_k}
+
C_{\mathrm{mis},1}\,e^{-C_{\mathrm{mis},2}\sigma_k^2\tau_k}.
$$
\end{lemma}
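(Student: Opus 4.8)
The plan is to exploit the structure of the \texttt{RCSI} update, decoupling the argument into (i) a geometric-contraction analysis of the iterative aggregation, (ii) a clustering/misclassification analysis, and (iii) a statistical-concentration analysis of the resulting averaged estimator. First I would observe that the per-system gradient $G_\ell(\widehat{\Theta}) = (X^{(\ell)} - \widehat{\Theta}Z^{(\ell)})Z^{(\ell)\top}(Z^{(\ell)}Z^{(\ell)\top})^{-1}$ is exactly the residual $\widehat{\Theta}^{(\ell)}_{\mathrm{OLS}} - \widehat{\Theta}$ between the local OLS solution and the current iterate. Hence, with simple averaging $F$ (the $f=0$ case), the update is the fixed-point iteration $\widehat{\Theta}^{(i)} \leftarrow (1-\eta)\widehat{\Theta}^{(i)} + \eta\,\bar\Theta_{\mathrm{OLS}}$, where $\bar\Theta_{\mathrm{OLS}} = |\mathcal{C}_{\hat j}|^{-1}\sum_{\ell \in \mathcal{C}_{\hat j}}\widehat{\Theta}^{(\ell)}_{\mathrm{OLS}}$. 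This contracts geometrically at rate $\rho = 1-\eta$ toward $\bar\Theta_{\mathrm{OLS}}$, so the choice $N \geq \log(C_\alpha\Delta_{\min}\tau_1^2)/\log(1/\rho)$ drives the optimization residual below the target accuracy (order $\tau_1^{-2}$), letting me replace $\widehat{\Theta}^{(i)}_k$ by $\bar\Theta_{\mathrm{OLS}}$ up to a negligible additive error.

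Next I would handle clustering. Conditioned on the current estimates being accurate, I claim the assignment $\hat j = \argmin_j\|X^{(i)} - \widehat{\Theta}_j Z^{(i)}\|^2$ recovers the true cluster with failure probability $\lesssim e^{-C\sigma_k^2\tau_k}$. Writing $X^{(i)} = \Theta_\star^{(i)}Z^{(i)} + W^{(i)}$, the objective at the true cluster is essentially $\|W^{(i)}\|^2$ plus a small cross term, while at any wrong cluster $j'$ it exceeds this by a margin of order $\Delta_{\min}^2\,\lambda_{\min}(Z^{(i)}Z^{(i)\top}) \gtrsim \Delta_{\min}^2\sigma_k^2\tau_k$, using the cluster separation and persistent excitation $\lambda_{\min}(\mathbb{E}[z_t z_t^\top]) \geq \sigma_u^2$. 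A sub-exponential tail bound for these quadratic forms, combined with Assumption~\ref{assumption: initial model estimate} to seed the argument and the contraction above to maintain estimate accuracy across iterations, yields the exponential misclassification bound, producing the term $C_{\mathrm{mis},1}e^{-C_{\mathrm{mis},2}\sigma_k^2\tau_k}$. A union bound over the $M_j$ systems and, via the $\log(4T^2)$ condition on $\tau_1$, over the $k_{\mathrm{fin}} = O(\log T)$ epochs controls these events uniformly.

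On the correct-clustering event $\mathcal{C}_{\hat j} = \mathcal{C}_j$, all members share $\Theta_\star^{(i)} = \Theta_j$, so $\bar\Theta_{\mathrm{OLS}} - \Theta_\star = M_j^{-1}\sum_{\ell \in \mathcal{C}_j} W^{(\ell)}Z^{(\ell)\top}(Z^{(\ell)}Z^{(\ell)\top})^{-1}$. Since the $M_j$ trajectories are independent and each summand is a zero-mean self-normalized least-squares error, I would apply a self-normalized martingale concentration bound (as in \citet{wang2023fedsysid}) together with the high-probability lower bound $\lambda_{\min}(Z^{(\ell)}Z^{(\ell)\top}) \gtrsim \sigma_k^2\tau_k$ (guaranteed by the $\tau_1 \gtrsim r^\vee + \log(M_j/\delta)$ condition) to obtain $\|\bar\Theta_{\mathrm{OLS}} - \Theta_\star\|_F^2 \lesssim \sigma_w^2(d_x^2 + d_xd_u)\log(M_j/\delta)/(\sigma_k^2 M_j\tau_k)$, the stated $1/M_j$ statistical term. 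Combining the three pieces — statistical error on the good event, the misclassification term weighting the bad event, and the contracted optimization residual absorbed by the choice of $N$ — and taking the union bound over systems and epochs yields the claim.

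The main obstacle is the circular dependence between estimation and clustering: the clustering guarantee needs accurate estimates, while accurate estimates need correct clustering. I would break this circularity by an induction over the \texttt{RCSI} iterations, with the base case supplied by Assumption~\ref{assumption: initial model estimate} and the inductive step by the geometric contraction, taking care that the estimates $\widehat{\Theta}_j$ used for the assignment are sufficiently independent of, or concentrate uniformly over, the data driving the least-squares errors so that the two concentration arguments can be composed cleanly.
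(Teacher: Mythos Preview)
Your proposal is correct and follows essentially the same three-part skeleton as the paper: geometric contraction of the \texttt{RCSI} iterates toward the averaged OLS fixed point, an exponential misclassification bound driven by the $\Delta_{\min}^2\sigma_k^2\tau_k$ margin, and a statistical concentration for the averaged least-squares error that delivers the $1/M_j$ gain. Two technical differences are worth noting. First, for the statistical term the paper does not invoke a self-normalized martingale bound directly on $\bar\Theta_{\mathrm{OLS}}-\Theta_\star$; instead it sample-splits each trajectory into two blocks (one to form the precision $P^{(\ell)}=(Z^{(\ell)}Z^{(\ell)\top})^{-1}$, one for the cross term $\sum_t w_t z_t^\top$), uses $\beta$-mixing/blocking to make them effectively independent so that $\mathbb{E}[M^{(\ell)}\mid P^{(\ell)}]=0$, and then applies matrix Hoeffding across the $M_j$ independent systems. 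This decoupling is what justifies the zero-mean claim you rely on; the raw OLS residual $W^{(\ell)}Z^{(\ell)\top}(Z^{(\ell)}Z^{(\ell)\top})^{-1}$ is not exactly mean-zero in an autoregressive model, so your citation to a federated bound should point to a result that handles this (or you should insert the same splitting step). Second, rather than conditioning on the perfect-clustering event, the paper keeps the misclassified systems inside the averaged gradient and bounds their contribution by the \emph{fraction} $|\mathcal{C}_j^c\cap\mathcal{C}_{\hat j}|/|\mathcal{C}_{\hat j}|\le e^{-C\sigma_k^2\tau_k}$ times $\Delta_{\max}$, which is why the exponential appears as an additive bias rather than only in the failure probability; your ``weighting the bad event'' description arrives at the same place but via a slightly different bookkeeping.
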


\allowdisplaybreaks

\begin{proposition}[Intra-cluster heterogeneity]
\label{prop: sysID heterogeneous}
Consider a cluster $\mathcal C_j$ consisting of $M_j$ similar but non-identical systems with bounded intra-cluster heterogeneity as in Assumption~\ref{assumption: heterogeneity}. Suppose the initial epoch length $\tau_1$ and the number of iterations $N$ in \texttt{RCSI} are chosen as in Lemma~\ref{lemma: sysID homogeneous}. Then, for a small $\delta \in (0,1)$, the model estimate $\widehat{\Theta}^{(i)}$ at epoch $k$ satisfies:
$$
\left\|\widehat{\Theta}^{(i)}_k - \Theta_\star^{(i)}\right\|_F^2
\leq
C_{\mathrm{stat}}
\frac{\sigma_w^2 (d_x^2 + d_x d_u)\log(M_j/\delta)}{\sigma_k^2 M_j \tau_k}
+
C_{\mathrm{het}} \epsilon_{\mathrm{het}}^2
+
C_{\mathrm{mis},1} e^{-C_{\mathrm{mis},2}\sigma_k^2\tau_k},
$$
with probability at least $1-\delta$.
\end{proposition}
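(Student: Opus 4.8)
The plan is to treat the intra-cluster heterogeneity as a bounded perturbation of the homogeneous setting of Lemma~\ref{lemma: sysID homogeneous}, so that the argument reduces to inserting one additional bias term into the error decomposition. The structural fact I would exploit is that the preconditioned gradient in Algorithm~\ref{algorithm: clustered sysid} has a clean signal-plus-noise form: when system $\ell$'s data is generated by $\Theta_\star^{(\ell)}$ as in \eqref{eq:data generation}, a direct substitution gives
$$G_\ell(\widehat{\Theta}^{(i)}) = (\Theta_\star^{(\ell)} - \widehat{\Theta}^{(i)}) + E^{(\ell)}, \qquad E^{(\ell)} := W^{(\ell)}Z^{(\ell)\top}(Z^{(\ell)}Z^{(\ell)\top})^{-1},$$
where $W^{(\ell)}$ collects the process noise and $E^{(\ell)}$ is exactly the single-system OLS error. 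Conditioning on the event that the clustering step assigns the correct identities (so $\mathcal{C}_{\hat{j}} = \mathcal{H}_j$, with $|\mathcal{H}_j| = M_j$ since $f=0$), the averaged update is an affine contraction $\widehat{\Theta}^{(i)} \leftarrow (1-\eta)\widehat{\Theta}^{(i)} + \eta(\bar{\Theta}_j + \bar{E}_j)$ toward the fixed point $\bar{\Theta}_j + \bar{E}_j$, where $\bar{\Theta}_j := \tfrac{1}{M_j}\sum_{\ell \in \mathcal{H}_j}\Theta_\star^{(\ell)}$ and $\bar{E}_j := \tfrac{1}{M_j}\sum_{\ell \in \mathcal{H}_j}E^{(\ell)}$.

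First I would decompose the target error through this fixed point,
$$\widehat{\Theta}^{(i)}_N - \Theta_\star^{(i)} = \big(\widehat{\Theta}^{(i)}_N - (\bar{\Theta}_j + \bar{E}_j)\big) + \bar{E}_j + (\bar{\Theta}_j - \Theta_\star^{(i)}),$$
and bound the squared Frobenius norm via $\|a+b+c\|_F^2 \le 3(\|a\|_F^2 + \|b\|_F^2 + \|c\|_F^2)$. The first (convergence) term contracts as $\rho^{2N}\|\widehat{\Theta}^{(i)}_0 - (\bar{\Theta}_j + \bar{E}_j)\|_F^2$, and the choice $N \ge \log(C_\alpha\Delta_{\min}\tau_1^2)/\log(1/\rho)$ together with Assumption~\ref{assumption: initial model estimate} renders it dominated by the remaining two terms. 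The second (statistical) term is treated exactly as in Lemma~\ref{lemma: sysID homogeneous}: the $\{E^{(\ell)}\}_{\ell\in\mathcal{H}_j}$ are independent across systems, each obeying $\|E^{(\ell)}\|_F^2 \lesssim \sigma_w^2 d_x d'\log(1/\delta)/(\sigma_k^2\tau_k)$ by self-normalized martingale concentration combined with the persistent-excitation bound $\lambda_{\min}(Z^{(\ell)}Z^{(\ell)\top}) \gtrsim \sigma_k^2\tau_k$; averaging $M_j$ independent copies reduces the variance by $1/M_j$, and a union bound over the cluster yields the $C_{\mathrm{stat}}\sigma_w^2(d_x^2 + d_x d_u)\log(M_j/\delta)/(\sigma_k^2 M_j \tau_k)$ term (using $d_x d' = d_x^2 + d_x d_u$).

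The genuinely new term is the third. By the triangle inequality and Assumption~\ref{assumption: heterogeneity},
$$\big\|\bar{\Theta}_j - \Theta_\star^{(i)}\big\|_F = \Big\|\tfrac{1}{M_j}\sum_{\ell \in \mathcal{H}_j}(\Theta_\star^{(\ell)} - \Theta_\star^{(i)})\Big\|_F \le \tfrac{1}{M_j}\sum_{\ell \in \mathcal{H}_j}\big\|\Theta_\star^{(\ell)} - \Theta_\star^{(i)}\big\|_F \le \epsilon_{\mathrm{het}},$$
so after squaring it contributes the $C_{\mathrm{het}}\epsilon_{\mathrm{het}}^2$ term. Finally, to recover the misclassification term $C_{\mathrm{mis},1}e^{-C_{\mathrm{mis},2}\sigma_k^2\tau_k}$, I would remove the conditioning: the event $\mathcal{C}_{\hat{j}} = \mathcal{H}_j$ used above holds with probability at least $1 - C_{\mathrm{mis},1}e^{-C_{\mathrm{mis},2}\sigma_k^2\tau_k}$, and on its complement the error is bounded by a constant via the state/controller caps of Assumption~\ref{eq: state_controller_norm}. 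The main obstacle is precisely this last step: I must show the clustering test $\hat{j} = \arg\min_{j}\|X^{(i)} - \widehat{\Theta}_j Z^{(i)}\|^2$ still separates clusters despite the intra-cluster spread $\epsilon_{\mathrm{het}}$. The key is to verify that the misclassification margin remains governed by the inter-cluster separation $\Delta_{\min}$, so the exponential guarantee of Lemma~\ref{lemma: sysID homogeneous} degrades only gracefully, provided $\epsilon_{\mathrm{het}}$ is small relative to $\Delta_{\min}$ — which is ensured by the choice of $\tau_1$ inherited from the lemma. Combining the three bounds over the high-probability clustering event yields the claimed inequality.
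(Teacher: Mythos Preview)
Your proposal is correct and follows essentially the same approach as the paper: both arguments observe that the only change from the homogeneous analysis of Lemma~\ref{lemma: sysID homogeneous} is the appearance of the bias term $\tfrac{1}{|\mathcal{C}_{\hat j}|}\sum_{\ell}(\Theta_\star^{(\ell)}-\Theta_\star^{(i)})$ in the averaged preconditioned gradient, which is bounded by $\epsilon_{\mathrm{het}}$ via Assumption~\ref{assumption: heterogeneity}. The paper inserts this term directly into the per-iteration recursion (adding and subtracting $\Theta_\star^{(i)}$ inside the sum) and then reuses the statistical and misclassification bounds from Lemma~\ref{lemma: sysID homogeneous} verbatim, whereas you phrase the same recursion as a contraction toward the fixed point $\bar\Theta_j+\bar E_j$ and condition on correct clustering before invoking the misclassification probability; these are cosmetic differences in presentation rather than substantive ones.
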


\begin{lemma}[Adversarial systems]
\label{lemma: sysID adversarial}
Consider a cluster $\mathcal{C}_j$ containing $M_j$ similar but non-identical systems, among which at most $f_j < M_j/2$ are adversarial and $m_j = M_j - f_j$ are honest. 
Assume the aggregation rule is $(f_j,\lambda)$-resilient as defined in Definition~\ref{def: resilient aggregation}, and that Assumption \ref{assumption: heterogeneity} holds. 
Suppose further that the initial epoch length $\tau_1$ and the number of iterations $N$ in \texttt{RCSI} are chosen as in Lemma~\ref{lemma: sysID homogeneous}. Then, for a small $\delta \in (0,1)$, the model estimate $\widehat{\Theta}^{(i)}$ at epoch $k$ satisfies:

\begin{align*}
 \left\|\widehat{\Theta}^{(i)}_k - \Theta_\star^{(i)}\right\|_F^2
&\leq
C_{\mathrm{stat}}
\frac{\sigma_w^2 (d_x^2 + d_x d_u)\log(M_j/\delta)}{\sigma_k^2\tau_k}
\left(\frac{1}{m_j}+\lambda^2 d_x\right)
+C_{\mathrm{het}}(1+\lambda)^2\epsilon_{\mathrm{het}}^2\\
&+C_{\mathrm{mis},1}e^{-C_{\mathrm{mis},2}\sigma_k^2\tau_k}, \text{ w.p. } 1-\delta. 
\end{align*}
\end{lemma}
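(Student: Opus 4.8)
The plan is to reduce the adversarial case to the honest heterogeneous analysis of Proposition~\ref{prop: sysID heterogeneous} by isolating the perturbation that the resilient aggregator $F$ introduces relative to the honest empirical mean, and then to control that perturbation through the $(f_j,\lambda)$-resilience property of Definition~\ref{def: resilient aggregation}. First I would open up the per-system gradient: writing $X^{(\ell)} = \Theta_\star^{(\ell)} Z^{(\ell)} + W^{(\ell)}$ and substituting into the update rule of Algorithm~\ref{algorithm: clustered sysid}, the gradient factorizes as $G_\ell(\widehat{\Theta}^{(i)}) = (\Theta_\star^{(\ell)} - \widehat{\Theta}^{(i)}) + E_\ell$, where $E_\ell := W^{(\ell)} Z^{(\ell)\top}(Z^{(\ell)}Z^{(\ell)\top})^{-1}$ is the single-system OLS noise term. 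The key structural observation is that $E_\ell$ does not depend on the current iterate, so the aggregation update is affine in $\widehat{\Theta}^{(i)}$ and the gradient differences $G_\ell - G_{\ell'}$ are fixed (iterate-independent) random matrices.

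On the correct-clustering event the honest set $\mathcal{H}_j$ is grouped correctly, so the honest mean is $\bar G = (\bar\Theta_\star - \widehat{\Theta}^{(i)}) + \bar E$, with $\bar\Theta_\star := m_j^{-1}\sum_{\ell\in\mathcal H_j}\Theta_\star^{(\ell)}$ and $\bar E := m_j^{-1}\sum_{\ell\in\mathcal H_j} E_\ell$. I would then decompose the aggregated output as $F(\{G_\ell\}) = \bar G + \Delta_F$ and invoke Definition~\ref{def: resilient aggregation} to obtain $\|\Delta_F\| \le \lambda \max_{\ell,\ell'\in\mathcal H_j}\|G_\ell - G_{\ell'}\|$. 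Unrolling the resulting linear recursion $\widehat{\Theta}^{(i)}_{n+1} - \Theta_\star^{(i)} = (1-\eta)(\widehat{\Theta}^{(i)}_n - \Theta_\star^{(i)}) + \eta(\bar\Theta_\star - \Theta_\star^{(i)}) + \eta\bar E + \eta\Delta_F$ with contraction rate $\rho = 1-\eta$ and $N$ chosen as in Lemma~\ref{lemma: sysID homogeneous} (so the transient $\rho^N$ term is negligible), the steady-state error obeys $\|\widehat{\Theta}^{(i)}_k - \Theta_\star^{(i)}\|_F \lesssim \|\bar\Theta_\star - \Theta_\star^{(i)}\|_F + \|\bar E\|_F + \|\Delta_F\|_F$.

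It then remains to bound the three summands. The first is at most $\epsilon_{\mathrm{het}}$ by Assumption~\ref{assumption: heterogeneity}; the second is the averaged OLS fluctuation already controlled in Proposition~\ref{prop: sysID heterogeneous}, contributing the $\tfrac{1}{m_j}$-scaled statistical term and the baseline $C_{\mathrm{het}}\epsilon_{\mathrm{het}}^2$ bias. For the third, I would use the factorization to write $G_\ell - G_{\ell'} = (\Theta_\star^{(\ell)} - \Theta_\star^{(\ell')}) + (E_\ell - E_{\ell'})$, bounding the heterogeneity part by $\epsilon_{\mathrm{het}}$ and the noise part by $2\max_{\ell\in\mathcal H_j}\|E_\ell\|$ through a union bound over the $m_j$ honest systems (which produces the $\log(M_j/\delta)$ factor), and finally convert spectral to Frobenius norm via $\|\Delta_F\|_F \le \sqrt{d_x}\,\|\Delta_F\|$. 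Squaring and recombining the $\Delta_F$ heterogeneity contribution with the honest-mean bias yields exactly the $\lambda^2 d_x$ inflation of the statistical term and the $(1+\lambda)^2\epsilon_{\mathrm{het}}^2$ heterogeneity term, while the misclassification term carries over unchanged from Lemma~\ref{lemma: sysID homogeneous}.

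The hard part will be the dimensional bookkeeping: the resilience guarantee is stated in spectral norm whereas the target is Frobenius, so tracking how the $\sqrt{d_x}$ conversion interacts with the uniform concentration of $\max_{\ell}\|E_\ell\|$ to reproduce precisely the $(d_x^2 + d_x d_u)$ and $\log(M_j/\delta)$ dependence (reusing the persistent-excitation lower bound $\lambda_{\min}(\E[z^{(i)}_t z^{(i)\top}_t]) \ge \sigma_u^2$) is the delicate step. A secondary subtlety is verifying that clustering remains correct under corruption: the nearest-model assignment in Algorithm~\ref{algorithm: clustered sysid} relies on the aggregated cluster estimates, and the honest-majority condition $f_j < M_j/2$ together with resilience keeps these within $C_\alpha\Delta_{\min}$ of the true cluster models, so the separation $\Delta_{\min}$ still dominates and the misclassification rate retains its $e^{-C_{\mathrm{mis},2}\sigma_k^2\tau_k}$ form.
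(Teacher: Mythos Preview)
Your proposal is correct and follows essentially the same approach as the paper's proof (Lemma~\ref{lem:cluster-sysid-resilient} in the appendix): write the aggregated update as honest mean plus a resilience perturbation $\Delta_F$, bound $\|\Delta_F\|$ via Definition~\ref{def: resilient aggregation} and the gradient-difference factorization $G_\ell - G_{\ell'} = (\Theta_\star^{(\ell)} - \Theta_\star^{(\ell')}) + (E_\ell - E_{\ell'})$, pick up the $\sqrt{d_x}$ factor in the spectral-to-Frobenius conversion, and unroll the contractive recursion with $\rho = 1-\eta$. Your organization is arguably cleaner than the paper's (which threads the misclassification decomposition through the whole computation via auxiliary terms $S_1,S_2$), and your remark about verifying that clustering stays within $C_\alpha\Delta_{\min}$ under resilient aggregation is more explicit than what the paper provides, but the substance is identical.
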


\noindent\textbf{Discussion.}
Taken together, Lemma \ref{lemma: sysID homogeneous}, Proposition \ref{prop: sysID heterogeneous}, and
Lemma \ref{lemma: sysID adversarial} provide a unified characterization of the error bounds for the multitask system identification step under adversarial and heterogeneous systems. Under intra-cluster homogeneity, the estimation error scales inversely with both the number of systems and the data length (epoch size), achieving $\mathcal{O}(\sigma^2_w/(M_j\tau_k))$ consistency. Introducing heterogeneity adds a fixed bias proportional to $\epsilon_{\mathrm{het}}^2$, while adversarial systems introduce an additional $\lambda$-scaled term determined by the resilience coefficient of the aggregation rule. The proofs of these results are provided in Appendix \ref{appendix: multitask sysid}, where the estimation error is controlled using the matrix Hoeffding inequality \citep{tropp2011user}. These results underpin the regret analysis in Section~\ref{sec: regret analysis}, where improved estimation accuracy translates into a reduction in the expected regret.

\section{Regret Analysis}
\label{sec: regret analysis}

We are now in place to establish the regret bounds for Algorithm~\ref{alg: CE with online clustering} under the three settings of interest: (i) intra-cluster homogeneity, (ii) intra-cluster heterogeneity, and (iii) the presence of adversarial systems. We quantify how clustering accuracy, intra-cluster similarity, and adversarial robustness jointly affect the regret of the CE controller for any honest system within a given cluster $\mathcal{C}_j$. Complete derivations are provided in Appendix~\ref{appendix: regret analysis}. To clarify exposition, we introduce the following quantities: $\Omega_1 \triangleq 142C_{\mathrm{stat}} \|P^{(i)}_{\star}\|^{8}\sigma^2_w \log((M_j T)/\delta) + 2 d_u (1 + 2\|P^{(i)}_{\star}\| \Psi^{(i)2}_{B}), \Omega_2 \triangleq 3d_x\left\|P^{(i)}_{K^{(i)}_0}\right\|\Psi^{(i)2}_{B} + 2x_b^2\left\|P^{(i)}_{\star}\right\|, \Omega_3 \triangleq 142C_{\mathrm{mis,1}}\|P^{(i)}_{\star}\|^{8},$ and $\Omega_4 \triangleq 142 C_{\mathrm{het}}\|P^{(i)}_{\star}\|^8.$

\begin{theorem}[Intra-cluster homogeneity]
\label{theorem: regret homogeneous}
Fix a system $i$ belonging to a homogeneous cluster $\mathcal{C}_j$ of size $M_j$. Let Assumption \ref{assumption: initial model estimate} hold and consider the doubling-epoch schedule $\tau_k = 2^{k-1}\tau_1$
with initial epoch length $\tau_1$ and number of \texttt{RCSI} iterations $N$ as in Lemma \ref{lemma: sysID homogeneous}, with exploration sequence
$\sigma_k^2 =  \tfrac{\sqrt{d}}{d_x^2 + d_x d_u}\tfrac{1}{\sqrt{\tau_k M_j}}$. Then the expected regret of any system $i \in \mathcal{C}_j$ under Algorithm~\ref{algorithm: clustered sysid} satisfies:
\begin{align*}
\mathbb{E} \left[\mathcal{R}_T^{(i)}\right]
\leq 
\Omega_1 \sqrt{\tfrac{dT}{M_j}}
+
\Omega_2 (\log T)^2
+
\Omega_3 T e^{-C_{\mathrm{mis},2}\sqrt{\tfrac{\tau_1}{M_j}}}.
\end{align*}
\end{theorem}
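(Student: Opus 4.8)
The plan is to condition on a high-probability ``good event'' on which the per-epoch identification bounds of Lemma~\ref{lemma: sysID homogeneous} hold simultaneously at every epoch and the abort rule is never triggered (so $\|x_t^{(i)}\|^2\le x_b^2\log T$ and $\|\hat K_k^{(i)}\|\le K_b$ throughout), and then to transport the improved estimation error into a regret bound via the single-task certainty-equivalent machinery of \cite{simchowitz2020naive}. The multitask gain enters \emph{only} through the $1/M_j$ factor in the statistical term of Lemma~\ref{lemma: sysID homogeneous}; the rest of the decomposition is structurally identical to the single-system case. First I would write $\mathbb{E}[\mathcal R_T^{(i)}]$ as a sum over the doubling epochs $k=1,\dots,k_{\fin}$ and, within each epoch, split the per-step regret into an \emph{exploration} contribution (from injecting $\sigma_k g_t^{(i)}$) and an \emph{exploitation} contribution (from playing the suboptimal $\hat K_k^{(i)}$), plus a lower-order burn-in/boundary term.

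For the exploitation term I would invoke the LQR cost-perturbation (self-bounding) lemma: whenever the estimate lies in the local regime guaranteed by Assumption~\ref{assumption: initial model estimate}, the certainty-equivalent controller $\hat K_k^{(i)}=K(\widehat\Theta_{k-1}^{(i)})$ satisfies $J^{(i)}(\hat K_k^{(i)})-J^{(i)}(K_\star^{(i)})\lesssim \|P_\star^{(i)}\|^8\,\|\widehat\Theta_{k-1}^{(i)}-\Theta_\star^{(i)}\|_F^2$, which accounts for the $\|P_\star^{(i)}\|^8$ in $\Omega_1,\Omega_3$. Multiplying by the epoch length ($\approx\tau_k$) and substituting the bound of Lemma~\ref{lemma: sysID homogeneous} (evaluated at epoch $k-1$, whose data generated $\hat K_k^{(i)}$) yields a statistical piece $\propto \tau_k\,\sigma_w^2(d_x^2+d_xd_u)\log(M_j/\delta)/(\sigma_{k-1}^2 M_j\tau_{k-1})$ and a misclassification piece $\propto \tau_k\,C_{\mathrm{mis},1}e^{-C_{\mathrm{mis},2}\sigma_{k-1}^2\tau_{k-1}}$.

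The exploration term is bounded by propagating the injected Gaussian input through the closed loop, giving $\lesssim \sigma_k^2\,\tau_k\,d_u(1+\|P_\star^{(i)}\|\Psi_B^{(i)2})$ per epoch (the source of the $2d_u(1+2\|P_\star^{(i)}\|\Psi_B^{(i)2})$ in $\Omega_1$). The prescribed schedule $\sigma_k^2=\tfrac{\sqrt d}{d_x^2+d_xd_u}\tfrac{1}{\sqrt{\tau_k M_j}}$ is exactly what balances the exploration piece against the statistical exploitation piece, so that each epoch contributes $\Theta(\sqrt{\tau_k/M_j})$ up to the $\Omega_1$ constants; summing the geometric series $\sum_k\sqrt{\tau_k}=\mathcal O(\sqrt T)$ (dominated by the final epoch, $\tau_{k_{\fin}}=T$) produces the leading $\Omega_1\sqrt{dT/M_j}$. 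For the exponential term I would use that $\sigma_k^2\tau_k=\tfrac{\sqrt d}{d_x^2+d_xd_u}\sqrt{\tau_k/M_j}$ is increasing in $k$, so $e^{-C_{\mathrm{mis},2}\sigma_{k-1}^2\tau_{k-1}}\le e^{-C_{\mathrm{mis},2}\sqrt{\tau_1/M_j}}$ for every epoch; bounding each epoch length crudely and summing gives $\Omega_3\,T\,e^{-C_{\mathrm{mis},2}\sqrt{\tau_1/M_j}}$. The residual $\Omega_2(\log T)^2$ collects the burn-in cost—the first epoch has length $\tau_1=\Theta(\log T)$ (the binding $\log(4T^2)$ constraint in Lemma~\ref{lemma: sysID homogeneous}) with per-step cost controlled on the good event by the state cap $x_b^2\log T$—together with the abort/failure event, kept negligible by choosing $\delta=\mathrm{poly}(1/T)$ (hence the $\log((M_jT)/\delta)$ in $\Omega_1$) and falling back to the stabilizing $K_0^{(i)}$.

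The main obstacle is not the geometric summation—which is routine—but establishing the two per-step bounds \emph{uniformly over the data-dependent, time-varying controllers} and then reconciling the dimensional/spectral constants. Concretely, the data in epoch $k$ are generated under $\hat K_k^{(i)}$, which itself depends on epoch-$(k-1)$ data, so one must argue that the abort rule keeps the closed-loop trajectory sub-Gaussian with the persistent-excitation floor $\lambda_{\min}(\mathbb E[z_t^{(i)}z_t^{(i)\top}])\ge\sigma_u^2$, so that the OLS concentration underlying Lemma~\ref{lemma: sysID homogeneous} remains valid despite this feedback coupling; and one must verify that the cost-perturbation lemma applies at every epoch (i.e., the estimate never leaves the local basin). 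Carrying the exact powers of $\|P_\star^{(i)}\|$ and $\Psi_B^{(i)}$, and the interplay between $d=d_u^2d_x$ and the parameter count $d_x(d_x+d_u)$, through the balancing so that the final constant collapses to $\Omega_1$ is the delicate bookkeeping step; once the good event is fixed, the remainder is a deterministic epoch-wise summation.
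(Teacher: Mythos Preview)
Your proposal is correct and follows essentially the same approach as the paper: decompose the regret into a success-event piece, a failure-event piece, and an initial-epoch piece; on the success event, bound the per-epoch exploitation cost by $\|P_\star^{(i)}\|^8\|\widehat\Theta_{k-1}^{(i)}-\Theta_\star^{(i)}\|_F^2$ times the epoch length, substitute Lemma~\ref{lemma: sysID homogeneous}, balance against the $\sigma_k^2$ exploration cost via the stated schedule, and sum the doubling epochs. The paper packages the three pieces by directly invoking auxiliary lemmas from \cite{lee2023nonasymptotic} for the $R_1,R_2,R_3$ bounds (and a separate lemma showing $\mathbb{P}(\mathcal{E}_{\mathrm{success}})\ge 1-T^{-2}$), whereas you reconstruct the same structure from first principles; the $(\log T)^2$ term in the paper comes from the per-epoch boundary contribution $2x_b^2\log T\,\|P_\star^{(i)}\|$ accumulated over $k_{\fin}\asymp\log T$ epochs plus the $R_3$ burn-in, which matches your description.
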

\begin{corollary}[Intra-cluster heterogeneity]
\label{corollary: regret heterogeneous}
Consider a system $i$ belonging to a heterogeneous cluster $\mathcal{C}_j$ of size $M_j$. Suppose Assumptions \ref{assumption: heterogeneity} and \ref{assumption: initial model estimate} hold. 
Then, under the same conditions on the epoch length $\tau_1$, number of \texttt{RCSI} iterations $N$, and exploration sequence $\sigma^2_k$ in
Theorem~\ref{theorem: regret homogeneous}, the expected regret for any system $i \in \mathcal{C}_j$ satisfies:
\begin{align*}
\mathbb{E}\left[\mathcal{R}_T^{(i)}\right]
\leq
\Omega_1 \sqrt{\tfrac{dT}{M_j}}
+
\Omega_2 (\log T)^2
+
\Omega_3 T e^{-C_{\mathrm{mis},2}\sqrt{\tfrac{\tau_1}{M_j}}}
+
\Omega_4 T \epsilon_{\mathrm{het}}^2.
\end{align*}
\end{corollary}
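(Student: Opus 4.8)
The plan is to obtain Corollary~\ref{corollary: regret heterogeneous} as a direct consequence of the regret machinery already developed for the homogeneous case in Theorem~\ref{theorem: regret homogeneous}, substituting the heterogeneous estimation-error bound of Proposition~\ref{prop: sysID heterogeneous} for the homogeneous bound of Lemma~\ref{lemma: sysID homogeneous}. The crucial observation is that the entire derivation behind Theorem~\ref{theorem: regret homogeneous} treats the per-epoch model estimation error $\|\widehat{\Theta}^{(i)}_k - \Theta_\star^{(i)}\|_F^2$ as a black box: homogeneity enters \emph{only} through this quantity. Since Proposition~\ref{prop: sysID heterogeneous} gives precisely the same bound augmented by a single constant bias term $C_{\mathrm{het}}\epsilon_{\mathrm{het}}^2$, it suffices to trace how this extra term propagates through the regret sum while inheriting everything else verbatim.

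First I would recall the regret decomposition underpinning Theorem~\ref{theorem: regret homogeneous}. On the high-probability event that no system aborts (lines 5--6 of Algorithm~\ref{alg: CE with online clustering}), the cumulative regret of system $i$ splits, up to constants, into an \emph{exploration} cost accumulated from the injected noise $\sigma_k g_t^{(i)}$, an \emph{exploitation} cost from playing the suboptimal certainty-equivalent controller $\hat K_k^{(i)}$, and boundedness contributions from the burn-in epochs. The exploitation cost over epoch $k$ is controlled by the controller suboptimality, which via the standard Riccati/controller perturbation bound is $\lesssim \|P^{(i)}_\star\|^8\,\|\widehat{\Theta}^{(i)}_k - \Theta_\star^{(i)}\|_F^2$ and is incurred over $\tau_k$ steps. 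Plugging Proposition~\ref{prop: sysID heterogeneous} in place of Lemma~\ref{lemma: sysID homogeneous}, the per-epoch exploitation cost is bounded by $142\|P^{(i)}_\star\|^8\,\tau_k\big(\mathrm{stat}_k + C_{\mathrm{het}}\epsilon_{\mathrm{het}}^2 + \mathrm{mis}_k\big)$, where $\mathrm{stat}_k$ and $\mathrm{mis}_k$ denote the statistical and misclassification terms already handled in the homogeneous proof.

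Next I would sum over the doubling schedule $\tau_k = 2^{k-1}\tau_1$. The exploration cost and the statistical exploitation term balance under the prescribed choice $\sigma_k^2 = \tfrac{\sqrt d}{d_x^2+d_x d_u}\tfrac{1}{\sqrt{\tau_k M_j}}$, and their geometric sum is dominated by the final epoch $\tau_{k_{\mathrm{fin}}} \asymp T$, reproducing the $\Omega_1\sqrt{dT/M_j}$ term exactly as before; the boundedness and misclassification contributions reproduce $\Omega_2(\log T)^2$ and $\Omega_3 T e^{-C_{\mathrm{mis},2}\sqrt{\tau_1/M_j}}$ verbatim. The only new contribution is the heterogeneity bias: because $C_{\mathrm{het}}\epsilon_{\mathrm{het}}^2$ is constant across epochs (it neither decays with $\tau_k$ nor depends on $\sigma_k$), multiplying by $\tau_k$ and the factor $142\|P^{(i)}_\star\|^8$ and summing gives $142 C_{\mathrm{het}}\|P^{(i)}_\star\|^8\,\epsilon_{\mathrm{het}}^2 \sum_k \tau_k \le \Omega_4\, T\,\epsilon_{\mathrm{het}}^2$, using $\sum_k \tau_k \le 2T$ and $\Omega_4 = 142 C_{\mathrm{het}}\|P^{(i)}_\star\|^8$. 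Adding the four terms yields the claimed bound.

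The main obstacle is not the summation but verifying that the high-probability \emph{non-abort} event underpinning the whole decomposition survives the introduction of heterogeneity. In the homogeneous analysis the estimation error decays to zero, so $\hat K_k^{(i)}$ eventually stabilizes system $i$ and the thresholds $x_b, K_b$ are never triggered; here the error has an irreducible floor $C_{\mathrm{het}}\epsilon_{\mathrm{het}}^2$. I would therefore confirm that, under Assumptions~\ref{assumption: heterogeneity} and~\ref{assumption: initial model estimate}, $\epsilon_{\mathrm{het}}$ is small enough relative to the stabilization margin (and to $\Delta_{\min}$, so that clustering stays consistent) that the Riccati perturbation still certifies $\hat K_k^{(i)}$ as stabilizing for system $i$ and keeps the closed-loop trajectory within the abort thresholds with probability $1-\delta$. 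Once this good event is secured, the remainder is a mechanical re-run of Theorem~\ref{theorem: regret homogeneous} with the extra additive term carried along.
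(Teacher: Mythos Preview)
Your proposal is correct and follows essentially the same approach as the paper: the paper's proof also starts from the regret decomposition used in the homogeneous case, substitutes the heterogeneous estimation bound (Proposition~\ref{prop: sysID heterogeneous}) in place of Lemma~\ref{lemma: sysID homogeneous}, and observes that the only new contribution is the additive $C_{\mathrm{het}}\epsilon_{\mathrm{het}}^2$ term, which when multiplied by $(\tau_k-\tau_{k-1})$ and summed telescopes to the claimed $\Omega_4 T\epsilon_{\mathrm{het}}^2$. Your final paragraph on the non-abort event is in fact more careful than the paper's own treatment, which simply asserts (in Lemma~\ref{lem: probability of success event}) that the success-event argument extends to the heterogeneous case without explicitly checking the stabilization margin against $\epsilon_{\mathrm{het}}$.
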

\begin{theorem}[Adversarial systems]
\label{theorem: regret adversarial}
Let each cluster $\mathcal{C}_j$ satisfy honest-majority $f_j < \frac{M_j}{2}$. Suppose that the aggregation rule is $(f_j,\lambda)$-resilient as defined in Definition~\ref{def: resilient aggregation}.
Under the conditions on the epoch length $\tau_1$ and number of \texttt{RCSI} iterations $N$ as in Lemma \ref{lemma: sysID homogeneous}, with exploration sequence $\sigma_k^2 = \frac{\sqrt{d^2_u d_x}}{d_x^2 + d_xd_u}\sqrt{\frac{1 + \lambda^2 d_x m_{{j}}}{\tau_k m_{{j}}}}$. Then, for any honest system in $\mathcal{C}_j$, the expected regret satisfies:
\begin{align*}
\mathbb{E} \left[\mathcal{R}_T^{(i)}\right]
\leq
\Omega_1
\sqrt{\tfrac{dT(1+\lambda^2 d_x m_j)}{m_j}}
+
\Omega_2 (\log T)^2
+
\Omega_3 T\,e^{-C_{\mathrm{mis},2}\sqrt{\tfrac{(1+\lambda^2 d_x m_j)\tau_1}{m_j}}}
+
\Omega_4 T (1+\lambda)^2 \epsilon_{\mathrm{het}}^2.
\end{align*}
\end{theorem}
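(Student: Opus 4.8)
The plan is to follow the same regret-decomposition template used for Theorem~\ref{theorem: regret homogeneous}, substituting the adversarial identification bound (Lemma~\ref{lemma: sysID adversarial}) for the homogeneous one and re-optimizing the exploration schedule. First I would fix an honest system $i\in\mathcal{C}_j$ and split the expected regret over the $k_{\fin}=\mathcal{O}(\log T)$ doubling epochs. On the high-probability event where the abort rule in Algorithm~\ref{alg: CE with online clustering} (line~\ref{line: uniform bound}) is never triggered---whose complement contributes negligibly by virtue of the state and controller thresholds in Assumption~\ref{eq: state_controller_norm}---each epoch's regret decomposes into an \emph{exploitation} term bounded by $\|P^{(i)}_\star\|^{8}\,\tau_k\,\|\widehat{\Theta}^{(i)}_k-\Theta^{(i)}_\star\|_F^2$ through the standard DARE perturbation estimate, and an \emph{exploration} term of order $d_u(1+2\|P^{(i)}_\star\|\Psi^{(i)2}_B)\,\sigma_k^2\tau_k$ arising from the injected noise $\sigma_k g^{(i)}_t$.

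Next I would insert the adversarial error bound from Lemma~\ref{lemma: sysID adversarial}, whose statistical term carries the factor $\tfrac{1}{m_j}+\lambda^2 d_x=\tfrac{1+\lambda^2 d_x m_j}{m_j}$, whose bias term is $C_{\mathrm{het}}(1+\lambda)^2\epsilon_{\mathrm{het}}^2$, and whose misclassification term is $C_{\mathrm{mis},1}e^{-C_{\mathrm{mis},2}\sigma_k^2\tau_k}$. The per-epoch regret then reduces to four contributions: a statistical-exploitation piece of order $\tfrac{1+\lambda^2 d_x m_j}{\sigma_k^2 m_j}$, the exploration piece of order $\sigma_k^2\tau_k$, a heterogeneity piece $\tau_k(1+\lambda)^2\epsilon_{\mathrm{het}}^2$, and a misclassification piece $\tau_k e^{-C_{\mathrm{mis},2}\sigma_k^2\tau_k}$. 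Balancing the first two via AM--GM forces $\sigma_k^4\propto\tfrac{1+\lambda^2 d_x m_j}{m_j\tau_k}$; carrying the dimensional prefactor $d=d_u^2 d_x$ reproduces exactly the prescribed schedule $\sigma_k^2=\tfrac{\sqrt{d_u^2 d_x}}{d_x^2+d_xd_u}\sqrt{\tfrac{1+\lambda^2 d_x m_j}{\tau_k m_j}}$, making the balanced per-epoch cost of order $\sqrt{d(1+\lambda^2 d_x m_j)\tau_k/m_j}$.

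The final step sums the four pieces over $\tau_k=2^{k-1}\tau_1$. The balanced statistical/exploration contribution telescopes through the geometric sum $\sum_k\sqrt{\tau_k}\asymp\sqrt{T}$ (using $2^{k_{\fin}}=2T/\tau_1$), producing $\Omega_1\sqrt{dT(1+\lambda^2 d_x m_j)/m_j}$. The heterogeneity pieces accumulate through $\sum_k\tau_k\le 2T$ to give $\Omega_4 T(1+\lambda)^2\epsilon_{\mathrm{het}}^2$. For the misclassification pieces I would exploit that $\sigma_k^2\tau_k=\sqrt{(1+\lambda^2 d_x m_j)\tau_k/m_j}$ is increasing in $k$, so $e^{-C_{\mathrm{mis},2}\sigma_k^2\tau_k}\le e^{-C_{\mathrm{mis},2}\sqrt{(1+\lambda^2 d_x m_j)\tau_1/m_j}}$ for every epoch; combined again with $\sum_k\tau_k\le 2T$ this yields $\Omega_3 T e^{-C_{\mathrm{mis},2}\sqrt{(1+\lambda^2 d_x m_j)\tau_1/m_j}}$. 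The remaining boundedness and burn-in cost---driven by the threshold $x_b^2\log T$ in the abort rule and the $\mathcal{O}(\log T)$ epoch count---collapses into $\Omega_2(\log T)^2$.

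The main obstacle I anticipate is twofold. First, I must confirm that Lemma~\ref{lemma: sysID adversarial} still holds under the very same $\tau_1$ and $N$ prescribed in Lemma~\ref{lemma: sysID homogeneous}, i.e., that resilient aggregation does not degrade the \texttt{RCSI} contraction rate $\rho=1-\eta$ nor the exponential clustering-consistency guarantee; this amounts to checking that the $(f_j,\lambda)$-resilience inequality of Definition~\ref{def: resilient aggregation} propagates through the gradient recursion without inflating the burn-in. Second, the re-optimized $\sigma_k^2$ now depends on $\lambda$ and $m_j$, so I must verify that the product $\sigma_k^2\tau_k$ stays large enough for the high-probability identification bound to remain valid and for the abort event to stay rare---precisely where the lower bound on $\tau_1$ is consumed. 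As a consistency check, setting $\lambda=0$ and $m_j=M_j$ collapses the schedule, the four $\Omega$-terms, and the final bound verbatim onto Theorem~\ref{theorem: regret homogeneous}, confirming that the prefactors and constants have been tracked correctly.
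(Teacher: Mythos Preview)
Your proposal is correct and follows essentially the same route as the paper: invoke the epoch-wise regret decomposition (the paper's Lemmas~\ref{lem:R1_cluster}--\ref{lem:R3-bound}), plug in the adversarial estimation bound of Lemma~\ref{lemma: sysID adversarial}, balance the statistical and exploration pieces to recover the stated $\sigma_k^2$, and then sum over doubling epochs using $\sum_k(\tau_k-\tau_{k-1})/\sqrt{\tau_{k-1}}\le 2\sqrt{T}$ together with the monotonicity of $\sigma_k^2\tau_k$ to freeze the misclassification exponent at $k=1$. The two ``obstacles'' you flag are handled in the paper exactly as you surmise---Lemma~\ref{lemma: sysID adversarial} is stated under the same $\tau_1,N$ hypotheses as Lemma~\ref{lemma: sysID homogeneous}, and the success-event analysis (Lemma~\ref{lem: probability of success event}) is carried over unchanged.
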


\noindent \textbf{Discussion.} Theorem~\ref{theorem: regret homogeneous} demonstrates that, under intra-cluster homogeneity, multitask adaptive control achieves an $\mathcal{O}(1/\sqrt{M_j})$ improvement in the leading term of the regret compared to the single-system case~\citep{simchowitz2020naive}. This scaling confirms that jointly estimating a common model for identical systems effectively reduces the regret of adaptive LQR control. Corollary~\ref{corollary: regret heterogeneous} extends this result to heterogeneous clusters, where a residual bias term proportional to $\epsilon_{\mathrm{het}}^2$ appears in the regret. This term reflects the unavoidable model mismatch within each cluster, introducing an additional component linear in $T$. When $\epsilon_{\mathrm{het}}$ is sufficiently small, the regret remains dominated by the $\widetilde{\mathcal{O}}(\sqrt{T/M_j})$ term, thus preserving the benefit of collaboration.

Finally, Theorem~\ref{theorem: regret adversarial} demonstrates that the collaboration benefit persists even in the presence of adversarial systems, provided the honest-majority condition holds and the aggregation rule is $(f_j,\lambda)$-resilient. The additional factor $(1+\lambda^2 d_x m_j)$ quantifies the robustness cost introduced by resilient aggregation. As discussed earlier, the resilience coefficient $\lambda$ typically scales as $\mathcal{O}(f_j/m_j)$, thereby preserving the regret reduction under adversarial systems. Taken together, these results highlight the robustness of multitask adaptive control under heterogeneous and adversarial systems.

\subsection{Proof Idea}

The proof of Theorem~\ref{theorem: regret adversarial} follows a standard regret decomposition~\citep{cassel2020logarithmic} into three parts: (i) regret under the success event, when Algorithm~\ref{alg: CE with online clustering} does not abort and Lemma~\ref{lemma: sysID adversarial} holds; (ii) regret under failure; and (iii) regret from the first epoch. We prove that the success event occurs with high probability, at least $1 - T^{-2}$ (see Appendix~\ref{appendix: probability of success}), ensuring that (i) dominates.  

Under success, the regret scales as $\mathcal{O}\big((\tau_k - \tau_{k-1})(\|\widehat\Theta^{(i)}_k - \Theta^{(i)}_\star\|_F^2 + \sigma_k^2)\big)$, where estimation errors propagate via the Lyapunov bound on $P^{(i)}_{\star}$, linking model inaccuracy to cost difference~\citep[Theorem~3]{simchowitz2020naive}. Balancing exploration ($\sigma_k$) and exploitation ($\tau_k$) in the error bound of Lemma~\ref{lemma: sysID adversarial}, and summing over exponentially growing epochs, yields the dominant $\widetilde{\mathcal{O}}(\sqrt{T/m_j})$ term, with additive effects from resilient aggregation, heterogeneity, and clustering misclassification. The $\mathcal{O}((\log T)^2)$ term arises from the initial epoch. The regret under failure is negligible.

\section{Numerical Validation} \label{sec:numerics}

We consider multiple unicycle robots with details on the dynamics and implementation\footnote{Code available at \url{https://github.com/jd-anderson/multi_task_adaptive_control}} deferred to Appendix \ref{appendix: additional numerics}.  Figure~\ref{fig:six_panel_results} summarizes the performance of Algorithm~\ref{alg: CE with online clustering} across six panels, highlighting the effects of collaboration, heterogeneity, clustering accuracy, and adversarial behavior. In the top-left panel, homogeneous clusters show decreasing regret with more systems per cluster, matching the $\widetilde{\mathcal{O}}(\sqrt{T/M_j})$ scaling in Theorem~\ref{theorem: regret homogeneous}. The top-middle panel introduces intra-cluster heterogeneity, where regret increases due to the $\mathcal{O}(\epsilon_{\mathrm{het}}^2 T)$ term (Corollary~\ref{corollary: regret heterogeneous}); estimation accuracy initially improves with collaboration but saturates at a bias floor proportional to $\epsilon_{\mathrm{het}}^2$.

The top-right panel reports clustering accuracy, showing exponentially decaying misclassification with more samples and faster convergence for larger clusters, consistent with the $e^{-C_{\mathrm{mis},2}\sigma_k^2\tau_k}$ rate. The bottom-left panel presents the adversarial setting, where a fraction $\rho_{\mathrm{byz}} = f_j/M_j$ of systems send corrupted updates (see Appendix~\ref{appendix: additional numerics}). Increasing $\rho_{\mathrm{byz}} \in \{10\%, 20\%, 30\%\}$ preserves sublinear regret for $\rho_{\mathrm{byz}} < 0.5$, in line with the $\mathcal{O}(1 + \lambda^2 d_x)$ inflation predicted by Theorem~\ref{theorem: regret adversarial}.

The bottom-middle panel compares two resilient aggregation rules under identical adversarial contamination. We evaluate the trimmed mean and geometric median aggregators, which differ in their resilience coefficients~$\lambda$. Both converge, confirming the $(f_j,\lambda)$-resilience property; consistent with~\cite{allouah2023fixing}, the trimmed mean achieves slightly lower regret due to a smaller~$\lambda$. A broader empirical study across alternative aggregation schemes is left for future work.

The bottom-right panel compares Algorithm~\ref{alg: CE with online clustering} with the multitask representation learning method of~\cite{lee2024regret} (denoted here by \texttt{RepL}). We evaluate both under (i) structurally homogeneous systems, where a shared representation exists, and (ii) a heterogeneous configuration with an added system violating this assumption. While \texttt{RepL} performs well in case (i), it deteriorates sharply without a shared model representation (ii). In contrast, Algorithm \ref{alg: CE with online clustering}, though affected by intra-cluster heterogeneity, confines learning to similar systems and achieves lower regret under heterogeneity.

\begin{figure}
  \centering
  \setlength{\tabcolsep}{2pt}
  \renewcommand{\arraystretch}{0}

  \begin{tabular}{ccc}
    \includegraphics[width=0.3\linewidth]{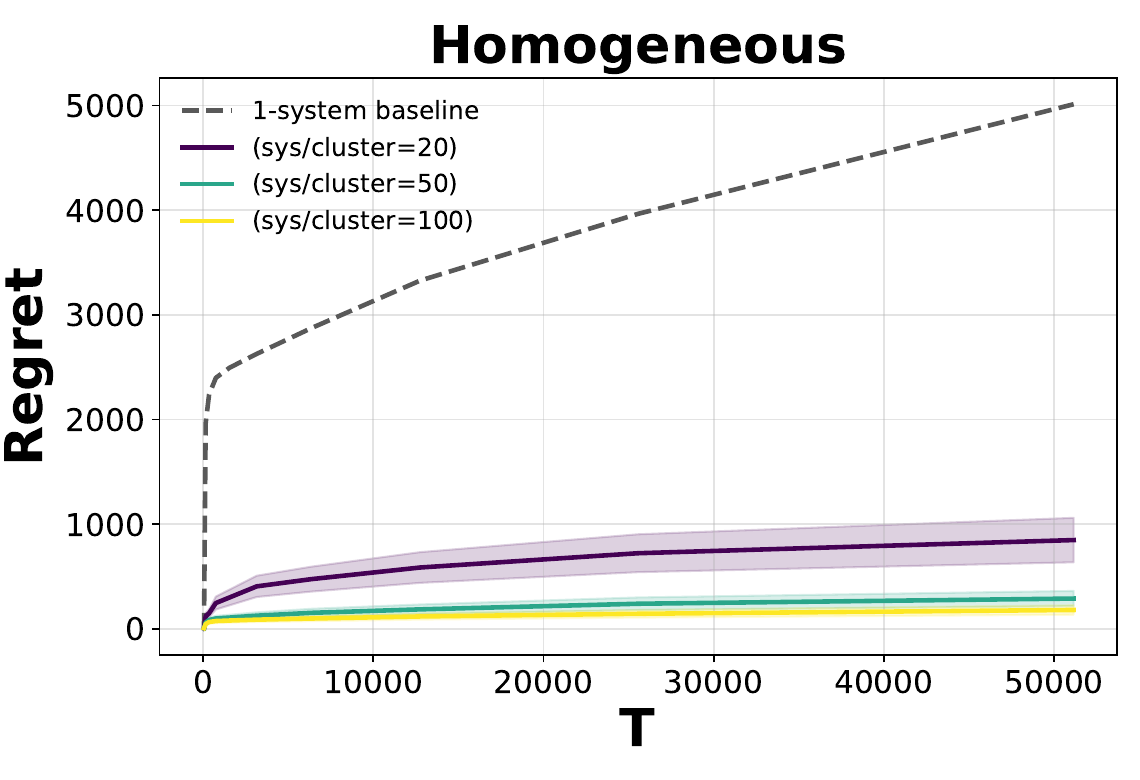} &
    \includegraphics[width=0.3\linewidth]{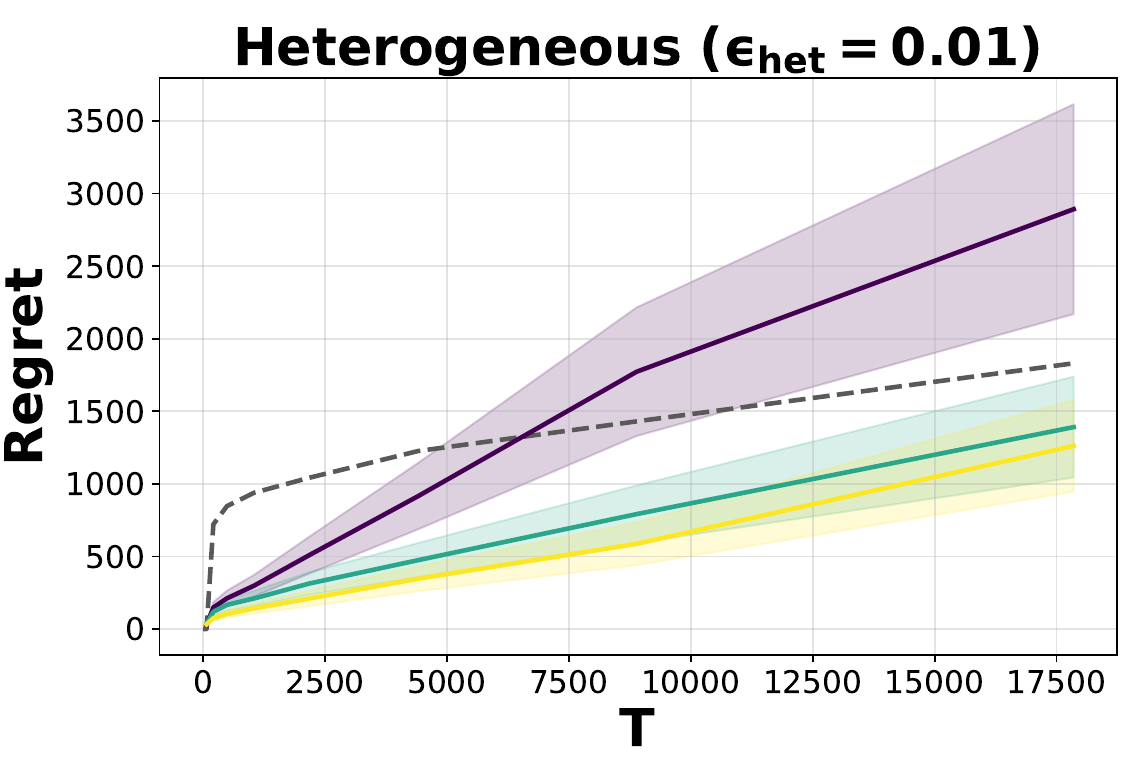} &
    \includegraphics[width=0.3\linewidth]{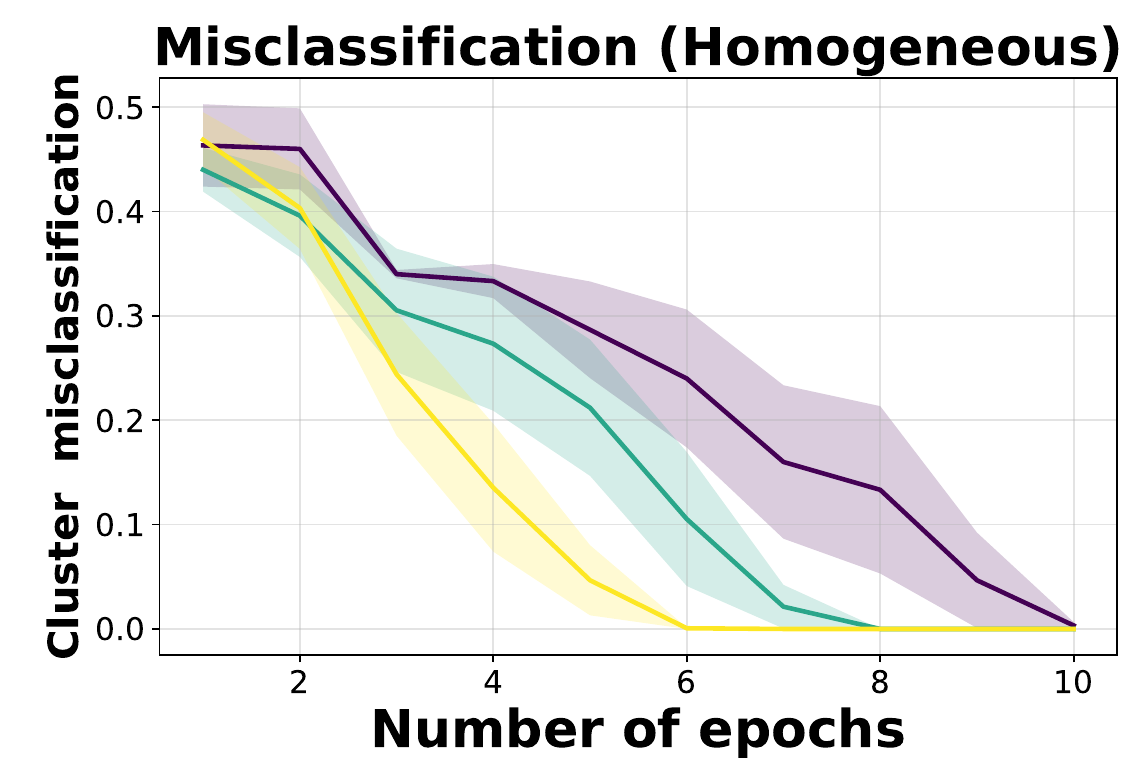} \\
    \includegraphics[width=0.3\linewidth]{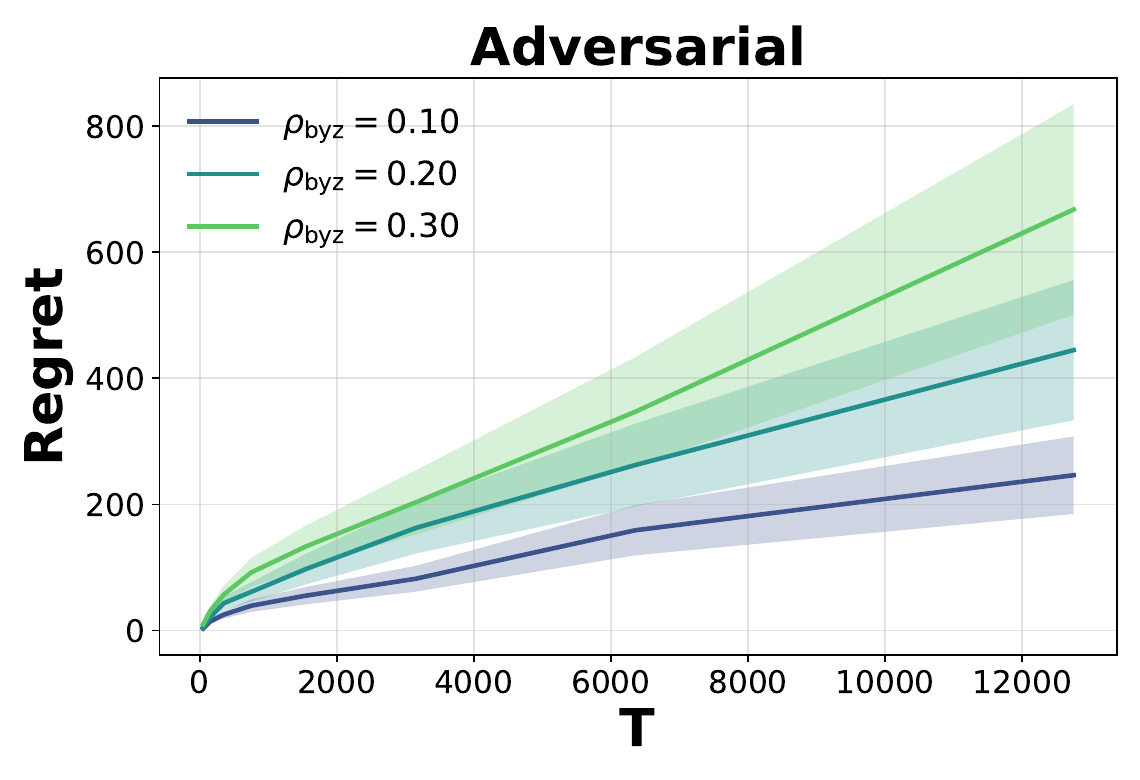} &
    \includegraphics[width=0.3\linewidth]{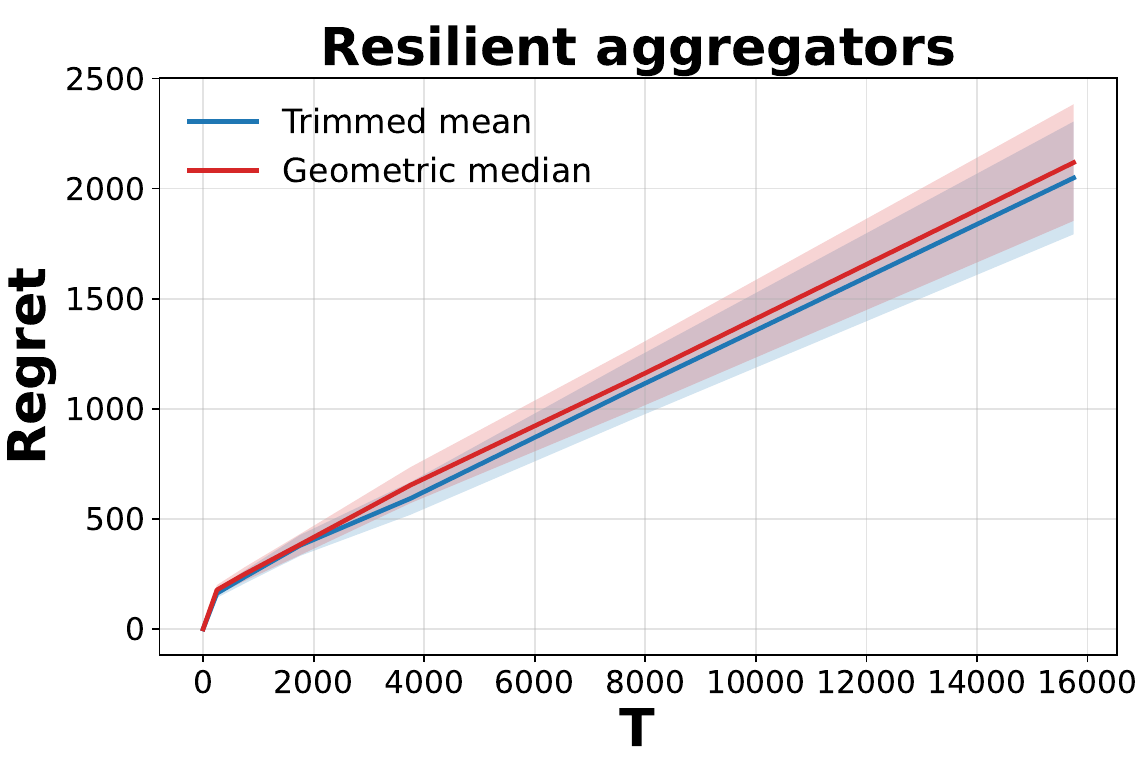} &
    \includegraphics[width=0.3\linewidth]{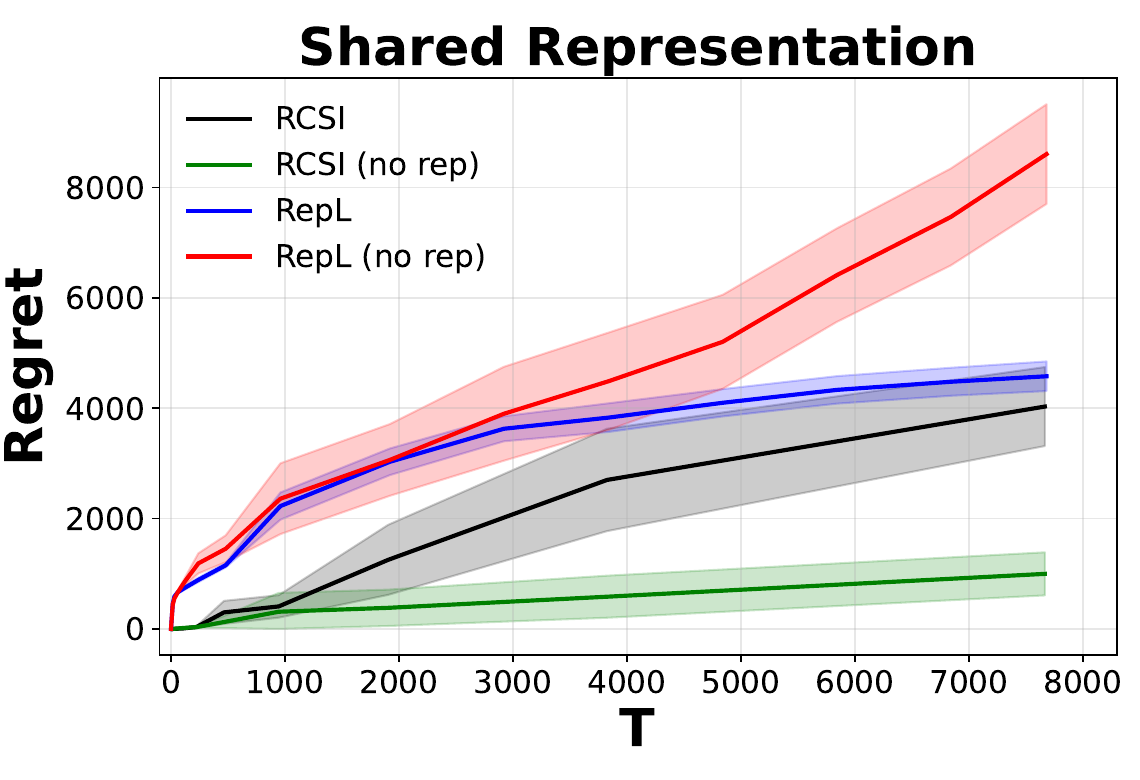} \\
  \end{tabular}
  \caption{Illustration of main results. Top row--(left) homogeneous clusters; (middle) intra-cluster heterogeneity; (right) misclassification rate. Bottom row--(left) adversarial systems; (middle) robust aggregation comparison; (right) comparison with multitask representation learning.}
  \label{fig:six_panel_results}
\end{figure}

\section{Future Work}

This work provides the first analysis of multitask adaptive control under heterogeneous and adversarial systems, opening several promising directions. Our regret bounds reveal a non-vanishing heterogeneity bias $\epsilon_{\mathrm{het}}$, scaling as $\mathcal{O}(\epsilon_{\mathrm{het}}^2 T)$, which may limit collaboration when intra-cluster heterogeneity is large. A natural extension is to integrate representation learning within clusters to mitigate this bias and remove the interplay between heterogeneity and adversarial effects $\mathcal{O}(\lambda^2 \epsilon_{\mathrm{het}}^2)$. Another direction is developing robust clustering methods to handle cases where adversarial systems corrupt cluster identities. Extending this work to nonlinear dynamics and non-quadratic objectives also remains an important avenue for future work.

\section{Acknowledgments} 
Leonardo F. Toso thanks Rafael Pinot and Nirupam Gupta for instructive discussions on adversarial machine learning.  Leonardo F. Toso is funded by the Center for AI and Responsible Financial Innovation (CAIRFI) Fellowship and by the Columbia Presidential Fellowship. James Anderson is partially funded by NSF grants ECCS 2144634 and 2231350 and the Columbia Center of AI Technology in collaboration with Amazon.

\bibliographystyle{unsrtnat}
\bibliography{references}

\newpage 
\appendix
\addcontentsline{toc}{section}{Appendix}
\part{Appendix}
\parttoc 
\newpage

\section{Appendix Roadmap} The appendix is organized as follows.   Appendix~\ref{appendix: additional numerics} provides additional experiments and details on the experimental setup used in Section~\ref{sec:numerics} to validate and illustrate our theoretical guarantees. Appendix~\ref{appendix: regret decomposition} presents the detailed regret decomposition into three components: (i) the contribution corresponding to the success of Algorithm~\ref{alg: CE with online clustering} (i.e., no abortion), (ii) the contribution associated with its failure, and (iii) the term arising from initialization with a suboptimal stabilizing controller. In Appendix~\ref{appendix: auxiliary results}, we revisit key auxiliary results, including matrix concentration inequalities from~\cite{tropp2011user}, regret bounds from~\cite{lee2023nonasymptotic}, and the assumption that the state-input covariates are distributed according to a $\beta$-mixing stationary processes, we refer the reader to \cite{yu1994rates} for the definition of such geometric processes. Appendix~\ref{appendix: multitask sysid} presents error bounds for multitask system identification under three settings: (i) intra-cluster homogeneity, (ii) intra-cluster heterogeneity, and (iii) adversarial systems. Appendix~\ref{appendix: probability of success} establishes that the success event under which Algorithm~\ref{alg: CE with online clustering} does not abort holds with high probability. Finally, Appendix~\ref{appendix: regret analysis} provides the regret analysis, leveraging the estimation error bounds and success-event probability to derive the regret bounds.

\section{Numerical Implementation Details}

\label{appendix: additional numerics}

\noindent \textbf{System dynamics.} All experiments were conducted on discrete and linear time-invariant (LTI) systems of the form
\begin{equation*}
    x^{(i)}_{t+1} = A^{(i)}_\star x^{(i)}_t + B^{(i)}_\star u^{(i)}_t + w^{(i)}_t, \text{ with noise } w^{(i)}_t \sim \mathcal{N}(0, \sigma_w^2 I),
\end{equation*}
where $x_t \in \mathbb{R}^3$ is the system state, $u_t \in \mathbb{R}^2$ is the control input, and $(A^{(i)}_\star, B^{(i)}_\star)$ denote the dynamics of the $i$-th subsystem. Each subsystem is derived from a discrete-time unicycle model with kinematics
$
\dot{p}^{(i)}_x = v^{(i)} \cos\theta^{(i)}, \;
\dot{p}^{(i)}_y = v^{(i)} \sin\theta^{(i)}, \;
\dot{\theta}^{(i)} = \omega^{(i)},
$
where the state is $x^{(i)} = (p^{(i)}_x, p^{(i)}_y, \theta^{(i)})$ and the control input is $u^{(i)} = (v^{(i)}, \omega^{(i)})$. Here, $(p^{(i)}_x,p^{(i)}_y)$ denotes the position, $\theta^{(i)}$ the orientation, $v^{(i)}$ the forward velocity, and $\omega^{(i)}$ the yaw rate of the $i$-th robot. The dynamics are linearized around different nominal operating points $(v_{0,j}, \theta_{0,j}) \in \{(1.0, 0^\circ), (1.0, 45^\circ), (0.8, 90^\circ)\}$ and discretized (with Euler step $\Delta t = 0.1$) to obtain the cluster models:
$$
A_j =
\begin{bmatrix}
1 & 0 & -\Delta t v_{0,j} \sin \theta_{0,j} \\
0 & 1 &  \Delta t v_{0,j} \cos \theta_{0,j} \\
0 & 0 & 1
\end{bmatrix},
\quad
B_j =
\begin{bmatrix}
\Delta t \cos \theta_{0,j} & 0 \\
\Delta t \sin \theta_{0,j} & 0 \\
0 & \Delta t
\end{bmatrix},
$$ 
for each cluster $j \in \{1,2,3\}$, heterogeneous systems are created by adding Gaussian perturbations to the nominal parameters:
\begin{equation*}
    A^{(i)}_\star = A_j + \varepsilon_{ij}, \; B^{(i)}_\star = B_j + \delta_{ij},
\end{equation*}
where $\varepsilon_{ij}$ and $\delta_{ij}$ are elementwise Gaussian perturbations with zero mean and variance $\epsilon_{\mathrm{het}}$.  
Unless otherwise stated, $\epsilon_{\mathrm{het}} = 0.01$, which corresponds to a relatively strong heterogeneity level since the smallest inter-system coefficient difference among the nominal cluster models is on the order of $0.04$. The process noise variance was fixed to $\sigma_w^2 = 0.01$. We set $Q = I_{d_x}$ and $R = I_{d_u}$.\vspace{0.2cm}

\noindent \textbf{Experimental configurations.} In the first three panels (\emph{homogeneous}, \emph{heterogeneous}, and \emph{misclassification}) in Figure \ref{fig:six_panel_results}, we used the above three cluster models and their perturbed systems with $\epsilon_{\mathrm{het}} = 0.01$.  
The homogeneous case is obtained by setting $\epsilon_{\mathrm{het}} = 0$ for all systems.
In the the misclassification experiment we use the same configuration as the homogeneous case.

For the fourth experiment (adversarial setting), we considered the same system clusters and heterogeneity level ($\epsilon_{\mathrm{het}} = 0.01$) with $20$ systems per cluster.  Adversarial systems were randomly selected with corruption ratio $\rho_{\mathrm{byz}} \in \{0, 0.15, 0.35\}$ to evaluate robustness.

In the fifth experiment (aggregation rules comparison), the same cluster models were used with $50$ systems per cluster, and $\epsilon_{\mathrm{het}} = 0.01$, and adversarial ratio $\rho_{\mathrm{byz}} = 0.15$.  
Two aggregation rules were compared: (i) trimmed-mean with $\alpha_{\mathrm{trim}} = \rho_{\mathrm{byz}} + 0.05$, and (ii) geometric-median aggregation. Both methods were implemented using the \texttt{RCSI} update under identical simulation conditions.

The final experiment corresponds to the shared-representation setting.  Here, we used $26$ systems belonging to two clusters (see dataset files \texttt{A\_norep.pkl} and \texttt{B\_norep.pkl}).  
The first 25 systems share a common representation and were treated as belonging to a single cluster, while the $26$-th system was constructed to violate this assumption, representing a structurally distinct system.  We ran both Algorithm~\ref{alg: CE with online clustering} and the multitask representation learning method from~\cite{lee2024regret} (denoted here by \texttt{RepL}) using identical algorithm hyperparameters:
$$
\tau_1 = 15, \; k_{\mathrm{fin}} = 9, \; x_b = 25, \; K_b = 15, \text{ with } \epsilon_{\mathrm{het}} = 0.004.
$$
For both methods, the exploration sequence was geometrically decaying as
$$
    \sigma_{u,k} \in \{0.663, 0.411, 0.124, 0.037, 0.011, 0.0034, 0.0010, 0.0010\}.
$$

\vspace{0.2cm}
\noindent \textbf{Adversarial implementation.}
To simulate adversarial systems, each system $i$ is assigned a binary corruption flag with probability $\rho_{\mathrm{byz}}$.  
Corrupted systems replace their empirical regression statistic $(XZ_i, ZZ_i)$ with an adaptive convex combination:
\begin{equation}
    XZ_i^{(\mathrm{byz})} = (1 - \beta) XZ_i + \beta (\Theta_{j} ZZ_i) + \varepsilon,
\end{equation}
where $\Theta_{j}$ is the parameter matrix of an incorrect cluster, $\beta \in [0,1]$ controls the corruption strength ($\beta=0.6$), and $\varepsilon$ is small Gaussian perturbation ensuring numerical non-degeneracy.  
When $\rho_{\mathrm{byz}} = 0$, the algorithm defaults to \texttt{CSI} aggregation (simple averaging).  
When $\rho_{\mathrm{byz}} > 0$, robust aggregation uses either a trimmed-mean rule (\texttt{alpha\_trim}$=\rho_{\mathrm{byz}}+0.05$) or geometric-median consensus depending on the flag \texttt{use\_geom\_median}.

\section{Regret Decomposition}  \label{appendix: regret decomposition}

To synthesize our regret bounds, we begin by defining the events $\mathcal{E}_{\mathrm{success}}$ and $\mathcal{E}_{\mathrm{failure}}$. These events characterize the high-probability regimes under which the system trajectories remain bounded and the clustered system identification produces sufficiently accurate estimates of the dynamics. Each of the following settings--(i) intra-cluster homogeneity, (ii) intra-cluster heterogeneity, and (iii) adversarial systems--are associated with a distinct estimation event.\\

\noindent \textbf{Success and failure events.}  We first introduce the event which ensures that the state and designed controller norms remain uniformly bounded throughout the epoch length for all epochs:
\begin{align}\label{eq: Ebound}
    \mathcal{E}_{\mathrm{bound}} 
    := \Big\{ \|x_t\|^2 \leq x_b^2 \log T, \ \forall t = 1,\dots,T \Big\} 
    \cap \Big\{ \|{\hat K_k}\| \leq K_b, \ \forall k = 1,\dots,k_{\fin} \Big\}.
\end{align}

We recall that the true and estimated system models are denoted by
$$
\Theta^{(i)}_\star = [A^{(i)}_\star \;\ B^{(i)}_\star], 
\;\
\widehat{\Theta}^{(i)} = [\widehat{A}^{(i)} \;\ \widehat{B}^{(i)}],
$$
and introduce a the estimation events, each corresponding to one of the three principal cases considered in this work:\\

\noindent $\bullet$ \textbf{Case 1: Intra-cluster homogeneity.} When all systems within a cluster share identical dynamics, we denote by $\mathcal{E}_{\mathrm{est},1}^{(k)}$ the event which the estimation error admits the following bound:
\begin{align}\label{eq: Eest1}
    \left\| \widehat{\Theta}^{(i)}_k - \Theta^{(i)}_\star \right\|_F^2 
    \leq \frac{C_{\mathrm{stat}}\sigma^2_w (d^2_x + d_x d_u) \log(1/\delta)}{\sigma^2_k M_{j} \tau_{k}} 
    + C_{\mathrm{mis,1}}\exp\!\big(- C_{\mathrm{mis,2}}\sigma^2_k\tau_k\big),
\end{align}
for all epochs $k \in [k_{\fin}],$ and systems $i \in [m]$. The constants \(C_{\mathrm{stat}}, C_{\mathrm{mis,1}},C_{\mathrm{mis,2}}\) are positive and universal, and they are defined in Lemma~\ref{lemma: homogeneous-regression-error}, along with their derivations.\\

\noindent $\bullet$ \textbf{Case 2: Intra-cluster heterogeneity.} When systems within a cluster are similar but not identical with bounded heterogeneity $\epsilon_{\mathrm{het}}$, we denote by $\mathcal{E}_{\mathrm{est},2}^{(k)}$ the event when the estimation error satisfies:
\begin{align}\label{eq: Eest2} 
    \left\| \widehat{\Theta}^{(i)}_k - \Theta^{(i)}_\star  \right\|_F^2 
    \leq \frac{C_{\mathrm{stat}}\sigma^2_w (d^2_x + d_x d_u) \log(1/\delta)}{\sigma^2_k M_{j} \tau_{k}} 
    + C_{\mathrm{mis,1}}\exp\!\big(- C_{\mathrm{mis,2}}\sigma^2_k\tau_k\big) 
    +C_\textit{het} \epsilon_{\textit{het}}^2,
\end{align}
for all epochs $k \in [k_{\fin}],$ and systems $i \in [m]$. Here $\epsilon_{\mathrm{het}}$ denotes the maximum heterogeneity level across the systems inside the clusters, quantifying the worst-case intra-cluster deviation from the nominal dynamics (see Assumption \ref{assumption: heterogeneity}).\\

\noindent $\bullet$ \textbf{Case 3: Adversarial systems.} When clusters contain both heterogeneous and adversarial systems, we denote by $\mathcal{E}_{\mathrm{est},3}^{(k)}$ the event when the estimation error is bounded as follows:
\begin{align}\label{eq: Eest3}
    \left\| \widehat{\Theta}^{(i)}_k - \Theta^{(i)}_\star  \right\|_F^2 
    &\leq \frac{ C_{\mathrm{stat}}\sigma^2_w (d^2_x + d_x d_u)\log(1/\delta)}{\sigma^2_k \tau_k}\left( \frac{1}{m_{\hat{j}}} + \lambda^2 d_x\right) + C_{\mathrm{het}}(1+\lambda)^2 \epsilon^2_{\mathrm{het}}\notag\\
    &+  C_{\mathrm{mis,1}} \exp(-C_{\mathrm{mis,2}}\sigma^2_k \tau_k),
\end{align}
for all epochs $k \in [k_{\fin}],$ and all systems $i \in [M]$. The latter case will be elaborated in a subsequent section, where we integrate the effects of both heterogeneity and adversarial systems into our system identification guarantees.

The success event in each regime is then defined by
$$
    \mathcal{E}_{\mathrm{success}}^{(j)} := \mathcal{E}_{\mathrm{bound}} \cap \mathcal{E}_{\mathrm{est},j}, 
    \text{ for } j = 1,2,3,
$$
and corresponding failure events $\mathcal{E}_{\mathrm{failure}}^{(j)} := \big(\mathcal{E}_{\mathrm{success}}^{(j)}\big)^c$.\\

\noindent \textbf{Regret decomposition.}  
With these events in place, as in \cite{cassel2020logarithmic}, the expected regret for system $i \in [m]$ decomposes as follows:
\begin{align}\label{eq: regret decomposition appendix}
    \E \left[\mathcal{R}_T^{(i)}\right]
    &= R_1^{(i)} + R_2^{(i)} + R_3^{(i)} - T \mathcal{J}^{(i)}(K_\star^{(i)}), \text{ where} \notag \\
    R_1^{(i)} &= \E \left[ \mathbf{1}\left(\mathcal{E}_{\mathrm{success}}^{(j)}\right) 
        \sum_{k=2}^{k_\fin} J_k^{(i)} \right], \;\ R_2^{(i)} = \E \left[ \mathbf{1}\left(\mathcal{E}_{\mathrm{failure}}^{(j)}\right) 
        \sum_{t=\tau_1+1}^{T} c_t^{(i)} \right], \;\ R_3^{(i)} = \E \left[ \sum_{t=1}^{\tau_1} c_t^{(i)} \right],
\end{align}
with $J_k^{(i)} := \sum_{t=\tau_k}^{\tau_{k+1}-1} c_t^{(i)}$ denoting the cost for epoch $k \in [k_\fin]$. The interpretation for each term is standard:  
\begin{enumerate}
\item $R_1^{(i)}$ quantifies the regret under the success event.  
\item $R_2^{(i)}$ captures the cost contribution under the failure event.  
\item $R_3^{(i)}$ accounts for the one-time exploration cost in the initial epoch.  
\end{enumerate}

A key step for synthesizing the regret bounds is to establish that $\mathbb{P}(\mathcal{E}_{\mathrm{success}}^{(j)}) \geq 1 - T^{-2}$ in each setting $j=1,2,3$ (see Appendix \ref{appendix: probability of success}). This renders the contribution of $R_2^{(i)}$ negligible, and ensures that the dominant terms are $R_1^{(i)}$ and $R_3^{(i)}$, which we bound explicitly in terms of noise variance, heterogeneity, and adversarial contamination.

\section{Auxiliary Results} \label{appendix: auxiliary results}

We now turn to the auxiliary concentration inequalities and matrix norm bounds that play a central role in analyzing our robust multitask adaptive LQR design. In particular, we invoke the matrix Hoeffding's inequality to derive high-probability bounds on the estimation error under sub-Gaussian noise. We further provide derivations of the error decomposition into variance, misclassification bias, and heterogeneity-induced components.  

These results are fundamental for establishing the design rationale and theoretical guarantees of the clustering-based adaptive estimation approach. The proofs are structured around a unified estimator framework and rely on matrix concentration techniques along with the assumptions on the data distribution and honest-majority condition for the adversarial setting.

\begin{lemma}[Matrix Hoeffding \citep{tropp2011user}]
\label{lem:matrix-hoeffding}
Let $\{X_\ell\}_{\ell=1}^m$ be independent, random and symmetric matrices in $\mathbb{R}^{d\times d}$ with
$\mathbb{E}[X_\ell]=0$ and almost-sure bounds $X_\ell^2 \preceq B_\ell^2$ for fixed symmetric matrix $B_\ell \succeq 0$.
Define $\sigma^2 := \left\|\sum_{\ell=1}^m B_\ell^2\right\|$.
Then, for all $t\ge 0$,
\begin{align}\label{Hoeffding ineq - square matrix}
\mathbb{P}\left( \lambda_{\max}\left(\sum_{\ell=1}^m X_\ell\right) \geq t \right) \leq 
d \exp \left( - \frac{t^2}{8 \sigma^2} \right). 
\end{align}

In addition, for general rectangular $\left\{M_\ell\right\}_{\ell=1}^m \subset \mathbb{R}^{d_1 \times d_2}$, we define $X_\ell:=\left[\begin{array}{cc}0 & M_\ell \\ M_\ell^{\top} & 0\end{array}\right]$ and assume each $M_\ell$ satisfies $\mathbb{E}\left[M_\ell\right]=0$. Then for all $t \geq 0$,

\begin{align}\label{Hoeffding ineq - rectangular matrix}
\mathbb{P}\left[\sigma_{\max }\left(\sum_{t=1}^T M_\ell\right) \geq t\right] \leq\left(d_1+d_2\right) \exp \left(-\frac{t^2}{8 \sigma^2}\right).
\end{align}
\end{lemma}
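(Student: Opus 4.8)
The plan is to follow the matrix Laplace-transform (Chernoff) methodology underlying \citep{tropp2011user}. I would first treat the symmetric case \eqref{Hoeffding ineq - square matrix} and then reduce the rectangular case \eqref{Hoeffding ineq - rectangular matrix} to it via Hermitian dilation. Write $S := \sum_{\ell=1}^m X_\ell$. The entry point is an exponential Markov bound for the maximum eigenvalue: for any $\theta > 0$, since $e^{\theta S} \succeq 0$ and $\lambda_{\max}(e^{\theta S}) = e^{\theta \lambda_{\max}(S)} \le \operatorname{tr} e^{\theta S}$, Markov's inequality gives
\begin{align*}
\mathbb{P}\left(\lambda_{\max}(S) \ge t\right) \le e^{-\theta t}\, \mathbb{E}\left[\operatorname{tr} e^{\theta S}\right].
\end{align*}
It then remains to control the trace moment-generating function $\mathbb{E}[\operatorname{tr} e^{\theta S}]$ and optimize over $\theta$.

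The central step is subadditivity of the matrix cumulant generating function. Invoking Lieb's concavity theorem (that $A \mapsto \operatorname{tr}\exp(H + \log A)$ is concave on positive-definite $A$), together with Jensen's inequality and independence of the $X_\ell$, one peels off the summands one at a time to obtain
\begin{align*}
\mathbb{E}\left[\operatorname{tr} e^{\theta S}\right] \le \operatorname{tr}\exp\left(\sum_{\ell=1}^m \log \mathbb{E}\left[e^{\theta X_\ell}\right]\right).
\end{align*}
The second ingredient is a per-summand Hoeffding bound: since $\mathbb{E}[X_\ell] = 0$ and $X_\ell^2 \preceq B_\ell^2$ almost surely, a scalar-to-matrix transfer of Hoeffding's lemma yields the semidefinite estimate $\log \mathbb{E}[e^{\theta X_\ell}] \preceq g(\theta)\,B_\ell^2$ for an explicit $g(\theta) = O(\theta^2)$. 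Using weak monotonicity of the trace exponential under the PSD order, $\big\|\sum_\ell B_\ell^2\big\| = \sigma^2$, and the elementary bound $\operatorname{tr} e^{H} \le d\, e^{\lambda_{\max}(H)}$ for symmetric $H$, this gives $\mathbb{E}[\operatorname{tr} e^{\theta S}] \le d\, e^{g(\theta)\sigma^2}$. Combining with the Markov step and minimizing $e^{-\theta t + g(\theta)\sigma^2}$ over $\theta > 0$ produces the sub-Gaussian tail $d\, e^{-t^2/(8\sigma^2)}$ of \eqref{Hoeffding ineq - square matrix}.

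For the rectangular case I would introduce the Hermitian dilation $X_\ell = \begin{bmatrix} 0 & M_\ell \\ M_\ell^\top & 0\end{bmatrix}$, which is symmetric, satisfies $\mathbb{E}[X_\ell] = 0$, and obeys $\lambda_{\max}\big(\sum_\ell X_\ell\big) = \sigma_{\max}\big(\sum_\ell M_\ell\big)$ because the dilation is linear and sends singular values to eigenvalues. Since $X_\ell^2 = \operatorname{diag}(M_\ell M_\ell^\top,\, M_\ell^\top M_\ell)$, the variance proxy transfers directly, and applying \eqref{Hoeffding ineq - square matrix} in ambient dimension $d_1 + d_2$ immediately yields \eqref{Hoeffding ineq - rectangular matrix} with the $(d_1 + d_2)$ prefactor.

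The main obstacle is the subadditivity of the matrix CGF: unlike the scalar case, $e^{A+B} \ne e^A e^B$ when $A$ and $B$ do not commute, so one cannot simply factor the expectation of the matrix exponential across independent summands. This is precisely where Lieb's concavity theorem is indispensable, and establishing (or carefully invoking) it — together with the scalar-to-matrix transfer of Hoeffding's lemma under the constraint $X_\ell^2 \preceq B_\ell^2$, which demands a spectral functional-calculus argument rather than a naive term-by-term comparison — constitutes the technical heart of the proof.
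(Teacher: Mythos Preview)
Your proposal is correct and follows essentially the same approach as the paper's sketch: the paper simply defers to \cite{tropp2011user} and summarizes the argument as a Laplace-transform method combined with the per-summand CGF bound $\log\mathbb{E}\exp(\theta X_\ell)\preceq \tfrac{\theta^2}{2} B_\ell^2$ and a ``matrix Chernoff bound plus a trace trick,'' which is precisely what you expand upon via Lieb's concavity, the trace-exponential bound, and Hermitian dilation for the rectangular case.
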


\begin{proof} The proof for this lemma is detailed in \cite{tropp2011user}. The idea is to apply a Laplace-transform method with the fact that $\log\mathbb{E}\exp(\theta X_\ell)\preceq \frac{\theta^2}{2} B_\ell^2$ for $\theta$ small enough when $X_\ell^2\preceq B_\ell^2$ and finish with a matrix Chernoff bound plus a trace trick.
\end{proof}

\begin{lemma}[Bound on $R^{(i)}_1$ from \cite{lee2023nonasymptotic}]\label{lem:R1_cluster}
Let the $\mathcal{E}^{(j)}_\text{success} = \mathcal{E}_{\mathrm{bound}} \cap \mathcal{E}_{\mathrm{est},j} $ for the settings $j = 1,2,3$. Fix a system $i \in [m]$. Then, the $R^{(i)}_1$ as defined in \ref{eq: regret decomposition appendix} is bounded by: 

\begin{align}
R^{(i)}_1 \leq \sum_{k=2}^{k_{\mathrm{fin}}}\Bigg(
&\E\Big[\mathbf{1}(\mathcal{E}_{\mathrm{est},j}^{(k-1)})\,142\,(\tau_k-\tau_{k-1})\,\|P^{(i)}_\star\|^{8}\,\big\|\,[\widehat A^{(i)}_{k-1}\;\widehat B^{(i)}_{k-1}]-[A^{(i)}_\star\;B^{(i)}_\star]\,\big\|_F^2\Big] \notag\\
&+(\tau_k-\tau_{k-1})\,J^{(i)}(K^{(i)}_\star)+4(\tau_k-\tau_{k-1})\,d_u\,\|P^{(i)}_\star\|\,\sigma_k^2\,\Psi^{(i)2}_{B} \notag\\
&+2x_b^2\log T\,\|P^{(i)}_\star\|
\Bigg),
\end{align}
where $\Psi^{(i)}_{B}=\max\{1,\|B^{(i)}_\star\|\}$.
\end{lemma}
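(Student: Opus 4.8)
The plan is to control each epoch's cost through a Lyapunov telescoping identity and then convert the excess steady-state cost into the estimation error via the certainty-equivalence perturbation bound of \cite{simchowitz2020naive}. First I would note that on $\mathcal{E}_{\mathrm{success}}^{(j)} = \mathcal{E}_{\mathrm{bound}} \cap \mathcal{E}_{\mathrm{est},j}$ the algorithm never aborts, so throughout the $k$-th epoch (length $\tau_k - \tau_{k-1}$) each honest system plays the fixed controller $\hat K_k^{(i)} = K(\widehat\Theta^{(i)}_{k-1})$ with exploration $\sigma_k g_t^{(i)}$, and the dynamics read $x^{(i)}_{t+1} = A_{\mathrm{cl}} x^{(i)}_t + \sigma_k B_\star^{(i)} g^{(i)}_t + w^{(i)}_t$ with $A_{\mathrm{cl}} = A_\star^{(i)} + B_\star^{(i)}\hat K_k^{(i)}$. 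Let $P := P^{(i)}_{\hat K_k} = \texttt{dlyap}(A_{\mathrm{cl}}, Q + \hat K_k^{(i)\top} R \hat K_k^{(i)})$ be the value matrix of the \emph{true} closed loop and $\Sigma_w := \E[w^{(i)}_t w^{(i)\top}_t]$. The defining Lyapunov identity then yields, for each step,
\[
c_t^{(i)} = x^{(i)\top}_t P x^{(i)}_t - \E\!\big[x^{(i)\top}_{t+1} P x^{(i)}_{t+1}\mid x^{(i)}_t\big] + \sigma_k^2 \operatorname{tr}\!\big(B_\star^{(i)\top} P B_\star^{(i)}\big) + \operatorname{tr}(P \Sigma_w),
\]
where the zero-mean, independent noise makes the cross terms vanish in conditional expectation.

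Summing over the epoch and taking expectations telescopes the first two terms, leaving the boundary difference $\E[x^{(i)\top}_{\tau_{k-1}} P x^{(i)}_{\tau_{k-1}}] - \E[x^{(i)\top}_{\tau_k} P x^{(i)}_{\tau_k}]$ plus $(\tau_k - \tau_{k-1})$ copies of the per-step steady-state and exploration costs. I would drop the nonnegative $-\E[x^{(i)\top}_{\tau_k} P x^{(i)}_{\tau_k}]$ term and bound the remaining boundary term on $\mathcal{E}_{\mathrm{bound}}$ by $\|P\|\, x_b^2 \log T \le 2 x_b^2 \log T \|P^{(i)}_\star\|$, using the comparison $\|P^{(i)}_{\hat K_k}\| \lesssim \|P^{(i)}_\star\|$ that holds once the estimate is accurate. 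Likewise the exploration cost obeys $\operatorname{tr}(B_\star^{(i)\top} P B_\star^{(i)}) \le \|P\|\,\|B_\star^{(i)}\|_F^2 \le d_u \|P\| \Psi^{(i)2}_{B}$, giving the term $4(\tau_k - \tau_{k-1}) d_u \|P^{(i)}_\star\| \sigma_k^2 \Psi^{(i)2}_{B}$ after the same $\|P\| \lesssim \|P^{(i)}_\star\|$ substitution.

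The remaining per-step steady-state cost is exactly $\operatorname{tr}(P \Sigma_w) = J^{(i)}(\hat K_k^{(i)})$. The crucial step is to invoke the CE suboptimality bound of \cite{simchowitz2020naive} (their Theorem~3): on $\mathcal{E}_{\mathrm{est},j}^{(k-1)}$ the estimation error is small enough that $\hat K_k^{(i)}$ stabilizes the true system and the Riccati map admits a quadratic expansion, so that $J^{(i)}(\hat K_k^{(i)}) - J^{(i)}(K_\star^{(i)}) \le 142 \|P^{(i)}_\star\|^8 \|\widehat\Theta^{(i)}_{k-1} - \Theta^{(i)}_\star\|_F^2$. Substituting this splits the steady-state contribution into the $(\tau_k - \tau_{k-1}) J^{(i)}(K_\star^{(i)})$ term and the estimation-error term carrying the indicator $\mathbf{1}(\mathcal{E}_{\mathrm{est},j}^{(k-1)})$; summing over $k = 2, \dots, k_\fin$ yields the claimed bound.

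I expect the main obstacle to be the application of the Simchowitz--Foster perturbation bound: one must verify, uniformly over epochs and on $\mathcal{E}_{\mathrm{est},j}$, that the estimation error falls below the problem-dependent radius on which the DARE solution map is smooth (guaranteeing both that $\hat K_k^{(i)}$ is stabilizing and that the constant $142\|P^{(i)}_\star\|^8$ is valid), and to propagate the comparison $\|P^{(i)}_{\hat K_k}\| \lesssim \|P^{(i)}_\star\|$ used in the boundary and exploration estimates. These smoothness and stability conditions are precisely what the estimation event $\mathcal{E}_{\mathrm{est},j}$ together with the norm caps $x_b, K_b$ are designed to furnish, so the remaining work is bookkeeping of constants rather than new ideas.
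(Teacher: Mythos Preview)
The paper does not prove this lemma: it is listed in Appendix~\ref{appendix: auxiliary results} as an auxiliary result imported directly from \cite{lee2023nonasymptotic}, with no derivation given. Your sketch is therefore not being compared against a proof in this paper but rather against the argument in the cited reference.

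That said, your outline is the standard route and matches what one finds in \cite{lee2023nonasymptotic} and \cite{simchowitz2020naive}: Lyapunov telescoping over the epoch to isolate the steady-state cost $J^{(i)}(\hat K_k^{(i)})$, a boundary term, and an exploration-noise trace; then invoking the certainty-equivalence perturbation bound $J^{(i)}(\hat K_k^{(i)}) - J^{(i)}(K_\star^{(i)}) \le 142\|P_\star^{(i)}\|^8\|\widehat\Theta_{k-1}^{(i)}-\Theta_\star^{(i)}\|_F^2$ to produce the estimation-error term. The exploration bound $\operatorname{tr}(B_\star^{(i)\top}P B_\star^{(i)}) \le d_u\|P\|\,\Psi_B^{(i)2}$ (via the $d_u\times d_u$ trace), the boundary estimate $\|P\|x_b^2\log T$, and the closeness $\|P_{\hat K_k}^{(i)}\|\le 2\|P_\star^{(i)}\|$ on the estimation event are all handled correctly. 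The one point that deserves a sentence more care is that the indicator in the final bound is $\mathbf{1}(\mathcal{E}_{\mathrm{est},j}^{(k-1)})$ alone, whereas your derivation works under $\mathcal{E}_{\mathrm{success}}^{(j)}=\mathcal{E}_{\mathrm{bound}}\cap\mathcal{E}_{\mathrm{est},j}$; dropping the $\mathcal{E}_{\mathrm{bound}}$ indicator on the estimation-error term is harmless since that term is already nonnegative, but the boundary term $2x_b^2\log T\,\|P_\star^{(i)}\|$ really does use $\mathcal{E}_{\mathrm{bound}}$ and appears without an indicator in the final statement because it is deterministic once the algorithm has not aborted. This is exactly the bookkeeping you flagged, so there is no genuine gap.
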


\begin{lemma}[Bound on $R^{(i)}_2$ adapted from~\cite{lee2023nonasymptotic}]
\label{lem:R2-bound}
Fix a system $i \in [m]$. Then, in the contribution of the failure event to the regret is bounded as:
\begin{align}
R^{(i)}_2 &\leq 
T^{-1} \big( \|Q\| + 2K_b^2 \big) x_b^2 \log T
+ T^{-1} J^{(i)}(K^{(i)}_0) 
+ 24  \|P^{(i)}_{K^{(i)}_0}\| \Psi^{(i)2}_{B} (d_x + d_u) \sigma^2_w T^{-2} \log(3T) \notag \\
& + 2 T^{-2} \|P^{(i)}_{K^{(i)}_0}\| \, \|\Theta^{(i)}_\star\|_F^2 \, K_b^2 x_b^2 \log T
+ \sum_{k=1}^{k_{\mathrm{fin}}} 2(\tau_k - \tau_{k-1}) d_u \, \sigma_k^2,
\end{align}
where $K_b$ and $x_b$ denote controller and state norm bounds, and $\sigma_k^2$ is the variance of the exploration noise in epoch $k$.
\end{lemma}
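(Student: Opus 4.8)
The plan is to bound the failure contribution $R^{(i)}_2 = \E[\mathbf{1}(\mathcal{E}_{\mathrm{failure}}^{(j)})\sum_{t=\tau_1+1}^{T} c_t^{(i)}]$ by splitting $\mathcal{E}_{\mathrm{failure}}^{(j)} = (\mathcal{E}_{\mathrm{success}}^{(j)})^c$ into the two qualitatively different ways Algorithm~\ref{alg: CE with online clustering} can fail, and to exploit two facts: that $\mathbb{P}(\mathcal{E}_{\mathrm{failure}}^{(j)}) \leq T^{-2}$ (established in Appendix~\ref{appendix: probability of success}), and that upon any abort the fallback controller $K^{(i)}_0$ is stabilizing, so the post-abort trajectory is a stable noise-driven linear system whose cost is controlled by the Lyapunov certificate $P^{(i)}_{K^{(i)}_0}$. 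Concretely, I would write $\mathcal{E}_{\mathrm{failure}}^{(j)} = \mathcal{E}_{\mathrm{bound}}^c \cup \big(\mathcal{E}_{\mathrm{bound}} \cap \mathcal{E}_{\mathrm{est},j}^c\big)$ and treat the two pieces in turn.

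First I would handle the event $\mathcal{E}_{\mathrm{bound}} \cap \mathcal{E}_{\mathrm{est},j}^c$, on which the norms never exceed their thresholds even though the identification guarantee fails. Here every $\|x_t^{(i)}\|^2 \leq x_b^2\log T$ and $\|\widehat K_k^{(i)}\| \leq K_b$, so the per-step cost obeys $c_t^{(i)} = x_t^{(i)\top}(Q + \widehat K^\top R \widehat K)x_t^{(i)} \leq (\|Q\| + 2K_b^2)\,x_b^2\log T$. Summing over at most $T$ steps and weighting by $\mathbb{P}(\mathcal{E}_{\mathrm{est},j}^c) \leq T^{-2}$ produces the first term $T^{-1}(\|Q\|+2K_b^2)x_b^2\log T$.

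Next I would handle the abort event $\mathcal{E}_{\mathrm{bound}}^c$. Once a threshold is crossed the controller switches to $K^{(i)}_0$ permanently, so for all subsequent times the closed loop is $x^{(i)}_{t+1} = (A^{(i)}_\star + B^{(i)}_\star K^{(i)}_0)x^{(i)}_t + w^{(i)}_t$, a stable recursion. Propagating this recursion and taking expectations against $P^{(i)}_{K^{(i)}_0}$ decomposes the post-abort cost into (i) a stationary part yielding $T^{-1}J^{(i)}(K^{(i)}_0)$, (ii) a noise-variance part that, after a sub-Gaussian concentration bound on the accumulated $\|w^{(i)}_t\|$ over the horizon, contributes the $24\|P^{(i)}_{K^{(i)}_0}\|\Psi^{(i)2}_{B}(d_x+d_u)\sigma_w^2 T^{-2}\log(3T)$ term, and (iii) a transient part seeded by the (possibly large) state at the abort instant, bounded via $\|\Theta^{(i)}_\star\|_F$, $K_b$, and $x_b$, giving the $2T^{-2}\|P^{(i)}_{K^{(i)}_0}\|\|\Theta^{(i)}_\star\|_F^2 K_b^2 x_b^2\log T$ term; each is weighted by the abort probability $\leq T^{-2}$. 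Finally, the injected exploration input $\sigma_k g^{(i)}_t$ contributes an additional per-step cost whose expectation satisfies $\E[\sigma_k^2 g_t^{(i)\top} R g_t^{(i)}] \leq 2 d_u \sigma_k^2$, which summed across epochs gives the last term $\sum_{k=1}^{k_{\mathrm{fin}}} 2(\tau_k-\tau_{k-1})d_u\sigma_k^2$.

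The main obstacle is the abort analysis in the third step: the state at the moment of abort is not uniformly bounded (the abort is triggered precisely because a bound is violated), so one must carefully propagate a worst-case bound on this seed state through the stable closed loop under $K^{(i)}_0$ and combine it with a concentration argument for the accumulated noise, all while keeping the overall contribution multiplied by the $T^{-2}$ failure probability. Tracking the interaction of $P^{(i)}_{K^{(i)}_0}$, $\Psi^{(i)}_{B}$, and the log-horizon factors so as to land on the stated constants, rather than on coarser routine estimates, is the delicate bookkeeping that this lemma, adapted from~\cite{lee2023nonasymptotic}, requires.
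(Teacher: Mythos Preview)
The paper does not actually prove this lemma: it is stated in Appendix~\ref{appendix: auxiliary results} as an auxiliary result ``adapted from~\cite{lee2023nonasymptotic}'' and is invoked without a self-contained derivation. Your sketch is a faithful reconstruction of the standard argument behind such failure-event bounds: split $\mathcal{E}_{\mathrm{failure}}^{(j)}$ into the non-abort branch $\mathcal{E}_{\mathrm{bound}}\cap\mathcal{E}_{\mathrm{est},j}^c$ (where the uniform state/controller caps give a deterministic per-step cost) and the abort branch $\mathcal{E}_{\mathrm{bound}}^c$ (where the fallback $K^{(i)}_0$ drives a stable closed loop analyzed through $P^{(i)}_{K^{(i)}_0}$), then weight each by the $T^{-2}$ failure probability from Lemma~\ref{lem: probability of success event} and add the unavoidable exploration-noise cost $\sum_k 2(\tau_k-\tau_{k-1})d_u\sigma_k^2$. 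This matches the structure of the bound term-by-term and is the approach one would expect from~\cite{lee2023nonasymptotic}; there is nothing to contrast on the paper's side beyond the citation.
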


\begin{lemma}[Bound on $R^{(i)}_3$ from~\cite{lee2023nonasymptotic}]
\label{lem:R3-bound}
The exploration cost during the first epoch with length $\tau_1$ satisfies:
\begin{align}
R^{(i)}_3 \leq  3 \tau_1 \, \max\{ d_x , d_u \} \, \| P^{(i)}_{K^{(i)}_0} \| \, \Psi^{(i)2}_{B}.
\end{align}
\end{lemma}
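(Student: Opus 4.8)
The plan is to bound the total expected cost incurred during the first epoch, where by construction $\hat K_1^{(i)} = K_0^{(i)}$ and the exploration variance is $\sigma_1^2$. Writing $L_0 \triangleq A_\star^{(i)} + B_\star^{(i)} K_0^{(i)}$ for the closed-loop matrix, the state obeys
\[
x_{t+1}^{(i)} = L_0 x_t^{(i)} + \sigma_1 B_\star^{(i)} g_t^{(i)} + w_t^{(i)},
\]
and, by definition of $P^{(i)}_{K^{(i)}_0} = \texttt{dlyap}(L_0, Q + K_0^{(i)\top} R K_0^{(i)})$, the potential $V(x) = x^\top P^{(i)}_{K^{(i)}_0} x$ satisfies the Lyapunov identity $L_0^\top P^{(i)}_{K^{(i)}_0} L_0 = P^{(i)}_{K^{(i)}_0} - (Q + K_0^{(i)\top} R K_0^{(i)})$. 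This is the natural surrogate for the first epoch, since no identification has yet occurred and the trajectory is driven purely by $K_0^{(i)}$, process noise, and exploration.

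First I would compute the one-step expected drift of $V$. Let $\E_t[\cdot]$ denote conditional expectation given the history through time $t$. Using that $g_t^{(i)}, w_t^{(i)}$ are zero-mean and independent of $x_t^{(i)}$, together with the Lyapunov identity, the cross terms vanish and
\[
\E_t\big[V(x_{t+1}^{(i)})\big] = V(x_t^{(i)}) - c_t^{(i)} + \sigma_1^2 \operatorname{tr}\big(B_\star^{(i)\top} P^{(i)}_{K^{(i)}_0} B_\star^{(i)}\big) + \operatorname{tr}\big(P^{(i)}_{K^{(i)}_0} \Sigma_w\big),
\]
where $c_t^{(i)} = x_t^{(i)\top}(Q + K_0^{(i)\top} R K_0^{(i)}) x_t^{(i)}$ and $\Sigma_w \triangleq \E[w_t^{(i)} w_t^{(i)\top}] \preceq \sigma_w^2 I_{d_x}$. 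Taking total expectations and summing over $t = 1, \dots, \tau_1$ telescopes the $V$-terms; discarding the nonnegative terminal term $\E[V(x_{\tau_1+1}^{(i)})] \ge 0$ yields
\[
R_3^{(i)} \le \E\big[V(x_1^{(i)})\big] + \tau_1\Big(\sigma_1^2 \operatorname{tr}\big(B_\star^{(i)\top} P^{(i)}_{K^{(i)}_0} B_\star^{(i)}\big) + \operatorname{tr}\big(P^{(i)}_{K^{(i)}_0} \Sigma_w\big)\Big).
\]

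It then remains to bound each piece. From $B_\star^{(i)\top} P^{(i)}_{K^{(i)}_0} B_\star^{(i)} \preceq \|P^{(i)}_{K^{(i)}_0}\| B_\star^{(i)\top} B_\star^{(i)}$ and $\|B_\star^{(i)}\|_F^2 \le d_u \|B_\star^{(i)}\|^2 \le d_u \Psi^{(i)2}_{B}$, I get $\operatorname{tr}(B_\star^{(i)\top} P^{(i)}_{K^{(i)}_0} B_\star^{(i)}) \le d_u \|P^{(i)}_{K^{(i)}_0}\| \Psi^{(i)2}_{B}$, while $\Sigma_w \preceq \sigma_w^2 I_{d_x}$ gives $\operatorname{tr}(P^{(i)}_{K^{(i)}_0} \Sigma_w) \le \sigma_w^2 d_x \|P^{(i)}_{K^{(i)}_0}\|$. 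Under the zero initialization $x_0^{(i)} = 0$ the initial term reduces to one more copy of the per-step noise term, $\E[V(x_1^{(i)})] = \sigma_1^2 \operatorname{tr}(B_\star^{(i)\top} P^{(i)}_{K^{(i)}_0} B_\star^{(i)}) + \operatorname{tr}(P^{(i)}_{K^{(i)}_0} \Sigma_w)$. Invoking the normalizations $\sigma_1^2 \le 1$ (from the exploration schedule) and $\sigma_w^2 \le 1$, and using $\Psi^{(i)}_{B} \ge 1$ and $\max\{d_x, d_u\}$ to dominate both $d_x$ and $d_u$, I would group the result into three pieces—the two per-step noise terms scaled by $\tau_1$, and the initial term—each bounded by $\tau_1 \max\{d_x, d_u\} \|P^{(i)}_{K^{(i)}_0}\| \Psi^{(i)2}_{B}$ (the last using $\tau_1 \ge 2$), producing the claimed factor $3$.

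The trace inequalities and the telescoping are routine; the one delicate point is landing the clean constant $3$, which requires folding the initial-state term into a per-step noise term and invoking $\sigma_1^2, \sigma_w^2 \le 1$ together with $\tau_1 \ge 2$. I would also emphasize that, unlike $R_1^{(i)}$ and $R_2^{(i)}$, this bound is \emph{unconditional}: it invokes neither the success event $\mathcal{E}_{\mathrm{success}}^{(j)}$, nor any estimation-error guarantee, nor the abort rule, since the first epoch plays the fixed stabilizing controller $K_0^{(i)}$ regardless. Hence no high-probability argument is needed, and the bound holds in expectation.
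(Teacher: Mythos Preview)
The paper does not give its own proof of this lemma; it is quoted directly from \cite{lee2023nonasymptotic} as an auxiliary result. Your Lyapunov-drift telescoping argument is exactly the standard route for such bounds and is structurally correct: the identity $L_0^\top P^{(i)}_{K_0^{(i)}} L_0 = P^{(i)}_{K_0^{(i)}} - (Q + K_0^{(i)\top} R K_0^{(i)})$ gives the one-step drift, the cross terms vanish by independence, and the telescope plus the trace estimates land on the right order. Under $x_0^{(i)}=0$, $\sigma_1^2\le 1$, $\sigma_w^2\le 1$, and $\tau_1\ge 2$ you correctly recover the constant~$3$.

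One small point to be explicit about: in the first epoch the played input is $u_t^{(i)} = K_0^{(i)} x_t^{(i)} + \sigma_1 g_t^{(i)}$, so the true instantaneous cost is $x_t^{(i)\top} Q x_t^{(i)} + u_t^{(i)\top} R u_t^{(i)}$, not $x_t^{(i)\top}(Q+K_0^{(i)\top} R K_0^{(i)})x_t^{(i)}$. Your identification of $c_t^{(i)}$ with the latter drops the exploration contribution, which in expectation adds $\tau_1 \sigma_1^2 \operatorname{tr}(R)$. This is harmless here (it is dominated by $\tau_1 d_u \|P^{(i)}_{K_0^{(i)}}\|\Psi_B^{(i)2}$ under the same normalizations, since one can show $\|R\|\le \|P^{(i)}_{K_0^{(i)}}\|$ via the Lyapunov equation together with $Q\succeq 0$ and stability of $L_0$, or simply assume $\|R\|\le 1$ as is common), but you should account for it explicitly so that $R_3^{(i)}$ bounds the actual incurred cost rather than only its state-feedback part.
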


We also revisit two additional results from \cite{lee2023nonasymptotic}. The first result in Lemma \ref{lem: noise bound} controls the largest norm of the process and exploration noises, $w^{(i)}_t$ and $x^{(i)}_t$, respectively, for any system $i \in [m]$, with high probability. The second result in Lemma \ref{lem: state rollout bounds} bounds the norm of the state and the norm of the solution of the Lyapunov equation $P_K$, for a sufficiently large horizon length. In particular, the bound for the state norm scales with the largest norm of the process and exploration noise. Later, we see that Lemmas \ref{lem: noise bound} and \ref{lem: state rollout bounds} can be used to demonstrate that if the initial epoch length is sufficiently large, the state and controller norm boundedness requirement in Algorithm \ref{alg: CE with online clustering} are satisfied.

\begin{lemma}(\cite{lee2023nonasymptotic})
    \label{lem: noise bound}
    Let $\delta \in (0,1)$. For any system $i \in [m]$, it holds that 
    \begin{align*}
        \max_{0\leq t \leq T-1} \norm{\bmat{w^{(i)}_t \\ g^{(i)}_t}} \leq 4\sigma_w \sqrt{ (d_x+d_u) \log\frac{T}{\delta}},
    \end{align*}
    with probability at least $1-\delta$.
\end{lemma}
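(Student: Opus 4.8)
The plan is to treat the stacked vector $v_t^{(i)} := [w_t^{(i)\top}\ g_t^{(i)\top}]^\top \in \mathbb{R}^{d_x+d_u}$ as a random vector with independent, mean-zero, sub-Gaussian coordinates, establish a single-time-step concentration bound for $\|v_t^{(i)}\|$, and then close with a union bound over the horizon. First I would verify the coordinate-wise sub-Gaussianity: the $d_x$ entries originating from $w_t^{(i)}$ are $\sigma_w^2$-sub-Gaussian by the noise model in \eqref{eq:data generation}, while the $d_u$ entries from $g_t^{(i)}\sim\mathcal{N}(0,I_{d_u})$ are $1$-sub-Gaussian. Using the common variance proxy $\max\{\sigma_w^2,1\}$ (which collapses to $\sigma_w^2$ under the standing normalization $\sigma_w\ge 1$) together with the mutual independence of $w_t^{(i)}$ and $g_t^{(i)}$, the vector $v_t^{(i)}$ has independent sub-Gaussian coordinates with proxy $\sigma_w^2$.

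Second, for a fixed $t$ I would invoke a chi-squared-type (Laurent--Massart) tail bound for the squared norm of a vector with independent $\sigma_w^2$-sub-Gaussian coordinates, giving
$$\|v_t^{(i)}\|^2 \le \sigma_w^2\big((d_x+d_u) + 2\sqrt{(d_x+d_u)\,u} + 2u\big)$$
with probability at least $1-e^{-u}$; equivalently, one may apply Bernstein's inequality to the sum of the sub-exponential squared coordinates. Choosing $u=\log(T/\delta)$ and observing that, whenever $\log(T/\delta)\ge 1$, each of the three terms inside the parentheses is at most $(d_x+d_u)\log(T/\delta)$ up to a small constant, the right-hand side is bounded by $16\,\sigma_w^2(d_x+d_u)\log(T/\delta)$. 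Taking square roots yields the per-step bound $\|v_t^{(i)}\|\le 4\sigma_w\sqrt{(d_x+d_u)\log(T/\delta)}$ with failure probability at most $\delta/T$.

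Finally, a union bound over the $T$ indices $t=0,\dots,T-1$ aggregates the failure probability to at most $\delta$, which establishes the claimed uniform bound with probability at least $1-\delta$. The argument is essentially routine concentration; the only delicate point is the constant bookkeeping, namely collapsing the $\sqrt{d_x+d_u}$ and linear-dimension terms of the Laurent--Massart bound into a single $(d_x+d_u)\log(T/\delta)$ factor (this is precisely where the slack in the constant $4$ is spent) and reconciling the Gaussian versus $\sigma_w^2$ variance proxies. Both are harmless once $\log(T/\delta)\ge 1$, which holds for any nontrivial horizon $T$ and confidence level $\delta$, so I do not anticipate any genuine obstacle beyond this arithmetic.
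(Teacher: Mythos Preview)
The paper does not supply its own proof of this lemma; it simply imports the statement verbatim from \cite{lee2023nonasymptotic} as an auxiliary result, so there is no in-paper argument to compare against. Your approach---stack $w_t^{(i)}$ and $g_t^{(i)}$ into a single $(d_x+d_u)$-dimensional vector with independent sub-Gaussian coordinates, apply a Laurent--Massart/Bernstein tail bound on the squared norm, and union bound over the $T$ time indices---is exactly the standard route and is what the cited reference does as well, so your proposal is correct.

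The one point worth flagging is the variance-proxy reconciliation you already noticed: the $d_u$ coordinates coming from $g_t^{(i)}\sim\mathcal N(0,I_{d_u})$ are $1$-sub-Gaussian, not $\sigma_w^2$-sub-Gaussian, so the stated bound with the prefactor $\sigma_w$ only holds under a normalization such as $\sigma_w\ge 1$. This is a convention of the cited paper rather than a flaw in your argument; if one wants a version valid for all $\sigma_w>0$, the clean fix is to replace $\sigma_w$ by $\max\{\sigma_w,1\}$ in the conclusion, and everything else in your proof goes through unchanged.
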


\begin{lemma}(\cite{lee2023nonasymptotic})
    \label{lem: state rollout bounds}
    Consider the discrete LTI system $x_{s+1} = A_{\star} x_s + B_{\star} u_s +w_s$ with initial state $x_0$. Suppose that we play this system with the control action $u_s = K x_s + \sigma_u g_s$ where $K$ is stabilizing and $\sigma_u \leq 1$, for $t$ time steps. Moreover, suppose that
    \begin{itemize}
        \item $\norm{x_1} \leq 16 \norm{P_{K_0}}^{3/2} \Psi_{B^{\star}} \max_{0\leq t \leq T-1} \norm{\bmat{w_t \\ g_t}}$
        \item $\norm{P_K} \leq 2 \norm{P_{K_0}}$
        \item $t \geq \log_{\left(1 - \frac{1}{\norm{P_K}}\right)}\left(\frac{1}{4\norm{P_K}}\right)+1$.
    \end{itemize}  Then for $s=0, \dots, t-1$, it holds that
    \begin{align*}
        \norm{x_s} \leq 40 \norm{P_{K_0}}^2  \Psi_{B^{\star}} \max_{1 \leq t \leq T} \norm{\bmat{w_t \\ g_t}}.
    \end{align*}
    In addition, we have
    \begin{align*}
        \norm{x_t} \leq 16 \norm{P_{K_0}}^{3/2} \Psi_{B^{\star}}\max_{1\leq t \leq T} \norm{\bmat{w_t \\ g_t}}.
    \end{align*}
\end{lemma}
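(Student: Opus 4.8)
The plan is to analyze the closed-loop recursion directly through a Lyapunov-weighted norm induced by $P_K$. Writing $A_K := A_\star + B_\star K$ for the closed-loop matrix and $v_s := \sigma_u B_\star g_s + w_s$ for the aggregate driving noise, the controlled dynamics become $x_{s+1} = A_K x_s + v_s$. Since $\Psi_{B^\star} = \max\{1,\|B_\star\|\}$ and $\sigma_u \le 1$, the driving noise obeys the uniform bound $\|v_s\| \le \sigma_u \|B_\star\|\|g_s\| + \|w_s\| \le \sqrt{2}\,\Psi_{B^\star} W$, where $W := \max_{0\le t\le T-1}\left\|\begin{bmatrix} w_t\\ g_t\end{bmatrix}\right\|$ is the quantity appearing on the right-hand side of the claim.

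The key step is a contraction estimate in the weighted norm $\|x\|_{P_K} := \sqrt{x^\top P_K x}$. From the defining discrete Lyapunov equation $P_K = A_K^\top P_K A_K + (Q + K^\top R K)$ together with the normalization $Q + K^\top R K \succeq I$, I would first record $P_K \succeq I$ (so that $\|x\| \le \|x\|_{P_K} \le \sqrt{\|P_K\|}\,\|x\|$), and then derive $A_K^\top P_K A_K = P_K - (Q + K^\top R K) \preceq P_K - I \preceq (1 - \tfrac{1}{\|P_K\|})P_K$, using $I \succeq \|P_K\|^{-1}P_K$. This gives $\|A_K x\|_{P_K} \le \gamma \|x\|_{P_K}$ with $\gamma := \sqrt{1 - 1/\|P_K\|}$, and, via $\sqrt{1-z}\le 1 - z/2$, the geometric-sum control $\tfrac{1}{1-\gamma} \le 2\|P_K\|$.

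With the contraction in hand, I would unroll from $x_1$: for $s\ge 1$, $x_s = A_K^{s-1}x_1 + \sum_{\ell=1}^{s-1} A_K^{s-1-\ell}v_\ell$. Passing into the $P_K$-norm and back, the transient term is at most $\gamma^{s-1}\sqrt{\|P_K\|}\,\|x_1\|$ and the noise term at most $\sqrt{\|P_K\|}\cdot\sqrt{2}\Psi_{B^\star}W\cdot\tfrac{1}{1-\gamma} \le 2\sqrt{2}\,\|P_K\|^{3/2}\Psi_{B^\star}W$. For the uniform bound over $s=1,\dots,t-1$ I would use $\gamma^{s-1}\le 1$ with the hypotheses $\|x_1\|\le 16\|P_{K_0}\|^{3/2}\Psi_{B^\star}W$, $\|P_K\|\le 2\|P_{K_0}\|$, and $\|P_{K_0}\|\ge 1$, so the two contributions sum to at most $(16\sqrt{2}+8)\|P_{K_0}\|^2\Psi_{B^\star}W \le 40\|P_{K_0}\|^2\Psi_{B^\star}W$, the first claim (with $s=0$ covered by the hypothesis on the initial state). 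For the sharper terminal bound, the third hypothesis $t \ge \log_{(1-1/\|P_K\|)}\!\big(\tfrac{1}{4\|P_K\|}\big)+1$ forces $\gamma^{t-1}\le \tfrac{1}{2\sqrt{\|P_K\|}}$, which collapses the transient term at $s=t$ to $\tfrac12\|x_1\|\le 8\|P_{K_0}\|^{3/2}\Psi_{B^\star}W$; adding the noise term $\le 8\|P_{K_0}\|^{3/2}\Psi_{B^\star}W$ yields $\|x_t\|\le 16\|P_{K_0}\|^{3/2}\Psi_{B^\star}W$.

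The main obstacle I anticipate is the constant bookkeeping: the crude estimates $\gamma^{s-1}\le 1$ and $\tfrac{1}{1-\gamma}\le 2\|P_K\|$ must still close at the stated factors ($40$ and $16$), which hinges on the factor-of-two slack in $\|P_K\|\le 2\|P_{K_0}\|$ propagating correctly through the $3/2$ and $2$ powers of $\|P_{K_0}\|$, and on $\|P_{K_0}\|\ge 1$ to absorb $\|P_{K_0}\|^{3/2}$ into $\|P_{K_0}\|^2$. A secondary subtlety is the contraction inequality itself, which requires $Q+K^\top R K\succeq I$ (equivalently $P_K\succeq I$); I would either invoke this from the standing LQR normalization ($Q,R\succeq I$) or else carry the explicit $\lambda_{\min}(P_K)$ factors through each norm conversion.
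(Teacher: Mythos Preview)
The paper does not actually prove this lemma; it is quoted verbatim as an auxiliary result from \cite{lee2023nonasymptotic} and used as a black box in Lemma~\ref{lem: probability of success event}. So there is no in-paper proof to compare against.

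That said, your argument is the standard one and is correct. The Lyapunov-weighted norm $\|x\|_{P_K}$ together with $A_K^\top P_K A_K \preceq (1-\|P_K\|^{-1})P_K$ gives exactly the contraction rate $\gamma=\sqrt{1-1/\|P_K\|}$, and your unrolling from $x_1$ with the geometric-sum bound $1/(1-\gamma)\le 2\|P_K\|$ is the right mechanism. Your constant tracking is also tight enough: the transient $16\sqrt{2}\,\|P_{K_0}\|^2$ plus the noise $8\,\|P_{K_0}\|^{3/2}\le 8\,\|P_{K_0}\|^2$ indeed lands below $40\,\|P_{K_0}\|^2$, and for the terminal step the hypothesis on $t$ gives $\gamma^{2(t-1)}\le 1/(4\|P_K\|)$, hence $\gamma^{t-1}\le 1/(2\sqrt{\|P_K\|})$, which is precisely what you need to halve the transient and close at $16\,\|P_{K_0}\|^{3/2}$. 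The only caveat you already flagged---that the contraction requires $Q+K^\top R K\succeq I$, equivalently $P_K\succeq I$---is a standing normalization in the adaptive LQR literature (and in \cite{lee2023nonasymptotic}), even though this paper only states $Q\succeq 0$, $R\succ 0$; invoking that normalization is the correct move.
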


\begin{assume}(Geometric mixing) For any system $i \in [m]$, assume the state evolution -input process $\left\{z_t^{(i)}\right\}_{t \geq 0}$ is a mean-zero stationary $\beta$-mixing process, with stationary covariance $\Sigma^{(i)}_{z_t}$ and $\beta(s)\leq C_\beta \rho^s$, for some $C_\beta \geq 0$ and $\rho \in (0,1)$.
\end{assume}

\section{Multitask System Identification} \label{appendix: multitask sysid}

We now turn our attention to characterizing the error bounds for clustered system identification under three different settings: 
1) without intra-cluster heterogeneity and without adversarial systems; 
2) with intra-cluster heterogeneity but still without adversarial systems; and 3) in the presence of adversarial systems. 
For the first two settings, the aggregation function used in Algorithm~\ref{alg: CE with online clustering} is a simple average, whereas for the third, the adversarial setting, we employ a resilient aggregation scheme as defined in Definition~\ref{def: resilient aggregation}. For completeness, we restate here the key lemmas presented in the main body of the paper.

\subsection{System Identification with Intra-cluster Homogeneity}

We now characterize the system estimation error in epoch $k$ (of length $\tau_k$) for clustered system identification when all systems within a cluster have identical local models.

\begin{lemma}[Estimation error under intra-cluster homogeneity]
\label{lemma: homogeneous-regression-error} Fix a cluster $\mathcal C_j$ with $M_j$ systems.  For each system $i\in \mathcal C_j$, data generation follow \eqref{eq:data generation}. Moreover, suppose Assumption \ref{assumption: initial model estimate} holds. Then, given a small probability of failure $\delta \in (0,1)$, it holds that
\begin{align}
 \left\| \widehat{\Theta}^{(i)} - \Theta^{(i)}_\star \right\|_F^2 \leq \frac{C_{\mathrm{stat}}\sigma^2_w (d^2_x + d_x d_u) \log(1/\delta)}{\sigma^2_k M_{j} \tau_{k}} + C_{\mathrm{mis,1}} \exp\left(-C_{\mathrm{mis,2}} \sigma_k^2 \tau_k\right),   
\end{align}
for any system $i \in \mathcal C_j$, with probability at least $1-\delta-\exp(-C_{\mathrm{mis,2}}\sigma_k^2 \tau_k)$, where $C_{\mathrm{mis,2}} = \mathcal{O}( \Delta_{\min}^2/ \Delta^2_{\max}\sigma^2_w )$ quantifies the decay rate of the misclassification at epoch $k$, and $C_{\mathrm{mis,1}} = \mathcal{O}(\Delta_{\max}^2)$ corresponds is the misclassification constant.
\end{lemma}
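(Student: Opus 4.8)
The plan is to exploit the special structure of the preconditioned gradient used in \texttt{RCSI}. Observe that for each neighbor $\ell$ the update direction telescopes,
$$G_\ell(\widehat{\Theta}^{(i)}) = X^{(\ell)} Z^{(\ell)\top}\big(Z^{(\ell)}Z^{(\ell)\top}\big)^{-1} - \widehat{\Theta}^{(i)} = \widehat{\Theta}^{(\ell)}_{\mathrm{ols}} - \widehat{\Theta}^{(i)},$$
so with simple averaging $F$ the iteration collapses to the affine recursion $\widehat{\Theta}^{(i)} \leftarrow (1-\eta)\widehat{\Theta}^{(i)} + \eta\,\bar{\Theta}_{\mathrm{ols}}$, where $\bar{\Theta}_{\mathrm{ols}} = \tfrac{1}{|\mathcal{C}_{\hat{j}}|}\sum_{\ell \in \mathcal{C}_{\hat{j}}} \widehat{\Theta}^{(\ell)}_{\mathrm{ols}}$ is the mean of the local OLS estimates over the \emph{estimated} cluster. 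Iterating $N$ times gives $\|\widehat{\Theta}^{(i)}_N - \bar{\Theta}_{\mathrm{ols}}\|_F = \rho^N \|\widehat{\Theta}^{(i)}_0 - \bar{\Theta}_{\mathrm{ols}}\|_F$ with $\rho = 1-\eta$, so the stated choice $N \geq \log(C_\alpha\Delta_{\min}\tau_1^2)/\log(1/\rho)$ drives the optimization error below the target statistical precision. This reduces the problem to controlling $\|\bar{\Theta}_{\mathrm{ols}} - \Theta^{(i)}_\star\|_F$.

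I would then split according to whether the clustering step returns the correct group. Define the clustering-success event that every system's $\argmin$ rule recovers its true cluster, i.e. $\hat{j}=j$ for all $i \in \mathcal{C}_j$; on this event $\mathcal{C}_{\hat{j}}$ consists of the $M_j$ identical systems and $\bar{\Theta}_{\mathrm{ols}}$ is the average of $M_j$ independent OLS estimators of the \emph{same} ground truth, while off this event the misclassification contributes the bias $C_{\mathrm{mis,1}}e^{-C_{\mathrm{mis,2}}\sigma_k^2\tau_k}$. To bound the misclassification probability I would compare the residuals $\|X^{(i)} - \widehat{\Theta}_j Z^{(i)}\|^2$ across candidate clusters: by Assumption~\ref{assumption: initial model estimate} the centers lie within $C_\alpha\Delta_{\min}$ of the truth, so the gap between the correct and any incorrect residual is of order $\Delta_{\min}^2\,\lambda_{\min}(Z^{(i)}Z^{(i)\top}) \gtrsim \Delta_{\min}^2\sigma_k^2\tau_k$, while the fluctuation is sub-exponential with variance proxy scaling like $\Delta_{\max}^2\sigma_w^2$. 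A sub-exponential tail bound then yields misclassification probability $\le e^{-C_{\mathrm{mis,2}}\sigma_k^2\tau_k}$ with $C_{\mathrm{mis,2}} = \mathcal{O}(\Delta_{\min}^2/(\Delta_{\max}^2\sigma_w^2))$ and bias magnitude $C_{\mathrm{mis,1}} = \mathcal{O}(\Delta_{\max}^2)$, exactly as claimed.

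On the clustering-success event I would bound the statistical term by writing $\bar{\Theta}_{\mathrm{ols}} - \Theta^{(i)}_\star = \tfrac{1}{M_j}\sum_{\ell} E^{(\ell)} Z^{(\ell)\top}(Z^{(\ell)}Z^{(\ell)\top})^{-1}$ with noise matrix $E^{(\ell)}=[w^{(\ell)}_1\,\cdots\,w^{(\ell)}_{\tau_k}]$. I would first intersect with the high-probability covariance event $Z^{(\ell)}Z^{(\ell)\top}\succeq \tfrac{1}{2}\sigma_k^2\tau_k I$, which follows from persistent excitation $\lambda_{\min}(\Sigma^{(\ell)}_{z})\geq\sigma_u^2=\sigma_k^2$ plus a concentration argument under the $\beta$-mixing Assumption. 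Conditioned on the covariates the summands are independent, mean-zero and sub-Gaussian, so applying the rectangular matrix Hoeffding inequality (Lemma~\ref{lem:matrix-hoeffding}, eq.~\eqref{Hoeffding ineq - rectangular matrix}) to the $M_j$ cross terms, with variance proxy $\|\sum_\ell B_\ell^2\| \sim \sigma_w^2/(\sigma_k^2\tau_k M_j)$, and converting the spectral bound to Frobenius over the $d_x \times d'$ dimensions gives, with probability $1-\delta$,
$$\big\|\bar{\Theta}_{\mathrm{ols}} - \Theta^{(i)}_\star\big\|_F^2 \lesssim \frac{\sigma_w^2(d_x^2+d_xd_u)\log(1/\delta)}{\sigma_k^2 M_j\tau_k},$$
the $1/M_j$ gain arising from averaging $M_j$ independent estimators. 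A union bound over the covariance, concentration and clustering events then produces the claimed bound at confidence $1-\delta-e^{-C_{\mathrm{mis,2}}\sigma_k^2\tau_k}$.

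The hard part will be the misclassification analysis: turning the $\argmin$ residual comparison into a clean sub-exponential tail with the precise $\Delta_{\min}^2/\Delta_{\max}^2$ dependence, while correctly decoupling the randomness of the empirical covariance (which enters through an inverse) from the noise in the matrix-Hoeffding step. The $\beta$-mixing covariance concentration is what makes the conditioning in both the misclassification and the statistical steps rigorous, and keeping the two high-probability events compatible is the most delicate bookkeeping.
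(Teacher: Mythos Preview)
Your proposal is essentially the paper's proof: the same contraction recursion (the paper expands $G_\ell$ via $X^{(\ell)}=\Theta_\star^{(\ell)}Z^{(\ell)}+W^{(\ell)}$ rather than writing $G_\ell=\widehat\Theta^{(\ell)}_{\mathrm{ols}}-\widehat\Theta^{(i)}$, but these are identical), the same matrix-Hoeffding bound on the averaged noise--covariate cross terms with $\beta$-mixing covariance concentration for $\lambda_{\min}(Z^{(\ell)}Z^{(\ell)\top})$, and the same residual-comparison tail bound for the misclassification probability. The one point you correctly flag as delicate---your statement ``conditioned on the covariates the summands are mean-zero'' is not literally true since $w_t^{(\ell)}$ enters $z_{t+1}^{(\ell)}$---is resolved in the paper by a sample split: the inverse covariance $P^{(\ell)}=(Z^{(\ell)}_{\mathcal I_1}Z^{(\ell)\top}_{\mathcal I_1})^{-1}$ is built from a first block $\mathcal I_1$, the cross term $\sum_{t\in\mathcal I_2} w_t^{(\ell)} z_t^{(\ell)\top} P^{(\ell)}$ runs over a disjoint block $\mathcal I_2$, and one conditions on $P^{(\ell)}$ and the past $\mathcal F_{t-1}$ (not on the full covariate path), which preserves $\mathbb{E}[w_t^{(\ell)}\mid\mathcal F_{t-1}]=0$.
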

\begin{proof}
For a given epoch $k$ of length $\tau_k$, each system $i \in [M]$ evolves according to the dynamics in~\eqref{eq:data generation}. Each system uses its collected dataset $\mathcal{D}^{(i)} = \{X^{(i)}, Z^{(i)}\}$, consisting of state–input and next-state trajectories, to estimate its model and identify its cluster membership. Suppose that system $i$ is assigned to cluster $\hat{j}$, where $\mathcal C_{\hat{j}}$ denotes the set of indices of systems belonging to cluster $\hat{j} \in [N_c]$. Each system computes its gradient using its local data and the common model of the cluster to which it is assigned. The gradients are then transmitted to the server, which performs the following aggregation step:
\begin{align*}
    \widehat{\Theta}^{(i)}_{n+1} = \widehat{\Theta}^{(i)}_{n} +  \frac{\eta}{|\mathcal C_{\hat{j}}|} \sum_{\ell \in \mathcal \mathcal C_{\hat{j}}} G_{\ell}(\widehat{\Theta}^{(i)}_{n}),
\end{align*}
where $G_{\ell}(\widehat{\Theta}^{(i)}) = (X^{(\ell)} - \widehat{\Theta}^{(i)} Z^{(\ell)}) Z^{(\ell)\top} (Z^{(\ell)} Z^{(\ell)\top})^{-1}$, as specified in Algorithm~\ref{algorithm: clustered sysid}.  We can further decompose the average over the set of systems assigned to cluster $\hat{j}$ into two parts:  the average over systems that are correctly identified, i.e., those belonging to $\mathcal C_{j} \cap \mathcal C_{\hat{j}}$, 
and the average over systems that are misclassified to cluster $\hat{j}$, i.e., those belonging to $\mathcal C_{j}^{c} \cap \mathcal C_{\hat{j}}$, 
where $\mathcal C_{j}^{c}$ denotes the complement of $\mathcal C_{j}$. 
That is, we obtain
\begin{align*}
    \widehat{\Theta}^{(i)}_{n+1} = \widehat{\Theta}^{(i)}_{n} +  \frac{\eta}{|\mathcal C_{\hat{j}}|} \sum_{\ell \in \mathcal C_j \cap \mathcal C_{\hat{j}} } G_{\ell}(\widehat{\Theta}^{(i)}_{n}) +  \frac{\eta}{|\mathcal C_{\hat{j}}|} \sum_{\ell \in \mathcal C^c_j \cap \mathcal C_{\hat{j}} } G_{\ell}(\widehat{\Theta}^{(i)}_{n}),
\end{align*}
where we can write 
\begin{align*}
   \frac{1}{|\mathcal C_{\hat{j}}|} \sum_{\ell\in \mathcal C_{j} \cap \mathcal C_{\hat{j}}} G_l(\widehat{\Theta}^{(i)}_n)&= \frac{1}{|\mathcal C_{\hat{j}}|} \sum_{\ell\in \mathcal C_{j} \cap \mathcal C_{\hat{j}}} (X^{(\ell)} - \widehat{\Theta}^{(i)}Z^{(\ell)})Z^{(\ell)\top} (Z^{(\ell)}Z^{(\ell)\top})^{-1}\\
    &=\frac{1}{|\mathcal C_{\hat{j}}|} \sum_{\ell\in \mathcal C_{j} \cap \mathcal C_{\hat{j}}} (\Theta^{(i)}_\star Z^{(\ell)} + W^{(\ell)} - \widehat{\Theta}^{(i)}Z^{(\ell)})Z^{(\ell)\top} (Z^{(\ell)}Z^{(\ell)\top})^{-1}\\
    &=\frac{1}{|\mathcal C_{\hat{j}}|} \sum_{\ell\in \mathcal C_{j} \cap \mathcal C_{\hat{j}}} \left(\Theta^{(i)}_\star Z^{(\ell)} + W^{(\ell)} - \widehat{\Theta}^{(i)}Z^{(\ell)}\right)Z^{(\ell)\top} (Z^{(\ell)}Z^{(\ell)\top})^{-1}\\
    &= \Theta^{(i)}_\star - \widehat{\Theta}^{(i)}_n + \frac{1}{|\mathcal C_{\hat{j}}|} \sum_{\ell\in \mathcal C_{j} \cap \mathcal C_{\hat{j}}} W^{(\ell)}Z^{(\ell)\top}(Z^{(\ell)}Z^{(\ell)\top})^{-1},
\end{align*}
and the following expression for the average over the misclassified systems:
\begin{align*}
    \frac{1}{|\mathcal C_{\hat{j}}|} \sum_{\ell \in \mathcal C^c_j \cap \mathcal C_{\hat{j}} } G_{\ell}(\widehat{\Theta}^{(i)}_{n}) = \frac{1}{|\mathcal C_{\hat{j}}|} \sum_{\ell\in \mathcal C^c_{j} \cap \mathcal C_{\hat{j}}} (\Theta_j - \widehat{\Theta}^{(i)}) + \frac{1}{|\mathcal C_{\hat{j}}|} \sum_{\ell\in \mathcal C^c_{j} \cap \mathcal C_{\hat{j}}}  W^{(\ell)}Z^{(\ell)\top} (Z^{(\ell)}Z^{(\ell)\top})^{-1}
\end{align*}
where $\Theta_j$ denotes the common model that system $q$ would use if it were correctly classified to its true cluster, which is different from $\hat{j}$. Therefore, we obtain

\begin{align}\label{eq: error per iteration}
    \widehat{\Theta}^{(i)}_{n+1} = \widehat{\Theta}^{(i)}_{n} + \eta\left(\Theta^{(i)}_\star - \widehat{\Theta}^{(i)}_n\right) + \underbrace{\frac{\eta}{|\mathcal C_{\hat{j}}|} \sum_{\ell\in \mathcal C_{\hat{j}}} W^{(\ell)}Z^{(\ell)\top}(Z^{(\ell)}Z^{(\ell)\top})^{-1}}_{\text{statistical error}} + \underbrace{\frac{\eta}{|\mathcal C_{\hat{j}}|} \sum_{\ell\in \mathcal C^c_{j} \cap \mathcal C_{\hat{j}}} (\Theta_j - \widehat{\Theta}^{(i)})}_{\text{misclassification error}},
\end{align}
where we subtract $\Theta^{(i)}_{\star}$ from both sides to obtain 
\begin{align*}
   \|\widehat{\Theta}^{(i)}_{n+1} - \Theta^{(i)}_{\star}\|_F  &\leq  (1-\eta)\|\widehat{\Theta}^{(i)}_{n} - \Theta^{(i)}_{\star}\|_F + \left\| \frac{\eta}{|\mathcal C_{\hat{j}}|} \sum_{\ell\in \mathcal C_{\hat{j}}} W^{(\ell)}Z^{(\ell)\top}(Z^{(\ell)}Z^{(\ell)\top})^{-1}\right\|_F \hspace{-0.2cm}\\
   &+ \left\|\frac{\eta}{|\mathcal C_{\hat{j}}|} \sum_{\ell\in \mathcal C^c_{j} \cap \mathcal C_{\hat{j}}} (\Theta_j - \widehat{\Theta}_\star^{(i)})\right\|_F\hspace{-0.3cm},
\end{align*}

\noindent $\bullet$ \textbf{Statistical Error:} For each system $\ell\in \mathcal C_{\hat{j}}$, we have trajectory data of length $\tau_k=\tau_{k,1}+\tau_{k,2}$ and the block split
$\mathcal{I}_1$ (size $\tau_{k,1}$) and $\mathcal{I}_2$ (size $\tau_{k,2}$), with $\tau_{k,1},\tau_{k,2}\geq 1$.
We recall that $z^{(\ell)}_t=\begin{bmatrix}x^{(\ell)}_t \\ u^{(\ell)}_t\end{bmatrix}\in\mathbb{R}^{d}$ denotes the state-input vector,
and $w^{(\ell)}_t\in\mathbb{R}^{d_x}$ is the process noise, independent across $t$ and $\ell$, with zero mean. Define
$$
Z^{(\ell)}_{\mathcal{I}_1} := \left[z^{(\ell)}_t \right]_{t\in\mathcal{I}_1}\in\mathbb{R}^{d\times \tau_{k,1}}, \;\ Z^{(\ell)}_{\mathcal{I}_2} := \left[z^{(\ell)}_t \right]_{t\in\mathcal{I}_2}\in\mathbb{R}^{d\times \tau_{k,2}}, \;\
P^{(\ell)} := \left(Z^{(\ell)}_{\mathcal{I}_1} Z^{(\ell)\top}_{\mathcal{I}_1}\right)^{-1}\in\mathbb{R}^{d\times d}.
$$

We also define the (decoupled) cross term on the second block
$$
M^{(\ell)} := \sum_{t\in\mathcal{I}_2} w^{(\ell)}_t z^{(\ell)\top}_t P^{(\ell)} \in \mathbb{R}^{d_x \times d},\;\
\bar{M} \;:=\; \frac{1}{|\mathcal C_{\hat{j}}|}\sum_{\ell \in \mathcal C_{\hat{j}}} M^{(\ell)}.
$$

\noindent \textbf{Sample-split independence:} By construction (i.e., sample-split in two independent blocks), $P^{(\ell)}$ is independent of $\{(w^{(\ell)}_t,z^{(\ell)}_t):t\in\mathcal{I}_2\}$, and thus
$\mathbb{E}[M^{(\ell)}\mid P^{(\ell)}]=0$. We can see this by fixing a system $\ell$ and a time $t\in\mathcal{I}_2$, and by considering the $\sigma$–field
$\mathcal{F}_{t-1}:=\sigma \left(P^{(\ell)},\{z^{(\ell)}_s:s\le t\}\right)$.

Therefore, the sample–split construction, $P^{(\ell)}$ depends only on the block $\mathcal{I}_1$ and is then independent of the pair
$\left(w^{(\ell)}_t,z^{(\ell)}_t\right)$ from block $\mathcal{I}_2$. In the process–noise model,
$w^{(\ell)}_t$ is zero–mean and it only affects $x^{(\ell)}_{t+1}$ (hence $z^{(\ell)}_{t+1}$), but is independent of the present regressor
$z^{(\ell)}_t$ and of $\mathcal{F}_{t-1}$. Consequently,
$$
\mathbb{E}\left[w^{(\ell)}_t\,z^{(\ell)\top}_t P^{(\ell)} \mid \mathcal{F}_{t-1}\right]
= \left(\mathbb{E}[w^{(\ell)}_t \mid \mathcal{F}_{t-1}]\right)\, z^{(\ell)\top}_t P^{(\ell)} = 0,
$$
where we used that $z^{(\ell)\top}_t P^{(\ell)}$ is $\mathcal{F}_{t-1}$–measurable and $\mathbb{E}[w^{(\ell)}_t \mid \mathcal{F}_{t-1}]=0$.
Taking expectations again and summing over $t\in\mathcal{I}_2$ yields
$$
\mathbb{E}\left[M^{(\ell)} \mid P^{(\ell)}\right]
=\sum_{t\in\mathcal{I}_2}
\mathbb{E}\left[w^{(\ell)}_t z^{(\ell)\top}_t P^{(\ell)} \mid  P^{(\ell)}\right]=0.
$$

Note that conditioning must be on $P^{(\ell)}$ (or on the past $\mathcal{F}_{t-1}$). If one conditioned on the entire future block
$Z^{(\ell)}_{\mathcal{I}_2}$, then $w^{(\ell)}_t$ would correlate with $z^{(\ell)}_{t+1}$ and the conditional mean need not be zero.

We proceed by writing
$$
\bar{M}
=\frac{1}{|\mathcal C_{\hat{j}}|}\sum_{\ell \in \mathcal C_{\hat{j}}} \left(\sum_{t\in\mathcal{I}_2} w^{(\ell)}_t\, z^{(\ell)\top}_t\, P^{(\ell)}\right).
$$
We fix a system $\ell$, and by the independence from the sample split we have that $\mathbb{E}[ M^{(\ell)}\mid P^{(\ell)}]=0$. In addition, we for each $t\in\mathcal{I}_2$, we obtain
$$
\|w^{(\ell)}_t z^{(\ell)\top}_t P^{(\ell)}\|
\leq
\|w^{(\ell)}_t\|_2  \|z^{(\ell)}_t\|_2 \|P^{(\ell)}\|,
$$

As we assume $w^{(\ell)}_t$ is mean-zero sub-Gaussian with variance $\sigma_w^2$, and thus $z^{(\ell)}_t$ is mean-zero sub-Gaussian with covariance
$\Sigma^{(\ell)}_{z_t} := \mathbb E[z^{(\ell)}_t z^{(\ell)\top}_t]$ in the sense that
$\| \langle v, z^{(\ell)}_t\rangle \|_{\psi_2} \le C \sqrt{v^\top \Sigma^{(\ell)}_{z_t} v}$ for all $v$ and some constant $C > 0$.
Let $\kappa^{(\ell)} := \lambda_{\max}\left(\Sigma^{(\ell)}_{z_t}\right)$. In particular, by \cite[Lemma 1]{wang2023fedsysid}, we have 
$$
\Sigma^{(\ell)}_{z_t} \triangleq\left[\begin{array}{cc}
\sigma_k^2 G^{(\ell)}_t\left(G^{(\ell)}_t\right)^{\top}+\sigma_w^2 F^{(\ell)}_t\left(F^{(\ell)}_t\right)^{\top} & 0 \\
0 & \sigma_k^2 I_{d_u}
\end{array}\right],
$$
with $G_t \triangleq\left[\begin{array}{llll}
A^{(\ell)t-1} B^{(\ell)} & A^{(\ell)t-2} B^{(\ell)} & \cdots & B^{(\ell)}
\end{array}\right] \text { and } F^{(\ell)}_t \triangleq\left[\begin{array}{llll}
A^{(\ell)t-1} & A^{(\ell)t-2} & \cdots & I_{d_x}
\end{array}\right]$, for any $t\geq 1$. Then, by standard inequalities for sub-Gaussian vectors, on a high-probability event, with probability at least $1-\delta$,
$$
\max_{t\in \mathcal I_2} \|w^{(\ell)}_t\|_2 \leq C_w \sigma_w \sqrt{\log \frac{\tau_{k,2}}{\delta}} := B_w,\;\
\max_{t\in \mathcal I_2} \|z^{(\ell)}_t\|_2 \leq C_z \sqrt{\kappa^{(\ell)}} 
\left(\sqrt{d^\prime} + \sqrt{\log \frac{\tau_{k,2}}{\delta}}\right) := B_z.
$$
for some constants $C_w$ and $C_z$. Hence, we obtain
$$
\|w^{(\ell)}_t z^{(\ell)\top}_t P^{(\ell)}\|
\leq
B_w  B_z \|P^{(\ell)}\|,
$$

We can now apply matrix Hoeffding inequality (Lemma~\ref{lem:matrix-hoeffding}) to
$ M^{(\ell)}$:
$$
\mathcal{X}_\ell= \begin{bmatrix}
0 &  M^{(\ell)}\\
 M^{(\ell)\top} & 0
\end{bmatrix},\;\
\mathbb{E}[\mathcal{X}_\ell\mid P^{(\ell)}]=0,
\;\
\mathcal{X}_\ell^2 \preceq B^2_w B^2_z \|P^{(\ell)}\|^2 I_{2d_x + d_u},
$$

Therefore, we have that
$$
\mathbb{P}\left( \left\|\sum_{\ell \in C_{\hat{j}}} M^{(\ell)}\right\| \geq t \right)
\leq 
(2d_x + d_u) \exp\left( - \frac{t^2}{8\sigma^2} \right),
$$
where $\sigma^2 = B^2_w B^2_z\sum_{\ell \in C_{\hat{j}}} \|P^{(\ell)}\|^2$. By setting $t = \sqrt{\tau_{k}} \sigma_k B_w B_z \sqrt{\sum_{\ell \in C_{\hat{j}}} \|P^{(\ell)}\|^2} \sqrt{8 \log\left(\frac{2d_x + d_u}{\delta}\right)}$, we obtain the following expression:
$$
\left \| \bar{M}\right\| \leq  \frac{\sqrt{\tau_{k}} \sigma_k B_w B_z}{|C_{\hat{j}}|} \sqrt{\sum_{\ell \in C_{\hat{j}}} \|P^{(\ell)}\|^2} \sqrt{8 \log\left(\frac{2d_x + d_u}{\delta}\right)},
$$
with probability at least $1-\delta$.\\

\noindent \textbf{Controlling $\|P^{(\ell)}\|$:} We now proceed to prove that $\|P^{(\ell)}\| \leq \frac{C_P}{\sigma^2_k \tau_k}$, for some constant $C_P$, with probability $1 - \delta_P$. To prove this bound, we first assume that $\{z^{(\ell)}_t\}_{t\in\mathcal I_1}$ is a strictly stationary, mean-zero Gaussian process in $\mathbb{R}^d$ with
covariance $\Sigma^{(\ell)}_{z} := \E\left[z^{(\ell)}_t z^{(\ell)\top}_t\right]\succeq \sigma_k^2 I_d$
and geometric $\beta$-mixing, i.e., $\beta(s)\le C_\beta \rho^s$ for some $C_\beta>0$ and  $\rho\in(0,1)$ (see \citep{yu1994rates}).

Let $\Sigma^{1/2}_t:=(\Sigma^{(\ell)}_{z_t})^{1/2}$. As we know that $\{z_t\}_t$ are Gaussian with covariance $\Sigma^{(\ell)}_{z_t}$,
we can write $z_t=\Sigma^{1/2} y_t$ where $\{y_t\}_t$ is a stationary Gaussian process in $\mathbb{R}^d$ with the same $\beta$-mixing rate. Hence, we can write
$$
S-\Sigma^{(\ell)}_{z_t}
= \Sigma^{1/2}_t\left(\frac{1}{\tau_{k,1}}\sum_{t} y_t y_t^\top - I_d\right)\Sigma^{1/2}_t,
\text{ and thus }
\|S-\Sigma^{(\ell)}_{z_t}\| \leq \|\Sigma^{(\ell)}_{z_t}\| \left\|\frac{1}{\tau_{k,1}}\sum_{t} y_t y_t^\top - I_d\right\|.
$$

As $\{z_{t}\}_t$ are dependent over time $t$, we use a blocking technique to construct ``independent blocks" of data in $\mathcal I_1$. Therefore, let us partition $\mathcal I_1$ into $q$ kept blocks of length $m_1$ separated by gaps of length $m_2$. In addition, we denote
$m=m_1+m_2$, $q=\lfloor \tau_{k,1}/m\rfloor$. For the kept blocks, we define the block averages
$\overline{Y}_j := \frac{1}{m_1}\sum_{t\in \mathcal B_j} (y_t y_t^\top - I_d)$. Therefore, there exist i.i.d. copies
$\{\overline{Y}'_j\}_{j=1}^q$ with the same marginals such that
$\mathbb{P}\{\exists j:\overline{Y}_j\neq \overline{Y}'_j\}\leq q \beta(m_2)\le q C_\beta \rho^{m_2}.$
This implies that, with probability at least $1-q C_\beta \rho^{m_2}$, the kept blocks behave as independent.

Conditional on the coupling event, we apply an effective-rank covariance deviation bound to
$\frac{1}{q}\sum_{j=1}^q \overline{Y}'_j$, where each $\overline{Y}'_j$ is an average of $m_1$ i.i.d. variables, where its sub-exponential norm is uniformly bounded, and its second moment has effective rank at most $r_{\mathrm{eff}}^{(\ell)} = \operatorname{tr}(\Sigma_t)/\|\Sigma_t\|$.
A standard Gaussian covariance concentration (e.g., matrix Bernstein \citep{vershynin2018high}) yields, for all $s>0$,
$$
\mathbb{P}\left\{\left\|\frac{1}{q}\sum_{j=1}^q \overline{Y}'_j\right\|\ \geq
C_Y\left(\sqrt{\frac{r_{\mathrm{eff}}^{(\ell)} + s}{qm_1}}\ +\ \frac{r_{\mathrm{eff}}^{(\ell)} + u}{qm_1}\right)\right\}
\leq 2 e^{-s}.
$$

As the contribution of discarded intervals of samples dilutes the kept sample size by at most a factor $\frac{m_1}{m_1+m_2}$,
we conclude that
$$
\left\|\frac{1}{\tau_{k,1}}\sum_{t} y_t y_t^\top - I_d\right\|
\leq C_Y\left(\sqrt{\frac{r_{\mathrm{eff}}^{(\ell)} + \log(2/\delta)}{\tau_{k,1}}} + \frac{r_{\mathrm{eff}}^{(\ell)} + \log(2/\delta)}{\tau_{k,1}}\right)$$
with probability at least $1-\delta - q C_\beta \rho^{m_2}$, for some constant $C_Y$.

By setting $m_2=\left\lceil \frac{\log(2 q C_\beta/\delta)}{|\log\rho|}\right\rceil$ we have that
$q C_\beta \rho^{m_2} \leq \delta/2$, and by taking $m_1=m_2$ so that $qm_1\asymp \tau_{k,1}$. Then the bound above holds with probability at least $1-\delta$ and becomes 
$$
\|S-\Sigma^{(\ell)}_{z_t}\| \leq  C_Y\|\Sigma^{(\ell)}_{z_t}\| \left(\sqrt{\frac{r_{\mathrm{eff}}^{(\ell)} + \log(2/\delta)}{\tau_{k,1}}} + \frac{r_{\mathrm{eff}}^{(\ell)} + \log(2/\delta)}{\tau_{k,1}}\right).
$$

Therefore, by setting $\tau_{k,1}\geq C_Y(r_{\mathrm{eff}}^{(\ell)}+\log(2/\delta))$ with a large enough $C_Y$, we have 
$$
\|P^{(\ell)}\| \leq \frac{2}{\tau_{k,1} \lambda_{\min}(\Sigma^{(\ell)}_{z})} \leq \frac{2}{\tau_{k,1} \sigma^2_k},
$$
as we know that $\Sigma^{(\ell)}_{z_t} \succeq \sigma^2_k I_d$, by persistency of excitation, and thus $\lambda_{\min}(\Sigma^{(\ell)}_{z}) \geq \sigma^2_k$.

Therefore, by using the above bound for $\|P^{(\ell)}\|$, we guarantee that the statistical error term in \eqref{eq: error per iteration} is upper bounded as follows:

\begin{align}\label{statistical error}
\left\| \frac{1}{|\mathcal C_{\hat{j}}|} \sum_{\ell\in \mathcal C_{\hat{j}}} W^{(\ell)}Z^{(\ell)\top}(Z^{(\ell)}Z^{(\ell)\top})^{-1}\right\|_F &\leq  \frac{ \sqrt{d_x}\sqrt{\tau_{k,1}} \sigma_k B_w B_z}{|C_{\hat{j}}|} \sqrt{|C_{\hat{j}}|} \frac{C_P}{\sigma^2_k \tau_k} \sqrt{8 \log\left(\frac{2d_x + d_u}{\delta}\right)}\notag\\
&=  \frac{ \sqrt{d_x} B_w B_z C_P}{\sigma_k} \sqrt{\frac{8}{|\mathcal C_{\hat{j}}| \tau_{k,1}} \log\left(\frac{2d_x + d_u}{\delta}\right)}.
\end{align}
where the extra $\sqrt{d_x}$ term is due to upper bounding the Frobenius norm with the spectral norm.\\

\noindent $\bullet$ \textbf{Misclassification Error:} Now let us control the misclassification error term in \eqref{eq: error per iteration}. For this purpose, we define the misclassification event for agent $\ell \in \mathcal C_{\hat{j}}$ as follows:
$$
\mathcal{E}_{\text{mis}}^{(\ell)} := \left\{ \exists j' \neq j : \left\| X^{(\ell)} - \widehat{\Theta}_j Z^{(\ell)} \right\|_F^2 > \left\| X^{(\ell)} - \widehat{\Theta}_{j'} Z^{(\ell)} \right\|_F^2 \right\},
$$
where we recall that $X^{(\ell)}$ and $Z^{(\ell)}$ denote the data matrices corresponding to system $\ell$, and $\widehat{\Theta}_{j'}$ is the estimated model for cluster $j' \neq j$. Substituting $X^{(\ell)} = \Theta_j Z^{(\ell)} + W^{(\ell)},$ where $\Theta_j = \Theta^{(\ell)}_\star$, we obtain
$$
\left\| (\Theta_j - \widehat{\Theta}_j) Z^{(\ell)} + W^{(\ell)} \right\|_F^2 > \left\| (\Theta_j - \widehat{\Theta}_{j'}) Z^{(\ell)} + W^{(\ell)} \right\|_F^2.
$$

Rearranging terms, this event is equivalent to
$$
D := \left\| \Delta_{j'} Z^{(\ell)} + W^{(\ell)} \right\|_F^2 - \left\| \Delta_j Z^{(\ell)} + W^{(\ell)} \right\|_F^2 < 0,
$$
where we define $\Delta_{j'} := \Theta_j - \widehat{\Theta}_{j'}$ and $\Delta_j := \Theta_j - \widehat{\Theta}_j$ as the estimation residuals for the incorrect and correct cluster models, respectively. We proceed to analyze this term by defining 
$$
D_1 := \left\| \Delta_{j'} Z^{(\ell)} \right\|_F^2 - \left\| \Delta_j Z^{(\ell)} \right\|_F^2, \text{ and } D_2 := 2 \mathrm{tr}\left( W^{(\ell)\top} (\Delta_{j'} - \Delta_j) Z^{(\ell)} \right),
$$
where $D = D_1 + D_2$. Then the probability of misclassification probability becomes
$$
\mathbb{P}(\mathcal{E}_{\text{mis}}^{(\ell)}) = \mathbb{P}(D < 0) = \mathbb{P}(D_1 < -D_2).
$$

As $D_1$ is a linear with respect to the assumed Gaussian noise $W^{(\ell)}$ matrix, it is sub-Gaussian with variance bounded as follows:
$$
\operatorname{Var}(D_2) \leq C_{D_2}\sigma^2_w  \mathbb{E}\left\| (\Delta_{j'} - \Delta_j) Z^{(\ell)} \right\|_F^2.
$$
for some sufficiently large constant $C_{D_2}$. Moreover, by bounding $\| \Delta_{j'} - \Delta_j \|_F \leq 2\Delta_{\max}$ with the maximum cluster separation, we have that 
$$
\operatorname{Var}(D_2) \leq 2C_{D_2}\sigma^2_w \Delta^2_{\max}  \mathbb{E}\left\| Z^{(\ell)} \right\|_F^2.
$$

In addition, we have that $\| Z^{(\ell)} \|_F^2 \leq C_{Z,\text{spec}} \tau_k $, where $C_{Z,\text{spec}}$ is a uniform upper bound for the spectral norm of $Z^{(\ell)}$. Therefore, we obtain

$$
\operatorname{Var}(D_2) \leq 2C_{D_2}\Delta^2_{\max} C_{Z,\text{spec}} \sigma^2_w\tau_k,
$$

To lower bound $D_1$, observe that by persistency of excitation, $\lambda_{\min}(\mathbb{E}[z^{(i)}_t z^{(i)\top}_t])\geq \sigma_k^2$, and Assumption \ref{assumption: initial model estimate}, we obtain
$$
D_1 \geq \tau_k \lambda_{\min}\left( \mathbb{E}[z_t z_t^\top] \right) \left( \| \Delta_{j'} \|_F^2 - \| \Delta_j \|_F^2 \right)
\geq \sigma_k^2 \tau_k \left( \Delta_{\min}^2 - \delta_{\text{stat}}^2 \right).
$$
where $\delta_{\text{stat}}$ denotes the statistical error from \eqref{statistical error}. Therefore, by setting $\tau_{k,1}$ such that
$$
\frac{8 d_x B_w B_z C_P}{\sigma^2_k |C_{\hat{j}}| \tau_{k,1}} \log\left(\frac{2d_x + d_u}{\delta}\right) \leq \frac{\Delta^2_{\min}}{2} \rightarrow   \tau_{k,1} \geq \frac{8 d_x B_w B_z C_P \Delta^2_{\min}}{2\sigma^2_k M_{\hat{j}}} \log\left(\frac{2d_x + d_u}{\delta}\right),
$$
which is guaranteed by making the initial epoch length sufficiently large. Therefore, we obtain
$$
D_1 \geq \frac{\sigma^2_k \tau_k \Delta^2_{\min}}{2} \geq \frac{\sigma_k \tau_k \Delta_{\min}}{2},
$$
where the second inequality follows from assuming that the clusters are sufficiently well separated such that $\Delta_{\min} \geq \frac{1}{\sigma_k}$. By substituting into a Gaussian tail bound yields the following misclassification probability:
$$
\mathbb{P}(\mathcal{E}_{\text{mis}}^{(\ell)}) \leq \exp\left( - \frac{D_1^2}{2 D_2} \right)
\leq \exp\left( - \tau_k \frac{ \sigma_k^2  \Delta_{\min}^2 }{ 8 C_{D_2} C_{Z,\text{spec}} \Delta^2_{\max}  \sigma^2_w } \right) := \exp\left( - C_2 \sigma_k^2 \tau_k  \right).
$$ 
with $C_2 = \frac{\Delta^2_{\min}}{ 8 C_{D_2} C_{Z,\text{spec}} \Delta^2_{\max}  \sigma^2_w }$.
Therefore, the misclassification rate is bounded by
$$
\frac{|\mathcal C^{c}_{j} \cap \mathcal C_{\hat{j}}|}{|\mathcal C_{\hat{j}}|} \leq \exp(-C_2 \sigma_k^2  \tau_k),
$$
which implies that
\begin{align}\label{misclassification error}
    \left\|\frac{1}{|\mathcal C_{\hat{j}}|} \sum_{\ell\in \mathcal C^c_{j} \cap \mathcal C_{\hat{j}}} (\Theta_j - \widehat{\Theta}_\star^{(i)})\right\|_F \leq \Delta_{\max} \exp\left(-C_2 \sigma_k^2 
    \frac{\tau_k}{2}\right),
\end{align}

Combining the high-probability bound on the statistical error \eqref{statistical error} with the misclassification error \eqref{misclassification error}, we obtain the following error bound at iteration $n$:

\begin{align*}
   \|\widehat{\Theta}^{(i)}_{n+1} - \Theta^{(i)}_{\star}\|_F  &\leq  (1-\eta)\|\widehat{\Theta}^{(i)}_{n} - \Theta^{(i)}_{\star}\|_F +   \frac{\eta \sqrt{d_x} B_w B_z C_P}{\sigma_k} \sqrt{\frac{8}{|\mathcal C_{\hat{j}}| \tau_{k,1}} \log\left(\frac{2d_x + d_u}{\delta}\right)}\\
   &+\eta \Delta_{\max} \exp\left(-C_2 \frac{\tau_k}{2}\right)\\
   &=   (1-\eta)\|\widehat{\Theta}^{(i)}_{n} - \Theta^{(i)}_{\star}\|_F +   \frac{\eta  C_{\mathrm{stat}}\sigma_w \sqrt{(d^2_x + d_x d_u) \log(1/\delta)}}{\sigma_k\sqrt{M_{\hat{j}} \tau_{k,1}}}\\
   &+\eta C_{\mathrm{mis,1}} \exp\left(-C_{\mathrm{mis,2}}\sigma_k^2 \tau_k\right),
\end{align*}
where $C_{\mathrm{stat}}$ depends on $C_w, \log(\tau_{k,2}), C_z, \kappa^{(\ell)}, C_P,$ and $\log(d)$. In addition, for clarity in our bounds, we use $C_{\mathrm{mis,1}} = \Delta_{\max}$ and $C_{\mathrm{mis,2}} = \frac{C_2}{2}$.

To conclude the proof we unroll the above expression over $N$ iterations and write
\begin{align*}
   \|\widehat{\Theta}^{(i)}_{N} - \Theta^{(i)}_{\star}\|_F  \leq   \rho^N\|\widehat{\Theta}^{(i)}_{0} - \Theta^{(i)}_{\star}\|_F +   \frac{C_{\mathrm{stat}}\sigma_w \sqrt{(d^2_x + d_x d_u)\log(1/\delta)}}{\sigma_k\sqrt{M_{\hat{j}} \tau_{k,1}}} + C_{\mathrm{mis,1}} \exp\left(-C_{\mathrm{mis,2}} \sigma_k^2 \tau_k\right),
\end{align*}
with $\rho = 1-\eta$. The proof is complete by setting $
N \geq \log\left(C_\alpha \Delta_{\text{min}}\tau^2_1\right)/\log(1/\rho),
$
and noting that for a sufficiently large initial epoch length, $M_{\hat{j}} \approx M_j$, which guarantees that the systems are correctly classified and the error bound decays with the true total number of systems inside the cluster. Finally, we also note that $\tau_{k,1} = \mathcal{O}(\tau_k)$. In addition, we omit the contraction term of order $\mathcal{O}(1/\tau_k^2)$ by setting the total number of iterations $N$ as before, since this term becomes negligible in the subsequent regret analysis.
\end{proof}

\subsection{System Identification with Intra-cluster Heterogeneity}

We now turn our attention to the setting in which the systems within each cluster are similar but not identical, exhibiting bounded heterogeneity characterized by $\epsilon_{\mathrm{het}}$.

\begin{proposition}[Estimation error under intra-cluster heterogeneity]
\label{prop: error bound heterogeneity} Fix a cluster $\mathcal C_j$ with $M_j$ systems.  For each system $i\in \mathcal C_j$, data generation follow \eqref{eq:data generation}. Suppose Assumption \ref{assumption: initial model estimate} holds and that the intra-cluster heterogeneity is bounded by $\epsilon_{\mathrm{het}}$.  Then, given a small probability of failure $\delta \in (0,1)$, it holds that
\begin{align}\label{eq: error with heterogeneity}
 \left\| \widehat{\Theta}^{(i)} - \Theta^{(i)}_\star \right\|_F^2 \leq \frac{C_{\mathrm{stat}}\sigma^2_w (d^2_x + d_x d_u)}{\sigma^2_k M_{j} \tau_{k}} +  C_{\mathrm{het}} \epsilon^2_{\mathrm{het}} + C_{\mathrm{mis,1}} \exp\left(-C_{\mathrm{mis,2}} \sigma_k^2 \tau_k\right), 
\end{align}
for any system $i \in \mathcal C_j$, with probability at least $1-\delta-\exp(-C_{\mathrm{mis,2}}\sigma_k^2\tau_k)$.
\end{proposition}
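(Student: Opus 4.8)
The plan is to mirror the argument of Lemma~\ref{lemma: homogeneous-regression-error} almost verbatim, tracking a single new term that arises because correctly-classified honest systems now carry true models that differ from the target system's model by at most $\epsilon_{\mathrm{het}}$ (Assumption~\ref{assumption: heterogeneity}). Concretely, I would again expand the aggregated update $\widehat{\Theta}^{(i)}_{n+1}=\widehat{\Theta}^{(i)}_n+\tfrac{\eta}{|\mathcal C_{\hat j}|}\sum_{\ell\in\mathcal C_{\hat j}}G_\ell(\widehat{\Theta}^{(i)}_n)$, split the sum over the correctly-classified set $\mathcal C_j\cap\mathcal C_{\hat j}$ and the misclassified set $\mathcal C_j^c\cap\mathcal C_{\hat j}$, and substitute the data model $X^{(\ell)}=\Theta^{(\ell)}_\star Z^{(\ell)}+W^{(\ell)}$ into each gradient.

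The key new step is in the correctly-classified block. Whereas in the homogeneous case $\Theta^{(\ell)}_\star=\Theta^{(i)}_\star$ and the gradient telescopes exactly to $\Theta^{(i)}_\star-\widehat{\Theta}^{(i)}_n$ plus statistical noise, here the same computation yields $\tfrac{1}{|\mathcal C_{\hat j}|}\sum_{\ell\in\mathcal C_j\cap\mathcal C_{\hat j}}G_\ell(\widehat{\Theta}^{(i)}_n)=\tfrac{|\mathcal C_j\cap\mathcal C_{\hat j}|}{|\mathcal C_{\hat j}|}\big(\Theta^{(i)}_\star-\widehat{\Theta}^{(i)}_n\big)+\tfrac{1}{|\mathcal C_{\hat j}|}\sum_{\ell\in\mathcal C_j\cap\mathcal C_{\hat j}}(\Theta^{(\ell)}_\star-\Theta^{(i)}_\star)+\text{(statistical noise)}$, so an extra \emph{heterogeneity bias} $\tfrac{1}{|\mathcal C_{\hat j}|}\sum_{\ell}(\Theta^{(\ell)}_\star-\Theta^{(i)}_\star)$ appears. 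Since $i,\ell\in\mathcal H_j$ are honest systems in the same cluster, Assumption~\ref{assumption: heterogeneity} bounds each summand by $\epsilon_{\mathrm{het}}$ in Frobenius norm, so the averaged bias is at most $\epsilon_{\mathrm{het}}$ by the triangle inequality. The statistical-noise term is controlled exactly as before via the sample split, the bound $\|P^{(\ell)}\|\le 2/(\sigma_k^2\tau_{k,1})$, and matrix Hoeffding (Lemma~\ref{lem:matrix-hoeffding}).

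The main obstacle is the misclassification analysis, since heterogeneity now contaminates the residuals that drive the clustering test. I would revisit the event $\mathcal E_{\mathrm{mis}}^{(\ell)}$ and the gap $D=D_1+D_2$: the correct-cluster residual $\Delta_j=\Theta^{(\ell)}_\star-\widehat{\Theta}_j$ is no longer purely statistical but inflated by $\epsilon_{\mathrm{het}}$, so the lower bound must be redone as $D_1\gtrsim\sigma_k^2\tau_k(\Delta_{\min}^2-(\delta_{\mathrm{stat}}+\epsilon_{\mathrm{het}})^2)$. This is where cluster separation must do work: provided $\Delta_{\min}$ dominates both the statistical error and $\epsilon_{\mathrm{het}}$ (a mild strengthening of the separation already implicit in $\Delta_{\min}\ge 1/\sigma_k$), the same Gaussian tail bound yields a misclassification rate $\exp(-C_{\mathrm{mis},2}\sigma_k^2\tau_k)$ unchanged up to constants, and the misclassification contribution remains at most $\Delta_{\max}\exp(-C_{\mathrm{mis},2}\sigma_k^2\tau_k/2)$.

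Finally, I would assemble the per-iteration contraction $\|\widehat{\Theta}^{(i)}_{n+1}-\Theta^{(i)}_\star\|_F\le(1-\eta)\|\widehat{\Theta}^{(i)}_n-\Theta^{(i)}_\star\|_F+\eta(\delta_{\mathrm{stat}}+\epsilon_{\mathrm{het}}+\delta_{\mathrm{mis}})$, unroll over $N$ iterations so that the transient $\rho^N\|\widehat{\Theta}^{(i)}_0-\Theta^{(i)}_\star\|_F$ is negligible under the chosen $N$, and square the steady-state bound. Because the heterogeneity contribution is a constant (non-decaying) additive term, the geometric sum $\eta\sum_n\rho^n=1$ passes it through unchanged, and squaring via $(a+b+c)^2\le 3(a^2+b^2+c^2)$ produces the stated $C_{\mathrm{het}}\epsilon_{\mathrm{het}}^2$ term alongside the unchanged statistical and misclassification terms, completing the bound in~\eqref{eq: error with heterogeneity}.
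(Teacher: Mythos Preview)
Your proposal is correct and follows essentially the same approach as the paper: both expand the aggregated update, isolate the new heterogeneity bias $\tfrac{1}{|\mathcal C_{\hat j}|}\sum_{\ell}(\Theta^{(\ell)}_\star-\Theta^{(i)}_\star)$ in the correctly-classified block, bound it by $\epsilon_{\mathrm{het}}$ via Assumption~\ref{assumption: heterogeneity}, and otherwise reuse the statistical and misclassification bounds from Lemma~\ref{lemma: homogeneous-regression-error} before unrolling the contraction. If anything, your explicit treatment of how heterogeneity perturbs the misclassification gap $D_1$ is more careful than the paper, which simply invokes the homogeneous bounds~\eqref{statistical error} and~\eqref{misclassification error} without revisiting that analysis.
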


\begin{proof}
The proof for this lemma follows from using the of Lemma \ref{lemma: homogeneous-regression-error} and writing 
\begin{align*}
    \widehat{\Theta}^{(i)}_{n+1} = \widehat{\Theta}^{(i)}_{n} +  \frac{\eta}{|\mathcal C_{\hat{j}}|} \sum_{\ell \in \mathcal C_j \cap \mathcal C_{\hat{j}} } G_{\ell}(\widehat{\Theta}^{(i)}_{n}) +  \frac{\eta}{|\mathcal C_{\hat{j}}|} \sum_{\ell \in \mathcal C^c_j \cap \mathcal C_{\hat{j}} } G_{\ell}(\widehat{\Theta}^{(i)}_{n}),
\end{align*}
where we can write for the statistical error term
\begin{align*}
   \frac{1}{|\mathcal C_{\hat{j}}|} \sum_{\ell\in \mathcal C_{j} \cap \mathcal C_{\hat{j}}} &G_l(\widehat{\Theta}^{(i)}_n)= \frac{1}{|\mathcal C_{\hat{j}}|} \sum_{\ell\in \mathcal C_{j} \cap \mathcal C_{\hat{j}}} (X^{(\ell)} - \widehat{\Theta}^{(i)}Z^{(\ell)})Z^{(\ell)\top} (Z^{(\ell)}Z^{(\ell)\top})^{-1}\\
    &=\frac{1}{|\mathcal C_{\hat{j}}|} \sum_{\ell\in \mathcal C_{j} \cap \mathcal C_{\hat{j}}} \left(\left(\textcolor{black}{\Theta^{(i)}_\star} + \Theta^{(\ell)}_\star - \textcolor{black}{\Theta^{(i)}_\star}\right) Z^{(\ell)} + W^{(\ell)} - \widehat{\Theta}^{(i)}Z^{(\ell)}\right)Z^{(\ell)\top} (Z^{(\ell)}Z^{(\ell)\top})^{-1}\\
    &=\frac{1}{|\mathcal C_{\hat{j}}|} \sum_{\ell\in \mathcal C_{j} \cap \mathcal C_{\hat{j}}} \left(\Theta^{(i)}_\star Z^{(\ell)} + W^{(\ell)} - \widehat{\Theta}^{(i)}Z^{(\ell)}\right)Z^{(\ell)\top} (Z^{(\ell)}Z^{(\ell)\top})^{-1}\\
    &+ \underbrace{\frac{\eta}{|\mathcal C_{\hat{j}}|} \sum_{\ell\in \mathcal C_{j} \cap \mathcal C_{\hat{j}}} \left(\Theta^{(\ell)}_\star - \Theta^{(i)}_\star\right)}_{\text{System heterogeneity}},
\end{align*}
where we note the presence of the system heterogeneity term which is further upper bounded using the bounded heterogeneity condition in \eqref{eq: intracluster heterogeneity}. The remaining of the proof follows exactly as in Lemma \ref{lemma: homogeneous-regression-error}, where the bounds for the statistical error \eqref{statistical error} and misclassification error \eqref{misclassification error} are leveraged to obtain the error bound presented in  \eqref{eq: error with heterogeneity}.
\end{proof}

\subsection{Adversarially Robust System Identification}

We now focus on the setting where adversarial systems may be present within the cluster of interest. To mitigate their effect in the multitask system identification process, we adopt an aggregation scheme that is $(f,\lambda)$-resilient, as defined in Definition~\ref{def: resilient aggregation}.

\begin{lemma}[Estimation error under $(f_j,\lambda)$-resilient aggregation]\label{lem:cluster-sysid-resilient}
Fix a cluster $\mathcal C_j$ with $M_j$ systems, among which at most $f_j<M_j/2$ are adversarial and $m_j:=M_j-f_j$ are honest.
For each honest system $i\in \mathcal C_j$, data generation follow \eqref{eq:data generation}. Suppose that the aggregation function $F\left(\left\{G_{\ell}(\widehat{\Theta}^{(i)})\right\}_{\ell \in \mathcal C_{j}}\right)$ is $(f,\lambda)$-resilient and that the intra-cluster heterogeneity is bounded by $\epsilon_{\mathrm{het}}$. Moreover, suppose Assumption \ref{assumption: initial model estimate} holds. Then, given a small probability $\delta \in (0,1)$, it holds that
\begin{align}\label{eq:cluster-bound}
\|\widehat{\Theta}^{(i)}_{N} - \Theta^{(i)}_\star\|^2_F &\leq  \frac{ C_{\mathrm{stat}}\sigma^2_w (d^2_x + d_x d_u)\log(1/\delta)}{\sigma^2_k \tau_k}\left( \frac{1}{m_{\hat{j}}} + \lambda^2 d_x\right) + C_{\mathrm{het}}(1+\lambda)^2 \epsilon^2_{\mathrm{het}}\notag \\
&+  C_{\mathrm{mis,1}} \exp(-C_{\mathrm{mis,2}} \sigma_k^2 \tau_k),
\end{align}
for any system $i \in \mathcal C_j$, with probability at least $1-\delta-\exp(-C_{\mathrm{mis,2}}\sigma_k^2 \tau_k)$. 
\end{lemma}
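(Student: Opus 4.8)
The plan is to mirror the proof of Proposition~\ref{prop: error bound heterogeneity} (the heterogeneous, adversary-free case) and then account for the extra perturbation introduced by replacing simple averaging with the $(f_j,\lambda)$-resilient rule $F$. First I would write the one-step update for an honest system $i$,
$$\widehat{\Theta}^{(i)}_{n+1}=\widehat{\Theta}^{(i)}_{n}+\eta\,F\!\left(\{G_\ell(\widehat{\Theta}^{(i)}_n)\}_{\ell\in\mathcal C_{\hat j}}\right),$$
and split the aggregate as $F=\bar G+(F-\bar G)$, where $\bar G=\tfrac{1}{m_{\hat j}}\sum_{\ell\in\mathcal H\cap\mathcal C_{\hat j}}G_\ell(\widehat{\Theta}^{(i)}_n)$ is the honest mean (with $m_{\hat j}\approx m_j$ once $\tau_1$ is large enough to force correct classification). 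By Definition~\ref{def: resilient aggregation}, $\|F-\bar G\|\le\lambda\,\max_{i',j'\in\mathcal H}\|G_{i'}-G_{j'}\|$, so $\bar G$ is analyzed exactly as before and the residual is controlled through the pairwise gradient dispersion.

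Second, for the honest mean I would reuse the decomposition of Lemma~\ref{lemma: homogeneous-regression-error} and Proposition~\ref{prop: error bound heterogeneity}: it produces the contraction $(1-\eta)\|\widehat\Theta^{(i)}_n-\Theta^{(i)}_\star\|_F$, a statistical term that, after matrix Hoeffding (Lemma~\ref{lem:matrix-hoeffding}) over the $m_{\hat j}$ honest regressors and the bound $\|P^{(\ell)}\|\le 2/(\sigma_k^2\tau_{k,1})$, scales as $\tfrac{C_{\mathrm{stat}}\sigma_w\sqrt{(d_x^2+d_xd_u)\log(1/\delta)}}{\sigma_k\sqrt{m_{\hat j}\tau_k}}$ (the $1/m_j$ contribution), a heterogeneity bias bounded by $C\,\epsilon_{\mathrm{het}}$ via Assumption~\ref{assumption: heterogeneity}, and the misclassification term $C_{\mathrm{mis,1}}e^{-C_{\mathrm{mis,2}}\sigma_k^2\tau_k}$.

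Third---and this is the crux---I would bound the dispersion $\max_{i',j'\in\mathcal H}\|G_{i'}-G_{j'}\|$. Writing $G_\ell=(\Theta^{(\ell)}_\star-\widehat\Theta^{(i)}_n)+W^{(\ell)}Z^{(\ell)\top}(Z^{(\ell)}Z^{(\ell)\top})^{-1}$, the common estimate $\widehat\Theta^{(i)}_n$ cancels in every difference, leaving
$$G_{i'}-G_{j'}=(\Theta^{(i')}_\star-\Theta^{(j')}_\star)+\big(W^{(i')}Z^{(i')\top}(Z^{(i')}Z^{(i')\top})^{-1}-W^{(j')}Z^{(j')\top}(Z^{(j')}Z^{(j')\top})^{-1}\big).$$
The deterministic part is at most $\epsilon_{\mathrm{het}}$ by Assumption~\ref{assumption: heterogeneity}; each stochastic term is controlled by a \emph{single-system} matrix-Hoeffding bound, made uniform over the $\binom{m_j}{2}$ honest pairs by a union bound (entering only logarithmically). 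Because a lone system enjoys no $1/\sqrt{m_j}$ averaging gain, this stochastic dispersion scales as $\tfrac{\sigma_w}{\sigma_k}\sqrt{(d_x^2+d_xd_u)\log(1/\delta)/\tau_k}$ in spectral norm. Converting the spectral-norm resilience bound on $\|F-\bar G\|$ to the Frobenius norm of the $d_x\times d'$ update contributes the extra $\sqrt{d_x}$ on the noise part; squaring then gives the $\lambda^2 d_x$ inflation of the statistical term, while the $\lambda$-scaled heterogeneous deviation combines with the honest-mean bias into $(1+\lambda)^2\epsilon_{\mathrm{het}}^2$.

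Finally, I would collect the one-step inequality
$$\|\widehat\Theta^{(i)}_{n+1}-\Theta^{(i)}_\star\|_F\le(1-\eta)\|\widehat\Theta^{(i)}_{n}-\Theta^{(i)}_\star\|_F+\eta\big(\text{stat}+\text{resilience}+\text{het}+\text{misclass}\big),$$
unroll it over $N$ iterations with contraction $\rho=1-\eta$, and choose $N\ge\log(C_\alpha\Delta_{\min}\tau_1^2)/\log(1/\rho)$ so the initial-error and $\mathcal O(1/\tau_k^2)$ remainders are negligible; squaring the resulting fixed-point bound yields~\eqref{eq:cluster-bound}. The main obstacle is the dimensional bookkeeping in the crux step: ensuring that the per-pair (un-averaged) concentration, the union bound over honest pairs, and the spectral-to-Frobenius conversion in the resilience guarantee together reproduce \emph{exactly} the factors $\lambda^2 d_x$ and $(1+\lambda)^2\epsilon_{\mathrm{het}}^2$, and verifying that including adversaries in $\mathcal C_{\hat j}$ never corrupts the honest mean as the reference point---which is precisely what the honest-majority condition $f_j<M_j/2$ and Definition~\ref{def: resilient aggregation} guarantee.
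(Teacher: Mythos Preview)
Your proposal is correct and follows essentially the same approach as the paper: split the resilient aggregate as $\bar G+(F-\bar G)$, analyze the honest mean $\bar G$ exactly as in the heterogeneous adversary-free case (contraction $+$ statistical $+$ heterogeneity $+$ misclassification), bound the resilience residual via Definition~\ref{def: resilient aggregation} and the pairwise honest-gradient dispersion $\|G_{i'}-G_{j'}\|$ (which decomposes into $\epsilon_{\mathrm{het}}$ plus single-system noise terms), pick up the $\sqrt{d_x}$ from the spectral-to-Frobenius conversion, and unroll. The paper does exactly this, with the same dimensional accounting leading to the $\lambda^2 d_x$ and $(1+\lambda)^2\epsilon_{\mathrm{het}}^2$ factors; your explicit mention of the union bound over honest pairs is in fact slightly more careful than the paper's own treatment.
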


\begin{proof}
We begin the proof by recalling that, during epoch $k$ (of length $\tau_k$), each system $i \in [M]$ uses its collected dataset $\mathcal{D}^{(i)} = \{X^{(i)}, Z^{(i)}\}$ to perform model identification through clustered system identification. As described in Algorithm~\ref{algorithm: clustered sysid}, the state–input data are utilized both for estimating the cluster identity and for learning the model parameters. In the latter step, in particular, we store the gradient preconditioned by the inverse of the empirical state–input covariance matrix:
$$
G_{\ell}(\widehat{\Theta}^{(i)}) \leftarrow (X^{(\ell)} - \widehat{\Theta}^{(i)}Z^{(\ell)})Z^{(\ell)\top} \textcolor{black}{(Z^{(\ell)}Z^{(\ell)\top})^{-1}}
$$
of each system that has the same cluster identity as the $i$ system. The model parameter os system $i$ is then updated as follows: 

\begin{align}\label{per iteration model update}
    \widehat{\Theta}^{(i)}_{n+1} = \widehat{\Theta}^{(i)}_{n} + \eta \textcolor{black}{F(G_{\ell}(\widehat{\Theta}^{(i)}_{n}), \ldots, G_{i}(\widehat{\Theta}^{(i)}_{n}),\ldots, G_{s}(\widehat{\Theta}^{(i)}_{n}))},
\end{align}
where $\mathcal C_{\hat{j}} = \{\ell, \dots, i, \ldots, s\}$ is the set of indices of all systems that belong to the identified cluster $\hat{j}$ of system $i \in [M]$. From \eqref{per iteration model update} we can write
\begin{align*}
    \widehat{\Theta}^{(i)}_{n+1} &= \widehat{\Theta}^{(i)}_{n} + \eta \left( \textcolor{black}{\bar{G}} + F\left(\left\{G_{\ell}(\widehat{\Theta}^{(i)}_n)\right\}_{\ell \in \mathcal C_{\hat{j}}}\right) - \textcolor{black}{\bar{G}} \right)\\
    &=\widehat{\Theta}^{(i)}_{n} + \eta \bar{G} +  \eta \left(F\left(\left\{G_{\ell}(\widehat{\Theta}^{(i)}_n)\right\}_{\ell \in \mathcal C_{\hat{j}}}\right) - \bar{G} \right)\\
    &=\widehat{\Theta}^{(i)}_{n} +  \underbrace{\frac{\eta}{|\mathcal H_{\hat{j}}|} \sum_{\ell\in \mathcal H_{j} \cap \mathcal H_{\hat{j}}} G_l(\widehat{\Theta}^{(i)}_n)}_{\text{Correct classified honest systems}} + \underbrace{\frac{\eta}{|\mathcal H_{\hat{j}}|} \sum_{\ell\in \mathcal H^c_{j} \cap \mathcal H_{\hat{j}}} G_l(\widehat{\Theta}^{(i)}_n)}_{\text{Misclassified honest systems}} + \eta \underbrace{\left(F\left(\left\{G_{\ell}(\widehat{\Theta}^{(i)}_n)\right\}_{\ell \in \mathcal C_{\hat{j}}}\right) - \bar{G} \right)}_{\text{resilient aggregation error}},
\end{align*}
where $\bar{G} = \frac{1}{|\mathcal H_{\hat{j}}|} \sum_{\ell\in \mathcal H_{\hat{j}}} G_l(\widehat{\Theta}^{(i)}_n)$ denotes the average of honest system's gradient updates. In addition, $|\mathcal H_{j} \cap \mathcal H_{\hat{j}}|$ and $\mathcal H^c_{j} \cap \mathcal H_{\hat{j}}$ denote the number of honest systems that are correct classified to cluster $\hat{j}$ and misclassified to cluster $\hat{j}$, respectively. Here, $ \mathcal H^c_{j}$ denotes the complement of the set of honest systems in cluster $j \in [N_c]$. We proceed our analysis, by controlling the estimation error in the correct classified honest systems. For this purpose, let us define $S_1 = $ misclassified honest systems term $+$ resilient aggregation error term, and write the following:
\begin{align*}
    \widehat{\Theta}^{(i)}_{n+1} &=\widehat{\Theta}^{(i)}_{n} +  \frac{\eta}{|\mathcal H_{\hat{j}}|} \sum_{\ell\in \mathcal H_{j} \cap \mathcal H_{\hat{j}}} G_l(\widehat{\Theta}^{(i)}_n) + S_1\\
    &= \widehat{\Theta}^{(i)}_{n} +  \frac{\eta}{|\mathcal H_{\hat{j}}|} \sum_{\ell\in \mathcal H_{j} \cap \mathcal H_{\hat{j}}} (X^{(\ell)} - \widehat{\Theta}^{(i)}Z^{(\ell)})Z^{(\ell)\top} (Z^{(\ell)}Z^{(\ell)\top})^{-1} + S_1\\
    &=\widehat{\Theta}^{(i)}_{n} +  \frac{\eta}{|\mathcal H_{\hat{j}}|} \sum_{\ell\in \mathcal H_{j} \cap \mathcal H_{\hat{j}}} (\Theta^{(\ell)}_\star Z^{(\ell)} + W^{(\ell)} - \widehat{\Theta}^{(i)}Z^{(\ell)})Z^{(\ell)\top} (Z^{(\ell)}Z^{(\ell)\top})^{-1} + S_1\\
    &=\widehat{\Theta}^{(i)}_{n} +  \frac{\eta}{|\mathcal H_{\hat{j}}|} \sum_{\ell\in \mathcal H_{j} \cap \mathcal H_{\hat{j}}} \left(\left(\textcolor{black}{\Theta^{(i)}_\star} + \Theta^{(\ell)}_\star - \textcolor{black}{\Theta^{(i)}_\star} \right)Z^{(\ell)} + W^{(\ell)} - \widehat{\Theta}^{(i)}Z^{(\ell)}\right)Z^{(\ell)\top} (Z^{(\ell)}Z^{(\ell)\top})^{-1} + S_1\\
    &= \widehat{\Theta}^{(i)}_{n} +  \eta \left(\Theta^{(i)}_\star - \widehat{\Theta}^{(i)}_n \right) + \underbrace{\frac{\eta}{|\mathcal H_{\hat{j}}|} \sum_{\ell\in \mathcal H_{j} \cap \mathcal H_{\hat{j}}} \left(\Theta^{(\ell)}_\star - \Theta^{(i)}_\star\right)}_{\text{System heterogeneity}}\\
    &+ \frac{\eta}{|\mathcal H_{\hat{j}}|} \sum_{\ell\in \mathcal H_{j} \cap \mathcal H_{\hat{j}}} W^{(\ell)}Z^{(\ell)\top}(Z^{(\ell)}Z^{(\ell)\top})^{-1} + S_1.
\end{align*}

We now proceed to control $S_1$. To do so, let us first denote by $S_2$ the resilient aggregation error term, and write the following:

\begin{align*}
    S_1 &= \frac{\eta}{|\mathcal H_{\hat{j}}|} \sum_{\ell\in \mathcal H^c_{j} \cap \mathcal H_{\hat{j}}} G_l(\widehat{\Theta}^{(i)}_n) + S_2\\
    &= \frac{\eta}{|\mathcal H_{\hat{j}}|} \sum_{\ell\in \mathcal H^c_{j} \cap \mathcal H_{\hat{j}}} (\Theta_j Z^{(\ell)} + W^{(\ell)} - \widehat{\Theta}^{(i)}Z^{(\ell)})Z^{(\ell)\top} (Z^{(\ell)}Z^{(\ell)\top})^{-1} + S_2\\
    &= \underbrace{\frac{\eta}{|\mathcal H_{\hat{j}}|} \sum_{\ell\in \mathcal H^c_{j} \cap \mathcal H_{\hat{j}}} (\Theta_j - \widehat{\Theta}^{(i)})}_{\text{Misclassification error}} + \frac{\eta}{|\mathcal H_{\hat{j}}|} \sum_{\ell\in \mathcal H^c_{j} \cap \mathcal H_{\hat{j}}}  W^{(\ell)}Z^{(\ell)\top} (Z^{(\ell)}Z^{(\ell)\top})^{-1} +   S_2
\end{align*}
where $\Theta_j$ denotes the common model that system $q$ would use if it were correctly classified to its true cluster, which is different from $\hat{j}$. Therefore, we obtain 
\begin{align*}
    \widehat{\Theta}^{(i)}_{n+1} &= \widehat{\Theta}^{(i)}_{n} +  \eta \left(\Theta^{(i)}_\star - \widehat{\Theta}^{(i)}_n \right) + \frac{\eta}{|\mathcal H_{\hat{j}}|} \sum_{\ell\in \mathcal H_{\hat{j}}} \left(\Theta^{(\ell)}_\star - \Theta^{(i)}_\star\right)+  \frac{\eta|\mathcal H^c_j\cap \mathcal H_{\hat{j}}|}{|\mathcal H_{\hat{j}}|}(\Theta_j - \widehat{\Theta}^{(i)})\\
    &+  \frac{\eta}{|\mathcal H_{\hat{j}}|} \sum_{\ell\in \mathcal H_{\hat{j}}} W^{(\ell)}Z^{(\ell)\top}(Z^{(\ell)}Z^{(\ell)\top})^{-1} + S_2
\end{align*}
where we can subtract $\Theta^{(i)}_\star$ from both sides to write
\begin{align*}
    \widehat{\Theta}^{(i)}_{n+1} - \Theta^{(i)}_\star &= \widehat{\Theta}^{(i)}_{n} - \Theta^{(i)}_\star +  \eta \left(\Theta^{(i)}_\star - \widehat{\Theta}^{(i)}_n \right) + \frac{\eta}{|\mathcal H_{\hat{j}}|} \sum_{\ell\in \mathcal H_{\hat{j}}} \left(\Theta^{(\ell)}_\star - \Theta^{(i)}_\star\right)+ \frac{\eta|\mathcal H^c_j\cap \mathcal H_{\hat{j}}|}{|\mathcal H_{\hat{j}}|}(\Theta_j - \widehat{\Theta}^{(i)})\\
    &+  \frac{\eta}{|\mathcal H_{\hat{j}}|} \sum_{\ell\in \mathcal H_{\hat{j}}} W^{(\ell)}Z^{(\ell)\top}(Z^{(\ell)}Z^{(\ell)\top})^{-1} + S_2
\end{align*}
which implies
\begin{align}\label{eq: model convergence step}
    \|\widehat{\Theta}^{(i)}_{n+1} - \Theta^{(i)}_\star\|_F &\leq \left(1 - \eta \right)\|\widehat{\Theta}^{(i)}_{n} - \Theta^{(i)}_\star\|_F + \eta \epsilon_{\mathrm{het}}+ \underbrace{\frac{\eta|\mathcal H^c_j\cap \mathcal H_{\hat{j}}|}{|\mathcal H_{\hat{j}}|}\|\Theta_j - \widehat{\Theta}_\star^{(i)}\|_F}_{\text{misclassification error } \eqref{misclassification error}}\notag \\
    &+  \underbrace{\frac{\eta}{|\mathcal H_{\hat{j}}|} \sum_{\ell\in \mathcal H_{\hat{j}}} \|W^{(\ell)}Z^{(\ell)\top}(Z^{(\ell)}Z^{(\ell)\top})^{-1}\|_F}_{\text{statistical error bound } \eqref{statistical error}} + S_2.
\end{align}

As discussed previously in the proof of Lemma \ref{lemma: homogeneous-regression-error}, we have that 
\begin{align}\label{statistical error and misclassification rate}
    &\frac{\eta}{|\mathcal H_{\hat{j}}|} \sum_{\ell\in \mathcal H_{\hat{j}}} \|W^{(\ell)}Z^{(\ell)\top}(Z^{(\ell)}Z^{(\ell)\top})^{-1}\|_F  \leq \frac{\eta C_{\mathrm{stat}}\sigma_w \sqrt{(d^2_x + d_x d_u)\log(1/\delta)}}{\sigma_k \sqrt{\tau_k m_{\hat{j}}}},\\
    & \frac{\eta|\mathcal H^c_j\cap \mathcal H_{\hat{j}}|}{|\mathcal H_{\hat{j}}|}\|\Theta_j - \widehat{\Theta}_\star^{(i)}\|_F \leq \eta C_{\mathrm{mis,1}} \exp(-C_{\mathrm{mis,2}}\sigma_k^2  \tau_k).
\end{align}

We now proceed to control the resilient aggregation error. For this purpose, we will use 
Definition \ref{def: resilient aggregation} to write
\begin{align*}
    \|S_2\|_F \leq \eta \left\|F\left(\left\{G_{\ell}(\widehat{\Theta}^{(i)}_n)\right\}_{\ell \in \mathcal C_{\hat{j}}}\right) - \bar{G}\right\|_F\leq \eta \lambda \sqrt{d_x}  \max _{s, p \in \mathcal H_{\hat{j}}}\left\|G_{s}(\widehat{\Theta}^{(i)}_n)-G_{p}(\widehat{\Theta}^{(i)}_n)\right\|_F.
\end{align*}
where the dimensionality factor $\sqrt{d_x}$ is due to upper bounding the Frobenius norm with the spectral norm of the corresponding matrix. We then proceed by examining the sets of system indices that are correctly and incorrectly assigned to cluster $\hat{j}$. 
\begin{align*}
    \|S_2\|_F \leq  \eta \lambda \sqrt{d_x} \left( \max _{s, p \in \mathcal H_{\hat{j}} \cap \mathcal H_{{j}}}\left\|G_{s}(\widehat{\Theta}^{(i)}_n)-G_{p}(\widehat{\Theta}^{(i)}_n)\right\|_F + \max _{s, p \in \mathcal H_{\hat{j}} \cap \mathcal H^c_{{j}}}\left\|G_{s}(\widehat{\Theta}^{(i)}_n)-G_{p}(\widehat{\Theta}^{(i)}_n)\right\|_F\right),
\end{align*}
where we note that 
\begin{align}\label{eq: gradient deviation}
 \left\|G_{s}(\widehat{\Theta}^{(i)}_n)-G_{p}(\widehat{\Theta}^{(i)}_n)\right\|_F &\leq \left\|\Theta^{(s)}_\star -  \Theta^{(p)}_\star\right\|_F +  \left\|W^{(s)}Z^{(s)\top}(Z^{(s)}Z^{(s)\top})^{-1}\right\|_F\notag \\
 &+ \left\|W^{(p)}Z^{(p)\top}(Z^{(p)}Z^{(p)\top})^{-1}\right\|_F\notag \\
&\leq \epsilon_{\mathrm{het}} + \frac{2 \sqrt{d_x} C_{\mathrm{stat}}\sigma_w \sqrt{(d^2_x + d_x d_u)\log(1/\delta)}}{\sigma_k \sqrt{\tau_k}}.
\end{align}

It is worth noting that the first term corresponds to the intra-cluster system heterogeneity. This term arises in the adversarial aggregation error, as the server cannot distinguish between deviations caused by adversarial attacks and those due to natural heterogeneity. It reflects a fundamental limit of adversarially robust learning under heterogeneous systems, consistent with the lower bound established in~\cite{karimireddy2020byzantine} for the Byzantine federated learning setting.

Therefore, by plugging \eqref{eq: gradient deviation} into \eqref{eq: model convergence step}, we obtain
\begin{align*}
    \|\widehat{\Theta}^{(i)}_{n+1} - \Theta^{(i)}_\star\|_F &\leq \left(1 - \eta \right)\|\widehat{\Theta}^{(i)}_{n} - \Theta^{(i)}_\star\|_F + \eta(1+2\lambda)\epsilon_{\mathrm{het}}+ \eta C_{\mathrm{mis,1}} \exp(-C_{\mathrm{mis,2}} \sigma_k^2 \tau_k)\\
    &+  \frac{\eta C_{\mathrm{stat}}\sigma_w \sqrt{(d^2_x + d_x d_u)\log(1/\delta)}}{\sigma_k \sqrt{\tau_k m_{\hat{j}}}} + \frac{4\eta \lambda \sqrt{d_x} C_{\mathrm{stat}}\sigma_w \sqrt{(d^2_x + d_x d_u)\log(1/\delta)}}{\sigma_k \sqrt{\tau_k}}.
\end{align*}

Therefore, by unrolling the above expression over $N$ iterations we obtain 
\begin{align*}
    \|\widehat{\Theta}^{(i)}_{N} - \Theta^{(i)}_\star\|_F &\leq \rho^N\|\widehat{\Theta}^{(i)}_{n} - \Theta^{(i)}_\star\|_F + (1+2\lambda) \epsilon_{\mathrm{het}}+ C_{\mathrm{mis,1}} \exp(-C_{\mathrm{mis,2}} \sigma_k^2 \tau_k)\\
    &+  \frac{ C_{\mathrm{stat}}\sigma_w \sqrt{(d^2_x + d_x d_u)\log(1/\delta)}}{\sigma_k \sqrt{\tau_k m_{\hat{j}}}} + \frac{\lambda \sqrt{d_x} C_{\mathrm{stat}}\sigma_w \sqrt{(d^2_x + d_x d_u)\log(1/\delta)}}{\sigma_k \sqrt{\tau_k}}.
\end{align*}
with $\rho = 1-\eta$. Note that any additional constant factors are absorbed by $C_{\mathrm{stat}}$, $C_{\mathrm{mis,1}}$ and $C_{\mathrm{mis,2}}$. Moreover, we can set the total number of iterations as $N \geq \log\left(C_\alpha \Delta_{\text{min}}\tau^2_1\right)/\log(1/\rho),$
since the initial model estimate is as in Assumption \ref{assumption: initial model estimate}. Therefore, the contraction term is of order $\mathcal{O}(1/\tau_k^2)$ and becomes negligible in the subsequent regret analysis. We then obtain the following estimation error bound:

\begin{align*}
    \|\widehat{\Theta}^{(i)}_{N} - \Theta^{(i)}_\star\|^2_F &\leq  \frac{ C_{\mathrm{stat}}\sigma^2_w (d^2_x + d_x d_u)\log(1/\delta)}{\sigma^2_k \tau_k}\left( \frac{1}{m_{\hat{j}}} + \lambda^2 d_x\right)\\
    &+ C_{\mathrm{het}}(1+\lambda)^2\epsilon^2_{\mathrm{het}}+  C_{\mathrm{mis,1}} \exp(-C_{\mathrm{mis,2}} \sigma_k^2\tau_k),
\end{align*}
for some universal constant $C_{\mathrm{het}}$. We complete the proof by noting that the misclassification rate decays exponentially with the epoch length. By choosing a sufficiently large initial epoch length $\tau_0$, and doubling the epoch length for subsequent epochs, we ensure that the misclassification rate becomes negligible, guaranteeing that the systems are correctly classified. Consequently, we have $\hat{j} \approx j$, and therefore $m_{\hat{j}} \approx m_{j}$.
\end{proof}

\section{Characterizing the Probability of the Success Event}  \label{appendix: probability of success}

We now demonstrate that the success event holds with high probability, that is, the event in which Algorithm~\ref{alg: CE with online clustering} does not abort and the estimation error bounds for Cases 1, 2, and 3 hold, as given in \eqref{eq: Eest1}, \eqref{eq: Eest2}, and \eqref{eq: Eest3}, respectively. Below, we establish the probability of success for Case 1 and note that the proof can be readily extended to Cases 2 and 3 by incorporating the additional heterogeneity bias and adversarial-aware terms that appear in those settings.

\begin{lemma}\label{lem: probability of success event}
Running Algorithm \ref{alg: CE with online clustering} with the arguments defined in Lemma \ref{lemma: homogeneous-regression-error}, the event $\mathcal{E}^{(j)}_{\mathsf{success}}$, for $j = 1,2,3$ holds with probability at least $1-T^{-2}$.
\end{lemma}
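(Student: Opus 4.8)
The plan is to control the two constituent events separately and combine them by a union bound, writing
$\mathbb{P}\big((\mathcal{E}^{(j)}_{\mathsf{success}})^c\big) \le \mathbb{P}(\mathcal{E}_{\mathrm{bound}}^c) + \mathbb{P}(\mathcal{E}_{\mathrm{est},j}^c)$
and showing each term is at most $\tfrac12 T^{-2}$. The estimation term is the easier one: I would invoke the relevant per-epoch guarantee---Lemma~\ref{lemma: homogeneous-regression-error} for $j=1$, Proposition~\ref{prop: error bound heterogeneity} for $j=2$, and Lemma~\ref{lem:cluster-sysid-resilient} for $j=3$---each of which certifies the corresponding bound in \eqref{eq: Eest1}--\eqref{eq: Eest3} with probability at least $1-\delta-\exp(-C_{\mathrm{mis,2}}\sigma_k^2\tau_k)$. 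Taking a union bound over the $k_{\mathrm{fin}}=\mathcal{O}(\log T)$ epochs and the $M$ systems, and choosing the per-epoch failure parameter $\delta=\widetilde{\mathcal{O}}(T^{-2})$ (which only inflates the $\log(1/\delta)$ factors already absorbed into $C_{\mathrm{stat}}$), reduces $\mathbb{P}(\mathcal{E}_{\mathrm{est},j}^c)$ to at most $\tfrac14 T^{-2}$. The misclassification terms $\exp(-C_{\mathrm{mis,2}}\sigma_k^2\tau_k)$ are handled by the lower bound on $\tau_1$ from Lemma~\ref{lemma: sysID homogeneous}, which forces $\sigma_k^2\tau_k \gtrsim \log T$ and hence each such term to be $\mathcal{O}(T^{-2})$.

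The harder event is $\mathcal{E}_{\mathrm{bound}}$, because the estimation guarantees above already presuppose bounded, persistently exciting data, whereas boundedness of the trajectory itself requires the deployed controller $\hat K_k$ to be near-stabilizing---an accuracy that is only delivered by the estimation step. I would break this circular dependence by an induction over the epoch index $k$, maintaining the invariant that $\hat K_k$ is \emph{good}, meaning $\|\hat K_k\|\le K_b$ and $\|P^{(i)}_{\hat K_k}\|\le 2\|P^{(i)}_{K_0^{(i)}}\|$. The base case holds because $\hat K_1=K_0^{(i)}$ is stabilizing by Assumption~\ref{assumption: initial model estimate} and satisfies $\|K_0^{(i)}\|\le K_b$ under the choice $K_b\ge\sqrt{P_0^\vee}$ in Assumption~\ref{eq: state_controller_norm}.

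For the inductive step, assume $\hat K_k$ is good. First, Lemma~\ref{lem: noise bound} bounds $\max_t\|[w^{(i)}_t;\,g^{(i)}_t]\|\le 4\sigma_w\sqrt{(d_x+d_u)\log(T/\delta)}$ with probability $1-\delta$. The good-controller invariant supplies exactly the hypotheses of Lemma~\ref{lem: state rollout bounds} ($\|P^{(i)}_{\hat K_k}\|\le 2\|P^{(i)}_{K_0}\|$, and $\tau_k\ge\tau_1$ large enough to meet the horizon condition), so I would conclude $\|x^{(i)}_s\|\le 40\|P^{(i)}_{K_0}\|^2\Psi^{(i)}_{B}\max_t\|[w_t;g_t]\|$ throughout epoch $k$; substituting the noise bound and the choice $x_b\ge 400(P_0^\vee)^2\Psi^\vee_{B}\sigma_w\sqrt{d'}$ from Assumption~\ref{eq: state_controller_norm} yields $\|x^{(i)}_s\|^2\le x_b^2\log T$, so the abort test on line~\ref{line: uniform bound} is never triggered. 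With bounded states the collected data is persistently exciting and the regressor bounds $B_w,B_z$ hold, so the estimation lemma applies and $\|\hat\Theta^{(i)}_k-\Theta^{(i)}_\star\|_F$ is small; finally, Lipschitz continuity of the LQR synthesis map $K(\cdot)$ under the DARE (the perturbation bound of \cite{simchowitz2020naive} already used in Lemma~\ref{lem:R1_cluster}) propagates this accuracy to $\|\hat K_{k+1}-K^{(i)}_\star\|$, re-establishing the good-controller invariant at epoch $k+1$.

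Summing all per-epoch failure probabilities over the $\mathcal{O}(\log T)$ epochs and again taking $\delta=\widetilde{\mathcal{O}}(T^{-2})$ gives $\mathbb{P}(\mathcal{E}_{\mathrm{bound}}^c)\le\tfrac14 T^{-2}$, and the two bounds combine to the claimed $1-T^{-2}$. The main obstacle is precisely this circular coupling between boundedness and estimation accuracy; the induction closes it only because the lower bound on $\tau_1$ in Lemma~\ref{lemma: sysID homogeneous} is calibrated so that a single epoch of data drives the estimation error below the radius on which $K(\cdot)$ remains stabilizing and norm-bounded. Cases $j=2,3$ require no new probabilistic argument: the heterogeneity bias $C_{\mathrm{het}}\epsilon^2_{\mathrm{het}}$ and the adversarial term $\lambda^2 d_x$ are deterministic additive contributions to the estimation bound, so the identical induction applies verbatim with $\mathcal{E}_{\mathrm{est},1}$ replaced by $\mathcal{E}_{\mathrm{est},2}$ or $\mathcal{E}_{\mathrm{est},3}$.
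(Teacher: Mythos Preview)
Your proposal is correct and follows essentially the same route as the paper: an induction over epochs that couples state boundedness (via Lemmas~\ref{lem: noise bound} and~\ref{lem: state rollout bounds}) with the per-epoch estimation guarantee, closed by a union bound over the $\mathcal{O}(\log T)$ epochs with $\delta$ set to order $T^{-2}$. The only cosmetic difference is that the paper states its inductive hypothesis directly in terms of the state bound and the estimation error (its equations~\eqref{eq:inductive_hypotesis_state_control_bounds}--\eqref{eq:inductive_hypotesis_estimation_error}), whereas you phrase the invariant as ``$\hat K_k$ is good'' ($\|\hat K_k\|\le K_b$ and $\|P^{(i)}_{\hat K_k}\|\le 2\|P^{(i)}_{K_0}\|$); the two are equivalent and your explicit invocation of the DARE perturbation bound to re-establish the controller invariant at $k+1$ is exactly what the paper uses implicitly when it asserts $\|P^{(i)}_{\hat K_{k+1}}\|\le 2P_0^\vee$.
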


\begin{proof}
We show that the success event $\mathcal{E}^{(1)}_{\text{success}}$ holds with probability at least $1 - T^{-2}$ by induction. In particular, we prove that for every epoch $k \in [k_{\text{fin}}]$, Algorithm~\ref{alg: CE with online clustering} does not abort and the estimation error remains bounded as specified in $\mathcal{E}_{\text{est},1}$. The base case corresponds to the first epoch.

\renewcommand{\thefootnote}{\arabic{footnote}}

\noindent \textbf{Base case:} For convenience we assume that $x^{(i)}_0 = 0$, for all systems $i \in [M]$\footnote{ This proof can also be extended to bounded non-zero initial states.}.Note that we can use Lemma \ref{lem: noise bound} to obtain
 \begin{align}\label{eq:noise_bound_mt}
         \max_{0\leq t \leq T-1} \norm{\bmat{w^{(i)}_t \\ g^{(i)}_t}} \leq 4\sigma_w\sqrt{3 (d_x + d_u) \log (2MT)}, 
    \end{align}
with probability $1 - \frac{1}{2}T^{-2}$, for all systems $i \in [M]$. As our initial state is zero, we can guarantee that
$$\norm{x^{(h)}_1} \leq 16 (P_0^{\vee})^{3/2} \Psi_B^{\vee} \max_{0\leq t \leq T-1} \norm{\bmat{w^{(i)}_t \\ g^{(i)}_t}},$$ 
where $P_0^{\vee} \triangleq \max _{i \in [M]}\left\|P_{K_0^{(i)}}^{(i)}\right\|$, $P_{\star}^{\wedge} \triangleq \min _{i \in [M]}\left\|P_{\star}^{(i)}\right\|$, and $\Psi_B^{\vee} \triangleq \max_{i \in [M]} \Psi^{(i)}_{B}$. Therefore, by setting the initial epoch length according to $\tau_1 \geq \frac{ c \log \frac{1}{P_{\star}^{\wedge}}}{\log\left(1- \frac{1}{ P_{\star}^{\wedge}}\right)}$, for a sufficiently large constant $c$, we can use Lemma \ref{lem: state rollout bounds} to obtain
\begin{align}\label{eq:state_norm_first_epoch}
        \norm{x^{(i)}_t} \leq 40 (P_0^{\vee})^2\Psi_B^{\vee} \max_{0\leq t \leq T-1} \norm{\bmat{w^{(i)}_t \\ g^{(i)}_t}}, \forall t = \{0,1,\ldots,\tau_1\},
\end{align}
where we use \eqref{eq:noise_bound_mt} to obtain
\begin{align*}
        \norm{x^{(i)}_t}^2 \leq 76800 (P_0^{\vee})^4 (\Psi_B^{\vee})^2 \sigma^2_w (d_x + d_u)\log (2MT), \;\ \forall t = \{0,1,\ldots,\tau_1\}
\end{align*}
with probability $1- \frac{1}{2}T^{-2}$, for all systems $ i \in [M]$, which implies that  $\norm{x^{(i)}_t}^2 \leq x^2_b\log T$ which satisfies the state norm requirement to not abort as described in Algorithm \ref{alg: CE with online clustering}. On the other hand, to verify that the controller norm requirement is satisfied we note that $\|K^{(i)}_0\|^2 \leq P_0^{\vee} \leq 2P_0^{\vee}$, which leads to $\|K^{(i)}_0\| \leq K_b$.  Therefore, we have that $\mathcal{E}_{\text{bound}}$ holds with probability $1-\frac{1}{2}T^{-2}$.

We now proceed to control the estimation error at the first epoch. Note that by making $\tau_{1}$ sufficiently large such that $\tau_{1} \geq c(1+\log(4T^2))$, for a sufficiently large constant $c$, we can guarantee the following upper bound for the model estimation:

\begin{align*}
        &\norm{\bmat{\hat A^{(i)}_1 & \hat B^{(i)}_1} - \bmat{A^{(i)}_\star & B^{(i)}_\star}}_F^2 \leq \frac{C_{\mathrm{stat}}\sigma^2_w (d^2_x + d_x d_u) \log(2T^2)}{\sigma^2_1 M_{j} \tau_{1}} + C_{\mathrm{mis,1}} \exp\left(-C_{\mathrm{mis,2}} \sigma_1^2\tau_1\right)
    \end{align*}
for any system $i \in C_{j}$ as discussed in Lemma \ref{lemma: homogeneous-regression-error}.
Our induction step follows by considering the following inductive hypothesis:
\begin{align}\label{eq:inductive_hypotesis_state_control_bounds}
   \text{\textbf{Bounded state}:} \norm{x^{(i)}_{\tau_k}} \leq 16 (P_0^{\vee})^{3/2} \Psi_B^{\vee} \max_{0\leq t \leq T-1} \norm{\bmat{w^{(i)}_t \\ g^{(i)}_t}},
\end{align}
and
\begin{align}\label{eq:inductive_hypotesis_estimation_error}
       &\text{\textbf{Estimation error}:}\notag\\
       &\norm{\bmat{\hat A^{(i)}_k & \hat B^{(i)}_k} - \bmat{A^{(i)}_\star & B^{(i)}_\star}}_F^2 \leq \frac{C_{\mathrm{stat}}\sigma^2_w (d^2_x + d_x d_u) \log(2T^2)}{\sigma^2_k M_{j} \tau_{k}} + C_{\mathrm{mis,1}} \exp\left(-C_{\mathrm{mis,2}} \sigma_k^2\tau_k\right),
\end{align}
where we demonstrate that the estimation error holds for the next epoch $k+1$ by balancing the exploration and exploitation as $\tau_k \sigma^2_k \geq \frac{1}{2} \tau_{k+1}\sigma^2_{k+1}$. In addition, we guarantee that the state and controller norm bounds are not violated by combining the fact that $\norm{P^{(i)}_{\hat K_{k+1}}}  \leq  2 (P_0^{\vee})$ and $\tau_k \geq \tau_1 \geq \frac{ c \log \frac{1}{P_{\star}^{\wedge}}}{\log\left(1- \frac{1}{ P_{\star}^{\wedge}}\right)}$, for a sufficiently large constant $c$, along with Lemma \ref{lem: state rollout bounds} to  obtain 
\begin{align}\label{eq:state_norm_first_epoch_1}
        \norm{x^{(i)}_t} \leq 40 (P_0^{\vee})^2(\Psi_B^{\vee}) \max_{1\leq t \leq T} \norm{\bmat{w^{(i)}_t \\ g^{(i)}_t}}, \;\ \forall t = \{\tau_k+1,\ldots,\tau_{k+1}\},
\end{align}
which guarantees that the state norm requirement is satisfied. For the controller norm, we have that $\norm{\hat K^{(i)}_{k+1}}^2 \leq \norm{P_{\hat K_{k+1}}} \leq 2 P_0^{\vee}$, and thus $\norm{\hat K^{(i)}_{k+1}} \leq K_b$.  We conclude the proof by noting that for epoch $k$ the bounded state and controller norm requirements hold with probability $1 - \frac{1}{2}T^{-2}$, then for epoch $k+1$ also holds with at least the same probability. Then, by union bounding for all the epochs, we have that $\mathcal{E}_{\mathsf{success}}$ holds under probability of at least $1 - T^{-2}.$
\end{proof}

\section{Regret Analysis} \label{appendix: regret analysis}

With the estimation error bounds and the probability of the success event in place, we are now ready to synthesize the regret bounds for the three settings under consideration: (i) no intra-cluster heterogeneity, (ii) intra-cluster heterogeneity, and (iii) the presence of adversarial systems. In this section, we leverage the adapted regret bounds from~\cite{lee2023nonasymptotic}, revisited in Lemmas~\ref{lem:R1_cluster}, \ref{lem:R2-bound}, and~\ref{lem:R3-bound}, and derive conditions on the exploration sequence $\{\sigma_k\}_k$ and the epoch length $\tau_k$ (i.e., exploitation phase) that ensure a reduction of the leading term in the regret proportional to the number of honest systems participating in the collaboration. For completeness, we restate here the key theorems presented in the main body of the paper.

\subsection{Regret with Intra-cluster Homogeneity}

We begin with the setting in which the models within each cluster $\mathcal{C}_j$, for any $j \in [N_c]$, are identical, and each cluster contains $M_j$ models.

\begin{theorem}[Intra-cluster homogeneity]
\label{thm:intra-cluster-regret-homo}
Fix a system $i \in [M]$ that belongs to a homogeneous cluster $\mathcal C_j$ of size $M_j$. Let the assumptions of Algorithm~\eqref{algorithm: clustered sysid} hold. Suppose the exploration sequence satisfies
$\sigma_k^2 = \frac{\sqrt{d^2_u d_x}}{(d_x^2 + d_xd_u)\sqrt{\tau_k M_j}}$ and the epoch length doubles, i.e., $\tau_{k}=2^{k-1}\tau_1$ with $T=\tau_{k_{\mathrm{fin}}}$.
Then the expected regret of system $i$ satisfies
\begin{align}\label{regret-no-het}
   \mathbb{E}\big[\mathcal{R}_T^{(i)}\big] \leq \Omega_1 \sqrt{\frac{d^2_u d_x T}{M_j}} + \Omega_2 (\log T)^2 + \Omega_3 T \exp\left(-\frac{C_{\mathrm{mis,2}} \sqrt{\tau_1}}{\sqrt{M_j}}\right), 
\end{align}
where
\begin{align*}
    \Omega_1 \;=&\; 142C_{\texttt{stat}} \|P^{(i)}_\star\|^{8}\sigma^2_w \log(1/\delta) + 2 d_u + 4 d_u \|P^{(i)}_\star\| \Psi^{(i)2}_{B},\\
\Omega_2 \;=&\; 3\,\max\{d_x,d_u\}\,\|P^{(i)}_{K^{(i)}_0}\|\,\Psi^{(i)2}_{B} + 2x_b^2\,\|P^{(i)}_{K_\star}\|,\\
\Omega_3 \;=&\; 142C_{\texttt{mis,1}}\|P^{(i)}_\star\|^{8}. 
\end{align*}
\end{theorem}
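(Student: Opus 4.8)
The plan is to start from the regret decomposition of Appendix~\ref{appendix: regret decomposition}, namely $\mathbb{E}[\mathcal{R}_T^{(i)}] = R_1^{(i)} + R_2^{(i)} + R_3^{(i)} - T\mathcal{J}^{(i)}(K_\star^{(i)})$, and bound each piece using the adapted bounds of Lemmas~\ref{lem:R1_cluster}, \ref{lem:R2-bound}, and~\ref{lem:R3-bound} together with the homogeneous estimation bound of Lemma~\ref{lemma: homogeneous-regression-error}. The first move is to discharge the failure term $R_2^{(i)}$: by Lemma~\ref{lem: probability of success event} the success event holds with probability at least $1 - T^{-2}$, so the indicator $\mathbf{1}(\mathcal{E}_{\mathrm{failure}}^{(1)})$ multiplies the first four summands of the Lemma~\ref{lem:R2-bound} bound down to $O(T^{-1}\mathrm{polylog}(T))$, which is negligible. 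The only nonnegligible piece of $R_2^{(i)}$ is the exploration cost $\sum_k 2(\tau_k-\tau_{k-1})d_u\sigma_k^2$, which I would carry and merge with the exploration term coming from $R_1^{(i)}$.

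The heart of the argument is bounding $R_1^{(i)}$. On the success event I substitute the epoch-$(k-1)$ estimation bound of Lemma~\ref{lemma: homogeneous-regression-error} into Lemma~\ref{lem:R1_cluster}, which splits each epoch's contribution into: (a) a statistical piece $\propto (\tau_k-\tau_{k-1})\|P_\star^{(i)}\|^{8}\,\tfrac{\sigma_w^2(d_x^2+d_xd_u)\log(1/\delta)}{\sigma_{k-1}^2 M_j\tau_{k-1}}$; (b) an exploration piece $\propto (\tau_k-\tau_{k-1})d_u\|P_\star^{(i)}\|\Psi^{(i)2}_{B}\sigma_k^2$; (c) a misclassification piece $\propto (\tau_k-\tau_{k-1})\|P_\star^{(i)}\|^{8}C_{\mathrm{mis,1}}e^{-C_{\mathrm{mis,2}}\sigma_{k-1}^2\tau_{k-1}}$; and (d) the per-epoch constant $2x_b^2\log T\,\|P_\star^{(i)}\|$ plus the optimal-cost term $(\tau_k-\tau_{k-1})J^{(i)}(K_\star^{(i)})$. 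Using the doubling schedule $\tau_k-\tau_{k-1}=\tau_{k-1}$, pieces (a) and (b) reduce per epoch to multiples of $\sigma_{k-1}^{-2}/M_j$ and of $\tau_{k-1}\sigma_k^2$; the prescribed choice $\sigma_k^2 = \tfrac{\sqrt{d_u^2 d_x}}{(d_x^2+d_xd_u)\sqrt{\tau_k M_j}}$ is precisely the one that equalizes these competing scalings, making each a constant multiple of $\sqrt{\tau_{k-1}/M_j}$. Since the epochs double, $\sum_k\sqrt{\tau_{k-1}}$ is a geometric series dominated by its last term, hence $O(\sqrt{\tau_{k_{\mathrm{fin}}}})=O(\sqrt{T})$, yielding the leading $\Omega_1\sqrt{d_u^2 d_x\,T/M_j}$ term.

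For the remaining groups: term (d) contributes $2x_b^2\|P_\star^{(i)}\|\log T$ per epoch, and because $k_{\mathrm{fin}}=O(\log T)$ this sums to $O((\log T)^2)$; the $J^{(i)}(K_\star^{(i)})$ terms telescope to $(T-\tau_1)J^{(i)}(K_\star^{(i)})$ and are cancelled, up to a nonpositive remainder, by the $-T\mathcal{J}^{(i)}(K_\star^{(i)})$ baseline. Adding $R_3^{(i)}$ from Lemma~\ref{lem:R3-bound}, which is $O(\tau_1)$ and thus $O((\log T)^2)$ under the $\tau_1\gtrsim\log(4T^2)$ requirement of Lemma~\ref{lemma: sysID homogeneous}, these form the $\Omega_2(\log T)^2$ term. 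For the misclassification piece (c), substituting the balanced $\sigma_{k-1}^2$ gives the exponent $C_{\mathrm{mis,2}}\sigma_{k-1}^2\tau_{k-1}\propto\sqrt{\tau_{k-1}/M_j}$, smallest at $k=2$; bounding each exponential by its value at $\tau_1$ and using $\sum_k(\tau_k-\tau_{k-1})\le T$ produces the $\Omega_3\,T\,e^{-C_{\mathrm{mis,2}}\sqrt{\tau_1/M_j}}$ term. Collecting the three groups gives the claim.

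The step I expect to be the main obstacle is the exploration–exploitation balance in $R_1^{(i)}$: one must check that the single choice of $\sigma_k$ simultaneously (i) equalizes the statistical and exploration scalings so that both geometric sums collapse to $O(\sqrt{T/M_j})$, and (ii) keeps the misclassification exponent $\sigma_{k-1}^2\tau_{k-1}$ \emph{increasing} in $k$, so that piece (c) remains exponentially small rather than growing polynomially. Carrying the dimensional factors ($d_x$, $d_u$, and the interplay between $\sqrt{d_u^2 d_x}$ and $d_x^2+d_xd_u$) cleanly through the geometric sums and confirming they land inside $\Omega_1,\Omega_2,\Omega_3$ without degrading the $\sqrt{T/M_j}$ rate is the most delicate bookkeeping; the rest is routine substitution into the three $R$-bounds.
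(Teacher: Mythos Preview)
Your proposal is correct and follows essentially the same route as the paper: the same regret decomposition via Lemmas~\ref{lem:R1_cluster}--\ref{lem:R3-bound}, substitution of the homogeneous estimation bound of Lemma~\ref{lemma: homogeneous-regression-error}, the same exploration choice to balance the statistical and exploration costs into the geometric sum $\sum_k(\tau_k-\tau_{k-1})/\sqrt{\tau_{k-1}}\le 2\sqrt{T}$, and the same handling of the misclassification and $(\log T)^2$ pieces. One minor slip: you argue $R_3^{(i)}=O(\tau_1)=O((\log T)^2)$ from the \emph{lower} bound $\tau_1\gtrsim\log(4T^2)$, which is the wrong direction of implication; what is needed is an upper bound on $\tau_1$, and the paper simply absorbs the $\tau_1$-dependent constant into the $(\log T)^2$ term without further comment.
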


All constants above are as defined in the lemmas referenced in the proof.

\begin{proof}
By Lemma~\ref{lem:R1_cluster}, Lemma~\ref{lem:R2-bound}, Lemma~\ref{lem:R3-bound}, and the regret decomposition in
\eqref{eq: regret decomposition appendix}, we have
\begin{align*}
\mathbb{E}\big[\mathcal{R}_T^{(i)}\big]
&\le \sum_{k=2}^{k_{\mathrm{fin}}}\Bigg(
\mathbb{E}\!\Big[\mathbf{1}(\mathcal{E}_{\mathrm{est},1}^{(k-1)})\; 142\,(\tau_k-\tau_{k-1})\,\|P^{(i)}_\star\|^{8}\,
\big\|\,[\widehat A^{(i)}_{k-1}\;\widehat B^{(i)}_{k-1}]-[A^{(i)}_\star\;B^{(i)}_\star]\big\|_F^2\Big]  \\
&+ (\tau_k-\tau_{k-1})\,J^{(i)}(K^{(i)}_\star)
+ 4(\tau_k-\tau_{k-1})\,d_u\,\|P^{(i)}_\star\|\sigma_k^2\Psi^{(i)2}_{B}
+ 2x_b^2\log T\,\|P^{(i)}_\star\|
\Bigg) \\
&+ T^{-1} \big( \|Q\| + 2K_b^2 \big) x_b^2 \log T + T^{-1} J(K_0) \\
&+ 24 \|P^{(i)}_{K^{(i)}_0}\| \Psi^{(i)2}_{B} (d_x + d_u)\, \sigma^2_w T^{-2}\log(3T)
+ 2 T^{-2} \|P^{(i)}_{K^{(i)}_0}\| \|\Theta^{(i)}_\star\|_F^2 K_b^2 x_b^2 \log T  \\
& + \sum_{k=1}^{k_{\mathrm{fin}}} 2(\tau_k - \tau_{k-1}) d_u \,\sigma_k^2
+ 3 \tau_1 \max\{ d_x , d_u \} \| P^{(i)}_{K^{(i)}_0} \| \Psi_{B_\star}^2 - T J(K^{(i)}_\star).
\end{align*}

In the homogeneous case and by \eqref{eq: Eest1}, for each epoch $k\ge 2$,
\begin{align*}
\mathbb{E}\left[\mathbf{1}(\mathcal{E}_{\mathrm{est},1}^{(k-1)})\,
\big\|\,[\widehat A^{(i)}_{k-1}\;\widehat B^{(i)}_{k-1}]-[A^{(i)}_\star\;B^{(i)}_\star]\,\big\|_F^2\right]
&\le \frac{C_{\mathrm{stat}}\sigma^2_w (d^2_x + d_x d_u) \log(1/\delta)}{\sigma^2_{k-1} M_{j} \tau_{k-1}}\\
&+ C_{\mathrm{mis,1}} \exp\left(-C_{\mathrm{mis,2}}\, \sigma_{k-1}^2 \,\tau_{k-1}\right).    
\end{align*}

Substituting this bound yields
\begin{align*}
\mathbb{E}\big[\mathcal{R}_T^{(i)}\big]
&\le \sum_{k=2}^{k_{\mathrm{fin}}}\Bigg(
142(\tau_k-\tau_{k-1})\|P^{(i)}_\star\|^{8}
\Big(\frac{C_{\mathrm{stat}}\sigma^2_w (d^2_x + d_x d_u) \log(1/\delta)}{\sigma^2_{k-1} M_{j} \tau_{k-1}} \\
&+ C_{\mathrm{mis,1}} \exp\left(-C_{\mathrm{mis,2}}\, \sigma_{k-1}^2 \tau_{k-1}\right)\Big) +  (\tau_k-\tau_{k-1})\,J(K^\star)
+ 4(\tau_k-\tau_{k-1})\,d_u\,\|P^{(i)}_\star\|\,\sigma_k^2\,\Psi^{(i)2}_{B} \\
& + 2x_b^2\log T\,\|P^{(i)}_\star\|
\Bigg) + T^{-1} \big( \|Q\| + 2K_b^2 \big) x_b^2 \log T + T^{-1} J^{(i)}(K^{(i)}_0) \\
&+ 24 \|P^{(i)}_{K^{(i)}_0}\| \Psi^{(i)2}_{B} (d_x + d_u)\, \sigma^2_w T^{-2}\log(3T)
+ 2 T^{-2} \|P^{(i)}_{K^{(i)}_0}\| \|\Theta^{(i)}_\star\|_F^2 K_b^2 x_b^2 \log T  \\
&+ \sum_{k=1}^{k_{\mathrm{fin}}} 2(\tau_k - \tau_{k-1}) d_u \,\sigma_k^2
+ 3 \tau_1 \max\{ d_x , d_u \} \| P^{(i)}_{K^{(i)}_0} \| \Psi^{(i)2}_{B^{(i)}_\star} - T J^{(i)}(K_\star).
\end{align*}

Assume $\sigma_k^2 = \frac{\sqrt{d^2_u d_x}}{(d_x^2 + d_xd_u)\sqrt{\tau_k M_j}}$ and $\tau_k = 2^{k-1}\tau_1$.
Since we double the epoch length over the epochs, i.e., $\tau_k$ increasing, we have that $e^{-C_{\mathrm{mis,2}}\sqrt{\tau_k}}\le e^{-C_{\mathrm{mis,2}}\sqrt{\tau_1}}$. Using these conditions, we obtain
\begin{align*}
\mathbb{E}\big[\mathcal{R}_T^{(i)}\big]
&\le \sum_{k=2}^{k_{\mathrm{fin}}}\Bigg(
142(\tau_k-\tau_{k-1})\|P^{(i)}_\star\|^{8}
\Big(\frac{C_{\mathrm{stat}}\sigma^2_w \sqrt{d^2_u d_x} \log(1/\delta)}{\sqrt{M_{j} \tau_{k-1}}} + C_{\mathrm{mis,1}} \exp \left(-\frac{C_{\mathrm{mis,2}} \sqrt{\tau_1}}{\sqrt{M_j}}\right)
\\
&+ (\tau_k-\tau_{k-1})J^{(i)}(K^{(i)}_\star)
+ 4(\tau_k-\tau_{k-1})d_u \sqrt{d^2_u d_x}\|P^{(i)}_\star\|\frac{\Psi^{(i)2}_{B}}{\sqrt{\tau_k M_j}}
+ 2x_b^2\log T\,\|P^{(i)}_\star\|
\Bigg) \\
&+ T^{-1} \big( \|Q\| + 2K_b^2 \big) x_b^2 \log T + T^{-1} J^{(i)}(K^{(i)}_0) \\
&+ 24 \|P^{(i)}_{K^{(i)}_0}\| \Psi^{(i)2}_{B} (d_x + d_u) \sigma^2_w T^{-2}\log(3T)
+ 2 T^{-2} \|P^{(i)}_{K^{(i)}_0}\| \|\Theta^{(i)}_\star\|_F^2 K_b^2 x_b^2 \log T  \\
&+ \sum_{k=1}^{k_{\mathrm{fin}}} 2(\tau_k - \tau_{k-1}) d_u \sqrt{d^2_u d_x} \,\frac{1}{\sqrt{\tau_k M_i}}
+ 3 \tau_1 \max\{ d_x , d_u \} \| P^{(i)}_{K^{(i)}_0} \| \Psi^{(i)2}_{B} - T J^{(i)}(K^{(i)}_\star).
\end{align*}

Note that $\sum_{k=2}^{k_{\mathrm{fin}}}(\tau_k-\tau_{k-1}) = \tau_{k_{\mathrm{fin}}}-\tau_1 = T-\tau_1$ and
$\sum_{k=2}^{k_{\mathrm{fin}}} 1 = k_{\mathrm{fin}}-1 \asymp \log T$.
Moreover, a standard estimate for the doubling epoch length gives
$$
\sum_{k=2}^{k_{\mathrm{fin}}} \frac{\tau_k-\tau_{k-1}}{\sqrt{\tau_{k-1}}}
\leq 2\big(\sqrt{T}-\sqrt{\tau_1}\big) \;\le\; 2\sqrt{T}.
$$

By using these facts and separating terms, we obtain
\begin{align*}
\mathbb{E}\big[\mathcal{R}_T^{(i)}\big]
&\le 142\|P^{(i)}_\star\|^{8} C_{\mathrm{stat}}\sigma^2_w \sqrt{d^2_u d_x} \log(1/\delta)
\sum_{k=2}^{k_{\mathrm{fin}}} \frac{\tau_k-\tau_{k-1}}{\sqrt{\tau_{k-1} M_j}}
+ 2 d_u \sqrt{d^2_u d_x} \sum_{k=1}^{k_{\mathrm{fin}}} \frac{\tau_k-\tau_{k-1}}{\sqrt{\tau_{k} M_j}} \\
&+ 4 d_u \sqrt{d^2_u d_x} \|P^{(i)}_\star\| \Psi^{(i)2}_{B}
\sum_{k=2}^{k_{\mathrm{fin}}} \frac{\tau_k-\tau_{k-1}}{\sqrt{\tau_k M_j}}
+ 2x_b^2\|P^{(i)}_\star\|(\log T)\sum_{k=2}^{k_{\mathrm{fin}}} 1 \\
&+ 142 \|P^{(i)}_\star\|^{8} C_{\mathrm{mis,1}} \exp\left(-\frac{C_{\mathrm{mis,2}} \,\sqrt{\tau_1}}{\sqrt{M_j}}\right) \sum_{k=2}^{k_{\mathrm{fin}}} (\tau_k-\tau_{k-1}) \\
&+ T^{-1} \big( \|Q\| + 2K_b^2 \big) x_b^2 \log T + T^{-1} J^{(i)}(K^{(i)}_0) \\
&+ 24 \|P^{(i)}_{K^{(i)}_0}\| \Psi^{(i)2}_{B} (d_x + d_u)\, \sigma^2_w T^{-2}\log(3T)
+ 2 T^{-2} \|P^{(i)}_{K^{(i)}_0}\| \|\Theta^{(i)}_\star\|_F^2 K_b^2 x_b^2 \log T \\
&+ \big(T-\tau_1\big)J^{(i)}(K^{(i)}_\star) - T J(K^{(i)}_\star)
+ 3 \tau_1 \max\{ d_x , d_u \} \| P^{(i)}_{K^{(i)}_0} \| \Psi^{(i)2}_{B}.
\end{align*}

Therefore, since $(T-\tau_1)J(K^\star) - T J(K^\star) = -\tau_1 J(K^\star) \le 0$, we may drop this (non-positive) term. By applying $\sum_k (\tau_k-\tau_{k-1})/\sqrt{\tau_k}\le 2\sqrt{T}$,
$\sum_{k=2}^{k_{\mathrm{fin}}}1\lesssim \log T$, and $\sum_{k=2}^{k_{\mathrm{fin}}} (\tau_k-\tau_{k-1})=T-\tau_1\le T$, we obtain

\begin{align*}
\mathbb{E}\big[\mathcal{R}_T^{(i)}\big]
&\leq 
\Big(142\|P^{(i)}_\star\|^{8} C_{\mathrm{stat}}\sigma^2_w (d^2_x + d_x d_u) \log(1/\delta) \sigma^2 + 2 d_u + 4 d_u \|P^{(i)}_{\star}\| \Psi_{B^\star}^2\Big)\sqrt{\frac{T}{M_i}}\\
&+\Big(3 \max\{ d_x , d_u \} \| P_{K_0} \| \Psi_{B_\star}^2 + 2x_b^2\|P^{(i)}_\star\|\Big)(\log T)^2 \\
&+ 142\,\|P^{(i)}_\star\|^{8} C_{\mathrm{mis,1}} \exp\left(-\frac{C_{\mathrm{mis,2}} \,\sqrt{\tau_1}}{\sqrt{M_j}}\right) T,
\end{align*}
which is precisely the claimed bound with the stated $\Omega_1$, $\Omega_2$,and $\Omega_3$. We also emphasize that by choosing the initial epoch length as $\tau_1 \ge M_j \log({C_{\mathrm{mis,1}}}T^2)/C^2_{\texttt{mis,2}}$, so that
$T e^{-C_{\mathrm{mis,2}}\tau_1}$ is negligible (i.e., $\mathcal{O}(1)$) and the $T^{-1}$ and $T^{-2}$ terms vanish as $T$ grows. Absorbing the remaining $\tau_1$-dependent constant into the $(\log T)^2$ term yields \eqref{regret-no-het} without the misclassification term. 
\end{proof}

\subsection{Regret with Intra-cluster Heterogeneity}

Next, we consider the setting where the models within the cluster $\mathcal{C}_j$ are similar but not identical, with bounded heterogeneity quantified by $\epsilon_{\mathrm{het}}$ (see Assumption~\ref{assumption: heterogeneity}).

\begin{corollary}[Intra-cluster heterogeneity]
\label{thm:intra-cluster-regret-het}
Fix a system $i$ that belongs to a heterogeneous cluster $\mathcal{C}_j$ of size $M_j$. 
Let the assumptions of Algorithm~\eqref{algorithm: clustered sysid} hold.  Suppose the exploration sequence satisfies
$\sigma_k^2 = \frac{\sqrt{d^2_u d_x}}{(d_x^2 + d_xd_u)\sqrt{\tau_k M_j}}$ and the epoch length doubles, i.e., $\tau_{k}=2^{k-1}\tau_1$ with $T=\tau_{k_{\mathrm{fin}}}$.
Then the expected regret of system $i$ satisfies
\begin{align}
\mathbb{E}\big[\mathcal{R}_T^{(i)}\big] \leq \Omega_1 \sqrt{\frac{d^2_u d_x T}{M_j}} + \Omega_2 \,(\log T)^2 + \Omega_3 T\exp\left(-\frac{C_{\mathrm{mis,2}} \,\sqrt{\tau_1}}{\sqrt{M_j}}\right) + \Omega_4 T \epsilon^2_{\mathrm{het}},    
\end{align}
with additional constant $\Omega_4 =  142C_\textit{het} \|P^{(i)}_\star\|^{8}$. 
\end{corollary}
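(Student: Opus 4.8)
The plan is to mirror the proof of Theorem~\ref{thm:intra-cluster-regret-homo} essentially verbatim, substituting the Case~1 estimation event $\mathcal{E}_{\mathrm{est},1}^{(k)}$ with the Case~2 event $\mathcal{E}_{\mathrm{est},2}^{(k)}$ from \eqref{eq: Eest2}, whose only difference is the additive heterogeneity floor $C_{\mathrm{het}}\epsilon_{\mathrm{het}}^2$ guaranteed by Proposition~\ref{prop: error bound heterogeneity}. First I would invoke the regret decomposition \eqref{eq: regret decomposition appendix} together with the three component bounds in Lemmas~\ref{lem:R1_cluster}, \ref{lem:R2-bound}, and~\ref{lem:R3-bound}. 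The contributions $R_2^{(i)}$ and $R_3^{(i)}$ are identical to the homogeneous case, since they do not depend on the estimation accuracy; in particular, the success event still holds with probability at least $1-T^{-2}$ because the argument of Lemma~\ref{lem: probability of success event} extends directly to Case~2 (the extra $C_{\mathrm{het}}\epsilon_{\mathrm{het}}^2$ term only relaxes the estimation bound, so the same concentration argument applies unchanged).

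The only place where heterogeneity enters is the leading term of $R_1^{(i)}$. Following Lemma~\ref{lem:R1_cluster}, this term is controlled by $\sum_{k=2}^{k_{\mathrm{fin}}} 142(\tau_k-\tau_{k-1})\|P_\star^{(i)}\|^8\,\mathbb{E}\big[\mathbf{1}(\mathcal{E}_{\mathrm{est},2}^{(k-1)})\|\widehat{\Theta}^{(i)}_{k-1}-\Theta^{(i)}_\star\|_F^2\big]$. Applying \eqref{eq: Eest2} in place of \eqref{eq: Eest1}, this expectation now carries the extra summand $C_{\mathrm{het}}\epsilon_{\mathrm{het}}^2$, producing an additional contribution $142\|P_\star^{(i)}\|^8 C_{\mathrm{het}}\epsilon_{\mathrm{het}}^2\sum_{k=2}^{k_{\mathrm{fin}}}(\tau_k-\tau_{k-1})$. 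Using the telescoping identity $\sum_{k=2}^{k_{\mathrm{fin}}}(\tau_k-\tau_{k-1})=T-\tau_1\le T$, this is exactly $\Omega_4 T\epsilon_{\mathrm{het}}^2$ with $\Omega_4=142C_{\mathrm{het}}\|P_\star^{(i)}\|^8$, matching the stated constant.

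Crucially, the heterogeneity floor is a constant independent of the exploration variance $\sigma_k^2$ and the epoch length $\tau_k$, so it never enters the exploration–exploitation balance. Hence the same schedule $\sigma_k^2=\tfrac{\sqrt{d_u^2 d_x}}{(d_x^2+d_xd_u)\sqrt{\tau_k M_j}}$ used in Theorem~\ref{thm:intra-cluster-regret-homo} still optimizes the variance term against the statistical estimation error, reproducing the dominant $\Omega_1\sqrt{d_u^2 d_x\,T/M_j}$ term, the $\Omega_2(\log T)^2$ term arising from the initial epoch and the state/controller norm bounds, and the exponentially decaying misclassification term $\Omega_3 T e^{-C_{\mathrm{mis},2}\sqrt{\tau_1/M_j}}$, all unchanged. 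Collecting these with the new $\Omega_4 T\epsilon_{\mathrm{het}}^2$ term yields the claimed bound.

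I do not anticipate a genuine obstacle, as the statement is a direct corollary of the homogeneous theorem. The only point requiring care is confirming that the additive, schedule-independent nature of $C_{\mathrm{het}}\epsilon_{\mathrm{het}}^2$ leaves the optimal choice of $\sigma_k^2$ — and therefore every other term in the decomposition — untouched, so that intra-cluster heterogeneity manifests purely as an additive $\mathcal{O}(\epsilon_{\mathrm{het}}^2 T)$ bias floor rather than coupling into the $\mathcal{O}(\sqrt{T/M_j})$ collaboration gain.
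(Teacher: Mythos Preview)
Your proposal is correct and follows essentially the same approach as the paper: both observe that the only change from the homogeneous case is the additive $C_{\mathrm{het}}\epsilon_{\mathrm{het}}^2$ term in the estimation bound \eqref{eq: Eest2}, propagate it through the $R_1^{(i)}$ sum via $\sum_{k=2}^{k_{\mathrm{fin}}}(\tau_k-\tau_{k-1})\le T$, and leave the exploration schedule and all remaining terms unchanged. Your additional remark that the heterogeneity floor, being schedule-independent, does not affect the choice of $\sigma_k^2$ is a nice clarification that the paper leaves implicit.
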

\begin{proof}
The proof follows directly from the derivations established in Theorem~\ref{thm:intra-cluster-regret-homo}. We recall that the expected regret can be decomposed as follows
\begin{align*}
\mathbb{E}\big[\mathcal{R}_T^{(i)}\big]
&\le \sum_{k=2}^{k_{\mathrm{fin}}} \Bigg(
\mathbb{E}\!\Big[
\mathbf{1}\big(\mathcal{E}_{\mathrm{est},2}^{(k-1)}\big)\;
142\,(\tau_k-\tau_{k-1})\,\|P^{(i)}_\star\|^{8}\,
\big\|\,[\widehat A^{(i)}_{k-1}\;\widehat B^{(i)}_{k-1}] - [A^{(i)}_\star\;B^{(i)}_\star]\,\big\|_F^2
\Big]  \\
&+ (\tau_k-\tau_{k-1})\,J^{(i)}(K^{(i)}_\star)
+ 4(\tau_k-\tau_{k-1})\,d_u\,\|P^{(i)}_\star\|\,\sigma_k^2\,\Psi^{(i)2}_{B}
+ 2x_b^2\log T\,\|P^{(i)}_\star\|
\Bigg) \\
& + T^{-1} \big( \|Q\| + 2K_b^2 \big) x_b^2 \log T + T^{-1} J^{(i)}(K^{(i)}_0) \\
&+ 24 \|P^{(i)}_{K^{(i)}_0}\| \Psi^{(i)2}_{B} (d_x + d_u) \sigma^2_w T^{-2}\log(3T)
+ 2 T^{-2} \|P^{(i)}_{K^{(i)}_0}\| \|\Theta^{(i)}_\star\|_F^2 K_b^2 x_b^2 \log T  \\
& + \sum_{k=1}^{k_{\mathrm{fin}}} 2(\tau_k - \tau_{k-1}) d_u \sigma_k^2
+ 3 \tau_1 \max\{ d_x , d_u \} \| P^{(i)}_{K^{(i)}_0} \| \Psi^{(i)2}_{B} - T J^{(i)}(K^{(i)}_\star).
\end{align*}

We note that in the heterogeneous setting, the only additional contribution arises from the deviation between the true system parameters within each cluster (i.e., intra-cluster heterogeneity).  This effect appears in the first term through the heterogeneity bound, which introduces an additional bias term proportional to $\epsilon_{\mathrm{het}}^2$.  
Consequently, the regret bound becomes
\begin{align*}
\mathbb{E}\big[\mathcal{R}_T^{(i)}\big]
&\le \Omega_1 \sqrt{\frac{d^2_u d_x T}{M_j}} + \Omega_2 \,(\log T)^2 + \Omega_3 T\exp\left(-\frac{C_{\mathrm{mis,2}} \,\sqrt{\tau_1}}{\sqrt{M_j}}\right)\\
&+ \sum_{k=2}^{k_{\mathrm{fin}}}
142\,(\tau_k-\tau_{k-1})\,\|P^{(i)}_\star\|^{8} C_{\mathrm{het}}\epsilon_{\mathrm{het}}^2.
\end{align*}

By evaluating the summation over epochs and applying the doubling schedule $\tau_k = 2^{k-1}\tau_1$, we obtain the final bound
\begin{align*}
\mathbb{E}\big[\mathcal{R}_T^{(i)}\big]
&\leq
\Omega_1 \sqrt{\frac{d^2_u d_x T}{M_j}} \;+\; \Omega_2 \,(\log T)^2 + \Omega_3 T\exp\left(-\frac{C_{\mathrm{mis,2}} \sqrt{\tau_1}}{\sqrt{M_j}}\right)
+
142\,\|P^{(i)}_\star\|^{8}\,C_{\mathrm{het}} T \epsilon_{\mathrm{het}}^2,
\end{align*}
which completes the proof.
\end{proof}

\subsection{Regret with Adversarial Systems}

We now consider the setting where a given cluster $\mathcal{C}_j$ consists of $f_j$ adversarial systems and $m_j = M_j - f_j$ honest systems. The effect of the adversarial systems on the regret bounds manifests through the resilient coefficient $\lambda$ of the underlying resilient aggregation rule (see Definition~\ref{def: resilient aggregation}). Well-known resilient aggregation rules such as the coordinate-wise trimmed mean (CWTM) and the mean-around-median (MeaMed) typically exhibit a resilient coefficient on the order of $\mathcal{O}(f_j/m_j)$. We refer the reader to \cite{farhadkhani2022byzantine} for more details on such resilient aggregators. In Section~\ref{sec:numerics}, we further illustrate our adversarially robust adaptive control approach using GM, whose resilience coefficient scales as $\mathcal{O}\!\left(1+\frac{m_j}{\sqrt{M_j - 2f_j}\,M_j}\right)$. In this case, the improvement in regret with respect to the number of honest systems within each cluster may be attenuated.

\begin{theorem}[Adversarial and intra-cluster heterogeneity]
\label{thm:intra-cluster-regret-adv-het}
Consider a system $i$ belonging to a heterogeneous cluster $C_j$ of size $M_j$ that may contain a fraction of adversarial systems.
Let the assumptions of Algorithm~\eqref{algorithm: clustered sysid} hold. Suppose that the exploration sequence satisfies $\sigma_k^2 = \frac{\sqrt{d^2_u d_x}}{d_x^2 + d_xd_u}\sqrt{\frac{1 + \lambda^2 d_x m_{{j}}}{\tau_k m_{{j}}}}$ and that the epoch length is double every epoch, i.e., 
$\tau_{k} = 2^{k-1}\tau_1$, with total horizon $T = \tau_{k_{\mathrm{fin}}}$. Then, the expected regret of system $i$ satisfies
\begin{align*}
\mathbb{E}\big[\mathcal{R}_T^{(i)}\big]
&\leq
\Omega_1 \sqrt{\frac{d^2_u d_x T (1 + \lambda^2 d_x m_{j})}{m_{j}}} +  \Omega_2 (\log T)^2 + \Omega_3 T\exp\left(-C_{\mathrm{mis,2}} \sqrt{\frac{1 + \lambda^2 d_x m_{{j}}\tau_{1}}{m_j}}\right)\\
&+\Omega_4 T (1+\lambda^2)\epsilon_{\mathrm{het}}^2.
\end{align*}

\end{theorem}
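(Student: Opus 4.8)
The plan is to follow the identical three-part decomposition used in Theorem~\ref{thm:intra-cluster-regret-homo}, namely the expression in \eqref{eq: regret decomposition appendix} into $R_1^{(i)}$, $R_2^{(i)}$, $R_3^{(i)}$ and the subtracted $T\mathcal{J}^{(i)}(K_\star^{(i)})$, and to bound each piece using Lemmas~\ref{lem:R1_cluster}, \ref{lem:R2-bound}, and~\ref{lem:R3-bound}. The only structural change relative to the homogeneous and heterogeneous cases is that the success event now carries the adversarial estimation error of Lemma~\ref{lem:cluster-sysid-resilient} (equivalently \eqref{eq: Eest3}). Concretely, in the bound for $R_1^{(i)}$ I would substitute, for each epoch $k \ge 2$,
$$
\mathbb{E}\!\left[\mathbf{1}\big(\mathcal{E}_{\mathrm{est},3}^{(k-1)}\big)\big\|\widehat\Theta^{(i)}_{k-1}-\Theta^{(i)}_\star\big\|_F^2\right] \le \frac{C_{\mathrm{stat}}\sigma_w^2(d_x^2+d_xd_u)\log(1/\delta)}{\sigma_{k-1}^2\tau_{k-1}}\Big(\tfrac{1}{m_j}+\lambda^2 d_x\Big) + C_{\mathrm{het}}(1+\lambda)^2\epsilon_{\mathrm{het}}^2 + C_{\mathrm{mis,1}}e^{-C_{\mathrm{mis,2}}\sigma_{k-1}^2\tau_{k-1}},
$$
in place of the homogeneous error, so that the three additive pieces propagate separately into the variance, heterogeneity, and misclassification contributions of the regret.

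The crux is the exploration choice and the epoch-balancing argument. Writing the statistical prefactor as $\tfrac{1}{m_j}+\lambda^2 d_x = \tfrac{1+\lambda^2 d_x m_j}{m_j}$, I would balance the per-epoch statistical error $\frac{1}{\sigma_k^2\tau_k}\cdot\frac{1+\lambda^2 d_x m_j}{m_j}$ against the exploration cost $\sigma_k^2$ by setting $\sigma_k^2 \asymp \sqrt{(1+\lambda^2 d_x m_j)/(\tau_k m_j)}$, which is exactly the prescribed schedule. Under this choice both terms collapse to the common value $\sqrt{(1+\lambda^2 d_x m_j)/(\tau_k m_j)}$, and summing over the doubling schedule with the standard estimate $\sum_k (\tau_k-\tau_{k-1})/\sqrt{\tau_k} \le 2\sqrt{T}$ yields the leading term $\Omega_1\sqrt{d_u^2 d_x\, T(1+\lambda^2 d_x m_j)/m_j}$.

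The remaining terms then follow mechanically as in the previous proofs. The heterogeneity bias $C_{\mathrm{het}}(1+\lambda)^2\epsilon_{\mathrm{het}}^2$, being constant in $k$, sums against $\sum_k(\tau_k-\tau_{k-1})=T-\tau_1\le T$ to produce the $\Omega_4 T(1+\lambda)^2\epsilon_{\mathrm{het}}^2$ term. The initial-epoch cost from $R_3^{(i)}$ together with the $2x_b^2\log T\,\|P^{(i)}_\star\|$ contributions accumulated over $\asymp \log T$ epochs gives the $\Omega_2(\log T)^2$ term, while the failure-event contribution $R_2^{(i)}$ is negligible because $\mathbb{P}(\mathcal{E}_{\mathrm{success}}^{(3)})\ge 1-T^{-2}$ by Lemma~\ref{lem: probability of success event}, leaving only vanishing $T^{-1},T^{-2}$ corrections. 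The misclassification piece sums to $\Omega_3 T\, e^{-C_{\mathrm{mis,2}}\sigma_k^2\tau_k}$; since $\sigma_k^2\tau_k \asymp \sqrt{(1+\lambda^2 d_x m_j)\tau_k/m_j}$ increases in $k$, bounding each summand by its $k=1$ value gives the stated exponent $-C_{\mathrm{mis,2}}\sqrt{(1+\lambda^2 d_x m_j)\tau_1/m_j}$.

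The main obstacle I expect is bookkeeping rather than conceptual: I must verify that a \emph{single} exploration schedule $\sigma_k$ simultaneously balances both the $\frac{1}{m_j\,\sigma_k^2\tau_k}$ term and the $\lambda$-scaled term $\frac{\lambda^2 d_x}{\sigma_k^2\tau_k}$. Folding them under the common factor $(1+\lambda^2 d_x m_j)/m_j$ is precisely what makes this work, but it requires confirming that the misclassification exponent does not degrade — that is, that choosing $\tau_1$ large enough (now scaling with $m_j/(1+\lambda^2 d_x m_j)$ rather than with $M_j$) still renders $T\,e^{-C_{\mathrm{mis,2}}\sigma_1^2\tau_1}$ of order $\mathcal{O}(1)$, so that it can be absorbed into the $(\log T)^2$ term as in Theorem~\ref{thm:intra-cluster-regret-homo}.
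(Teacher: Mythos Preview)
Your proposal is correct and follows essentially the same route as the paper's proof: the identical three-term decomposition via Lemmas~\ref{lem:R1_cluster}--\ref{lem:R3-bound}, substitution of the adversarial estimation bound \eqref{eq: Eest3}, the same algebraic folding $\tfrac{1}{m_j}+\lambda^2 d_x=\tfrac{1+\lambda^2 d_x m_j}{m_j}$ to balance exploration against estimation, and the doubling-epoch sums $\sum_k(\tau_k-\tau_{k-1})/\sqrt{\tau_{k-1}}\le 2\sqrt T$ and $k_{\mathrm{fin}}\asymp\log T$. Your treatment of the misclassification exponent (monotone in $k$, bounded at $k=1$) and the heterogeneity term also mirrors the paper exactly; note that your $(1+\lambda)^2$ factor matches what the paper's proof actually derives, while the $(1+\lambda^2)$ in the theorem statement appears to be a typo there.
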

\begin{proof}
We begin the proof by rewriting the regret decomposition as follows:
\begin{align*}
\mathbb{E}\big[\mathcal{R}_T^{(i)}\big]
&\le \sum_{k=2}^{k_{\mathrm{fin}}}\Bigg(
\mathbb{E}\!\Big[\mathbf{1}(\mathcal{E}_{\mathrm{est},3}^{(k-1)})\; 142\,(\tau_k-\tau_{k-1})\,\|P^{(i)}_\star\|^{8}\,
\big\|\,[\widehat A^{(i)}_{k-1}\;\widehat B^{(i)}_{k-1}]-[A^{(i)}_\star\;B^{(i)}_\star]\,\big\|_F^2\Big]  \\
&+ (\tau_k-\tau_{k-1})\,J^{(i)}(K^{(i)}_\star)
+ 4(\tau_k-\tau_{k-1}) d_u \|P^{(i)}_\star\| \sigma_k^2\,\Psi^{(i)2}_{B}
+ 2x_b^2\log T \|P^{(i)}_\star\|
\Bigg) \\
&+ T^{-1} \big( \|Q\| + 2K_b^2 \big) x_b^2 \log T + T^{-1} J^{(i)}(K^{(i)}_0) \\
&+ 24 \|P^{(i)}_{K^{(i)}_0}\| \Psi^{(i)2}_{B} (d_x + d_u) \sigma^2_w T^{-2}\log(3T)
+ 2 T^{-2} \|P^{(i)}_{K^{(i)}_0}\| \|\Theta^{(i)}_\star\|_F^2 K_b^2 x_b^2 \log T  \\
& + \sum_{k=1}^{k_{\mathrm{fin}}} 2(\tau_k - \tau_{k-1}) d_u \,\sigma_k^2
+ 3 \tau_1 \max\{ d_x , d_u \} \| P^{(i)}_{K^{(i)}_0} \| \Psi^{(i)2}_{B} - T J^{(i)}(K^{(i)}_\star),
\end{align*}
where for the adversarial setting we have
\begin{align*}
\mathbb{E}\left[\mathbf{1}(\mathcal{E}_{\mathrm{est},3}^{(k-1)})\,
\big\|\,[\widehat A^{(i)}_{k-1}\;\widehat B^{(i)}_{k-1}]-[A^{(i)}_\star\;B^{(i)}_\star]\,\big\|_F^2\right]
&\le \frac{ C_{\mathrm{stat}}\sigma^2_w (d^2_x + d_x d_u)\log(1/\delta)}{\sigma^2_{k-1} \tau_{k-1}}\left( \frac{1}{m_{{j}}} + \lambda^2 d_x\right)\\
&+ C_{\mathrm{het}}(1+\lambda)^2 \epsilon^2_{\mathrm{het}}+  C_{\mathrm{mis,1}} \exp(-C_{\mathrm{mis,2}} \sigma_{k-1}^2 \tau_{k-1}).    
\end{align*}

Applying this bound in the regret bound yields
\begin{align*}
\mathbb{E}\big[\mathcal{R}_T^{(i)}\big]
&\le \sum_{k=2}^{k_{\mathrm{fin}}}\Bigg(
142(\tau_k-\tau_{k-1})\|P^{(i)}_\star\|^{8}
\Big( \frac{ C_{\mathrm{stat}}\sigma^2_w (d^2_x + d_x d_u)\log(1/\delta)}{\sigma^2_{k-1} \tau_{k-1}}\left( \frac{1}{m_{{j}}} + \lambda^2 d_x\right) \\
&+ C_{\mathrm{het}}(1+\lambda)^2 \epsilon^2_{\mathrm{het}}+  C_{\mathrm{mis,1}} \exp(-C_{\mathrm{mis,2}} \sigma_{k-1}^2 \tau_{k-1})\Big)\Bigg) \\
&+ (\tau_k-\tau_{k-1})\,J(K^\star)
+ 4(\tau_k-\tau_{k-1})\,d_u\,\|P^{(i)}_\star\|\,\sigma_k^2\,\Psi^{(i)2}_{B}
+ 2x_b^2\log T\,\|P^{(i)}_\star\|
\Bigg) \\
&+ T^{-1} \big( \|Q\| + 2K_b^2 \big) x_b^2 \log T + T^{-1} J(K_0) \\
&+ 24 \|P^{(i)}_{K^{(i)}_0}\| \Psi^{(i)2}_{B} (d_x + d_u)\, \sigma^2_w T^{-2}\log(3T)
+ 2 T^{-2} \|P^{(i)}_{K^{(i)}_0}\| \|\Theta^{(i)}_\star\|_F^2 K_b^2 x_b^2 \log T  \\
&+ \sum_{k=1}^{k_{\mathrm{fin}}} 2(\tau_k - \tau_{k-1}) d_u \,\sigma_k^2
+ 3 \tau_1 \max\{ d_x , d_u \} \| P^{(i)}_{K^{(i)}_0} \| \Psi^{(i)}_{B^{(i)2}_\star} - T J^{(i)}(K^{(i)}_\star).
\end{align*}

Therefore by setting the exploration sequence as $\sigma_k^2 = \frac{\sqrt{d^2_u d_x}}{d_x^2 + d_xd_u}\sqrt{\frac{1 + \lambda^2 d_x m_{{j}}}{\tau_k m_{{j}}}}$, we obtain 

\begin{align*}
\mathbb{E}\big[\mathcal{R}_T^{(i)}\big]
&\le \sum_{k=2}^{k_{\mathrm{fin}}}\Bigg(
142(\tau_k-\tau_{k-1})\|P^{(i)}_\star\|^{8}
\Big( \frac{ C_{\mathrm{stat}}\sigma^2_w \sqrt{d^2_u d_x}\log(1/\delta)}{ \sqrt{\tau_{k-1}}}\sqrt{ \frac{1}{m_{{j}}} + \lambda^2 d_x} \\
&+ C_{\mathrm{het}}(1+\lambda)^2 \epsilon^2_{\mathrm{het}}+  C_{\mathrm{mis,1}} \exp\left(-C_{\mathrm{mis,2}} \sqrt{\frac{1 + \lambda^2 d_x m_{{j}}\tau_{1}}{m_j} } \right)\Big)\Bigg) \\
&+ (\tau_k-\tau_{k-1})J^{(i)}(K^{(i)}_\star)
+ 4(\tau_k-\tau_{k-1})d_u \sqrt{d^2_u d_x} \|P^{(i)}_\star\|\,\sqrt{\frac{1 + \lambda^2 d_x m_{{j}}}{\tau_{k-1} m_{{j}}}}\,\Psi^{(i)2}_{B}
\Bigg) \\
& + 2x_b^2\log T \|P^{(i)}_\star\| + T^{-1} \big( \|Q\| + 2K_b^2 \big) x_b^2 \log T + T^{-1} J^{(i)}(K^{(i)}_0) \\
&+ 24 \|P^{(i)}_{K^{(i)}_0}\| \Psi^{(i)2}_{B} (d_x + d_u)\, \sigma^2_w T^{-2}\log(3T)
+ 2 T^{-2} \|P^{(i)}_{K^{(i)}_0}\| \|\Theta^{(i)}_\star\|_F^2 K_b^2 x_b^2 \log T  \\
&+ \sum_{k=1}^{k_{\mathrm{fin}}} 2(\tau_k - \tau_{k-1}) d_u \sqrt{d^2_u d_x}\sqrt{\frac{1 + \lambda^2 d_x m_{{j}}}{\tau_{k-1} m_{{j}}}}
+ 3 \tau_1 \max\{ d_x , d_u \} \| P^{(i)}_{K^{(i)}_0} \| \Psi^{(i)2}_{B} - T J^{(i)}(K^{(i)}_\star),
\end{align*}
and by using the fact that $k_{\mathrm{fin}} \asymp \log T$ and $\sum_{k=2}^{k_{\mathrm{fin}}} \frac{\tau_k-\tau_{k-1}}{\sqrt{\tau_{k-1}}}
\leq 2\big(\sqrt{T}-\sqrt{\tau_1}\big) \leq 2\sqrt{T},$ we obtain
\begin{align*}
\mathbb{E}\big[\mathcal{R}_T^{(i)}\big]
&\leq
\Big(142\|P^{(i)}_\star\|^{8} C_{\mathrm{stat}}\sigma^2_w \log(1/\delta) + 2 d_u + 4 d_u \|P^{(i)}_\star\| \Psi^{(i)2}_{B}\Big) \sqrt{\frac{d^2_u d_x T (1 + \lambda^2 d_x m_{j})}{m_{j}}}\\
&+\Big(3 \max\{ d_x , d_u \} \| P^{(i)}_{K^{(i)}_0} \| \Psi^{(i)2}_{B} + 2x_b^2\|P^{(i)}_\star\|\Big)(\log T)^2 \\
&+ 142\|P^{(i)}_\star\|^{8} C_{\mathrm{mis,1}} \exp\left(-C_{\mathrm{mis,2}} \sqrt{\frac{1 + \lambda^2 d_x m_{{j}}\tau_{1}}{m_j}}\right) T\\
&+ 142\|P^{(i)}_\star\|^{8} C_{\mathrm{het}}(1+\lambda)^2 \epsilon_{\mathrm{het}}^2T, 
\end{align*}
which completes the proof. 
\end{proof}

\end{document}